\def\isarxiv{1} 

\ifdefined\isarxiv
\documentclass[11pt]{article}

\usepackage[numbers]{natbib}

\else
\documentclass{article}
\usepackage{iclr2024_conference}
\usepackage{times} 
\iclrfinalcopy
\fi

\usepackage{amsmath}
\ifdefined\isarxiv
\usepackage{amsthm} 
\else
\fi
\usepackage{amsmath}
\usepackage{amsthm}
\usepackage{amssymb}

\usepackage{algorithm}
\usepackage{subfig}
\usepackage{algpseudocode}
\usepackage{graphicx}
\usepackage{grffile}
\usepackage{wrapfig,epsfig}
\usepackage{url}
\usepackage{xcolor}
\usepackage{color}
\usepackage{epstopdf}

\usepackage{bbm}
\usepackage{dsfont}
\allowdisplaybreaks

\ifdefined\isarxiv

\usepackage{tikz}
\usepackage{hyperref}  
\hypersetup{colorlinks=true,citecolor=blue,linkcolor=blue} 
\usetikzlibrary{arrows}
\usepackage[margin=1in]{geometry}

\else

\usepackage{microtype}
\usepackage{hyperref}
\definecolor{mydarkblue}{rgb}{0,0.08,0.45}
\hypersetup{colorlinks=true, citecolor=mydarkblue,linkcolor=mydarkblue}

\fi

\ifdefined\isarxiv
\newtheorem{theorem}{Theorem}[section]
\newtheorem{lemma}[theorem]{Lemma}
\newtheorem{definition}[theorem]{Definition}
\newtheorem{notation}[theorem]{Notation}
\newtheorem{proposition}[theorem]{Proposition}
\newtheorem{corollary}[theorem]{Corollary}
\newtheorem{conjecture}[theorem]{Conjecture}
\newtheorem{assumption}[theorem]{Assumption}
\newtheorem{observation}[theorem]{Observation}
\newtheorem{fact}[theorem]{Fact}
\newtheorem{remark}[theorem]{Remark}
\newtheorem{claim}[theorem]{Claim}
\newtheorem{example}[theorem]{Example}
\newtheorem{problem}[theorem]{Problem}
\newtheorem{open}[theorem]{Open Problem}
\newtheorem{property}[theorem]{Property}
\newtheorem{hypothesis}[theorem]{Hypothesis}
\else
\newtheorem{theorem}{Theorem}[section]
\newtheorem{lemma}[theorem]{Lemma}
\newtheorem{definition}[theorem]{Definition}

\newtheorem{assumption}[theorem]{Assumption}

\newtheorem{fact}[theorem]{Fact}
\newtheorem{remark}[theorem]{Remark}
\newtheorem{claim}[theorem]{Claim}

\newtheorem{problem}[theorem]{Problem}

\fi

\newcommand{\wh}{\widehat}
\newcommand{\wt}{\widetilde}
\newcommand{\ov}{\overline}

\newcommand{\R}{\mathbb{R}}

\newcommand{\ose}{\mathrm{ose}}

\DeclareMathOperator*{\E}{{\mathbb{E}}}
\DeclareMathOperator*{\var}{\mathrm{Var}}

\DeclareMathOperator{\OPT}{OPT}

\DeclareMathOperator{\poly}{poly}

\DeclareMathOperator{\nnz}{nnz}

\DeclareMathOperator{\dist}{dist}

\DeclareMathOperator{\vect}{vec}

\makeatletter
\newcommand*{\RN}[1]{\expandafter\@slowromancap\romannumeral #1@}
\makeatother


\usepackage{lineno}

\title{Low Rank Matrix Completion via Robust Alternating Minimization in Nearly Linear Time}

\begin{document}

\ifdefined\isarxiv

\date{}

\author{
Yuzhou Gu\thanks{\texttt{yuzhougu@ias.edu}. Institute of Advanced Study.}
\and
Zhao Song\thanks{\texttt{zsong@adobe.com}. Adobe Research.}
\and 
Junze Yin\thanks{\texttt{junze@bu.edu}. Boston University}
\and 
Lichen Zhang\thanks{\texttt{lichenz@mit.edu}. Massachusetts Institute of Technology.}
}

\else
\author{Yuzhou Gu \\
Institute of Advanced Study \\
\texttt{yuzhougu@ias.edu}
\And
Zhao Song \\
Adobe Research \\
\texttt{zsong@adobe.com}
\And 
Junze Yin \\
Boston University \\
\texttt{junze@bu.edu}
\And 
Lichen Zhang \\
MIT CSAIL \\
\texttt{lichenz@csail.mit.edu}
}

\maketitle 
\fi

\ifdefined\isarxiv
\begin{titlepage}
  \maketitle
  \begin{abstract}
Given a matrix $M\in \mathbb{R}^{m\times n}$, the low rank matrix completion problem asks us to find a rank-$k$ approximation of $M$ as $UV^\top$ for $U\in \mathbb{R}^{m\times k}$ and $V\in \mathbb{R}^{n\times k}$ by only observing a few entries specified by a set of entries $\Omega\subseteq [m]\times [n]$. In particular, we examine an approach that is widely used in practice --- the alternating minimization framework. Jain, Netrapalli, and Sanghavi~\cite{jns13} showed that if $M$ has incoherent rows and columns, then alternating minimization provably recovers the matrix $M$ by observing a nearly linear in $n$ number of entries. While the sample complexity has been subsequently improved~\cite{glz17}, alternating minimization steps are required to be computed exactly. This hinders the development of more efficient algorithms and fails to depict the practical implementation of alternating minimization, where the updates are usually performed approximately in favor of efficiency.

In this paper, we take a major step towards a more efficient and error-robust alternating minimization framework. To this end, we develop an analytical framework for alternating minimization that can tolerate a moderate amount of errors caused by approximate updates. Moreover, our algorithm runs in time $\widetilde O(|\Omega| k)$, which is nearly linear in the time to verify the solution while preserving the sample complexity. This improves upon all prior known alternating minimization approaches which require $\widetilde O(|\Omega| k^2)$ time.

  \end{abstract}
  \thispagestyle{empty}
\end{titlepage}

{\hypersetup{linkcolor=black}
\tableofcontents
}
\newpage

\else

\begin{abstract}

\end{abstract}

\fi


\section{Introduction}

Matrix completion is a well-studied problem both in theory and practice of computer science and machine learning. Given a matrix $M\in \R^{m\times n}$, the matrix completion problem asks to recover the matrix by observing only a few (random) entries of $M$. This problem originally appears in the context of collaborative filtering~\cite{rs05} with the most notable example being the Netflix Challenge. Since then, it has found various applications in signal processing~\cite{llr95,sy05} and traffic engineering~\cite{gc12}. In theory, one requires additional structural assumptions on the matrix $M$ in order to obtain provable guarantees, with the most natural and practical assumption being the matrix $M$ is a rank-$k$ low rank matrix. In tasks such as collaborative filtering, the matrix one faces is often low rank~\cite{cr12}. Another popular assumption is the $M$ has incoherent rows and columns. This intuitively eliminates the degenerate case where $M$ only has a few large entries, making it imperative to observe them. Matrices in practice are usually incoherent as well~\cite{amt10,mt11,kmt12}.

Under these assumptions, a variety of algorithms based on convex relaxation have been derived~\cite{ct10,r11,cr12}. These algorithms relax the problem as a trace-norm minimization problem that can be solved with semidefinite programs (SDP). Unfortunately, solving SDP is inherently slow and rarely used in practice: even the state-of-the-art SDP solver would require $O(n^\omega)$ time\footnote{$\omega$ is the exponent of matrix multiplication. Currently, $\omega\approx 2.37$~\cite{dwz23}.} to solve the program~\cite{jklps20,hjstz21}. Heuristics such as alternating minimization and gradient descent are much preferred due to their efficiency.~\cite{jns13} first showed that under the standard low rank and incoherent settings, as long as we are allowed to observe $\wt O(\kappa^4 n k^{4.5}\log(1/\epsilon))$ entries\footnote{In this paper, we assume $n\geq m$ and we use $\wt O(\cdot)$ to hide polylogarithmic factors in $n$, $k$ and $1/\epsilon$.} (where $\kappa$ is the condition number of matrix $M$), then alternating minimization provably recovers the matrix $M$ up to $\epsilon$ error in terms of Frobenius norm. Subsequent works further unify different approaches by treating matrix completion as a non-convex optimization problem~\cite{zwl15} and improve the sample complexity~\cite{glz17}. A remarkable advantage of alternating minimization is its efficiency, as each iteration, the two alternating updates can be implemented in $O(|\Omega| k^2)$ time. 

Despite various advantages, the series of theoretical papers analyzing alternating minimization fail to capture a crucial component of many practical implementations as the updates are usually computed approximately to further speed up the process. In contrast, the analysis pioneered in~\cite{jns13} and followups~\cite{zwl15,glz17,llr16} crucially relies on the exact formulation of the updates, making it difficult to adapt and generalize to the approximate updates setting. On the other hand, developing an alternating minimization framework would also enable us to utilize faster, approximate solvers to implement updates that leads to \emph{theoretical speedup} of the algorithm.

In this paper, we take the first major step towards an error-robust\footnote{We note that in the context of matrix completion, robustness is often used with respect to noise --- i.e., we observe a noisy, higher rank matrix $M=M^*+N$, where $M^*$ is the low rank ground truth and $N$ is the higher rank noise matrix. In this paper, we focus on the setting where the ground truth is noiseless, and we use robust referring to robustness against the errors caused by computing updates approximately.} analysis for alternating minimization. Specifically, we show that the alternating updates can be formulated as two multiple response regressions and fast, high accuracy solvers can be utilized to solve them in nearly linear time. Coupling our robust alternating minimization framework with our faster multiple response regression solver, we derive an algorithm that solves the matrix completion problem in time $\wt O(|\Omega|k)$. We note that this runtime is nearly linear in terms of the time to \emph{verify the solution}: Given matrices $U\in \R^{m\times k}$ and $V\in \R^{n\times k}$, it takes $O(k)$ time to verify a single entry and a total of $O(|\Omega| k)$ time to verify all entries in $\Omega$. To the best of our knowledge, this is \emph{the first algorithm based on alternating minimization, that achieves a nearly linear time in verification.} We remark that a recent work~\cite{kllst23} achieves a similar runtime behavior of $\wt O(|\Omega|k)$: their algorithm has an improved sample complexity of $|\Omega|=\wt O(nk^{2+o(1)})$ and runtime of $\wt O(nk^{3+o(1)})$. However, their algorithm heavily relies on the tools developed in~\cite{kllst22} and mainly serves as a proof-of-concept for matrix completion rather than efficient practical implementation.\footnote{We note another work that also claims to obtain a runtime of $\wt O(|\Omega|k)$~\cite{cgj17}, but as pointed out in~\cite{kllst23} their runtime is only achievable if certain linear algebraic operations can be performed exactly in sublinear time, which is unknown to this day.} In contrast, our algorithm adopts an off the shelf regression solver that is fast and high accuracy with good practical performances~\cite{amt10,msm14}. 

We state an informal version of our main result as follows:

\begin{theorem}[Informal version of Theorem~\ref{thm:main}]
\label{thm:main_informal}
Let $M\in \R^{m\times n}$ be a matrix that is rank-$k$, has incoherent rows and columns and entries can be sampled independently. Then, there exists a randomzied algorithm that samples $|\Omega|=\wt O(n\poly(k))$ entries, and with high probability, outputs a pair of matrices $\wh U\in \R^{m\times k}, \wh V\in \R^{n\times k}$ such that
\begin{align*}
    \|M-\wh U\wh V^\top\|_F \leq & ~ \epsilon,
\end{align*}
and the algorithm runs in time $\wt O(|\Omega| k)$.
\end{theorem}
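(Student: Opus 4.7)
The plan is to follow the alternating minimization template pioneered by \cite{jns13}, but modify it in two essential ways: formulate each alternating step as a multiple response regression amenable to fast approximate solvers, and replace the exact-update convergence analysis with one that tolerates a controlled amount of per-iteration error. The proof would proceed through initialization, per-step regression solves, a robust contraction analysis, and a final concentration/iteration-count accounting.

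For initialization, I would apply a trimmed SVD to the sampled and rescaled matrix $\frac{mn}{|\Omega|} P_\Omega(M)$ to obtain $U^{(0)}$ whose column span is within a constant principal-angle distance of that of the true $U^*$. Standard matrix Bernstein-type concentration together with incoherence guarantees this holds with high probability given $|\Omega| = \wt O(n \poly(k))$ observations. For each subsequent alternating update, with $V^{(t)}$ fixed, the exact minimizer $U^{(t+1)}$ decomposes across rows into $m$ independent $k$-dimensional least squares problems; for row $i$ the design matrix is the submatrix of $V^{(t)}$ indexed by $\{j : (i,j) \in \Omega\}$. Solving each exactly via the normal equations costs $O(|\Omega_i| k^2)$ and sums to the prior $\wt O(|\Omega| k^2)$ bound. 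To shave a factor of $k$, I would invoke a high-accuracy preconditioned iterative regression solver (e.g., an SRHT or leverage-score sketch used to build a preconditioner, followed by a logarithmic number of conjugate gradient iterations), each call returning a solution within relative error $\eta$ in time $\wt O(|\Omega_i| k)$, summing to $\wt O(|\Omega| k)$ per alternating step.

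The main technical obstacle is the robust convergence analysis. Classical proofs show $\dist(U^{(t+1)}, U^*) \leq \tfrac{1}{2} \dist(V^{(t)}, V^*)$ when the update is computed exactly, and the argument crucially relies on algebraic cancellations arising from the normal equations being satisfied exactly. I would replace this with a perturbative bound of the form $\dist(U^{(t+1)}, U^*) \leq \tfrac{1}{2} \dist(V^{(t)}, V^*) + \eta \cdot \poly(k)$, by writing the approximate minimizer as the exact minimizer plus a regression-error term whose norm is controlled by the solver's guarantee, then propagating this perturbation through the principal-angle analysis using the well-conditionedness of the sampled Gram matrix $V^{(t)\top} \Omega_i V^{(t)}$ (itself controlled via incoherence and a Chernoff-type bound on sampled rows). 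A parallel lemma must show that the approximate iterate $U^{(t+1)}$ remains incoherent so that the same analysis applies at the next step; I would either enforce this by a lightweight clipping/projection post-processing, or show directly that the approximate solver inherits incoherence from an incoherent input factor.

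With these ingredients, I would choose the solver accuracy $\eta$ small enough (polynomially in $1/k$) that the composite update becomes a strict contraction $\dist \mapsto \tfrac{3}{4} \dist$, iterate $T = O(\log(\|M\|_F / \epsilon))$ times, and convert the final principal-angle bound into the claimed Frobenius bound $\|M - \wh U \wh V^\top\|_F \leq \epsilon$ using incoherence and the bound on $\|M\|_2$. The total runtime is $T \cdot \wt O(|\Omega| k) = \wt O(|\Omega| k)$ while the sample complexity stays $\wt O(n \poly(k))$, matching the statement. The anticipated bottleneck is not the regression solver itself (essentially off-the-shelf) but rather the error-tolerant contraction lemma, since tracking how approximation error interacts with the incoherence-dependent concentration bounds at each iteration is precisely the step that the classical exact-update analyses avoid.
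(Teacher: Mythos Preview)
Your proposal is correct and follows essentially the same route as the paper: trimmed-SVD initialization, per-row least-squares solved by a sketching-based preconditioned iterative solver in $\wt O(|\Omega_i| k)$ time, and an error-tolerant contraction argument that writes the approximate update as the exact update plus a controlled perturbation. You also correctly identify the bottleneck as propagating approximation error through the incoherence-dependent concentration bounds.

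On that bottleneck, the paper does not clip at each iteration (your first option) but instead takes your second option in a specific form you did not spell out: it maintains the induction on the \emph{exact} (imaginary) iterates $U_t,V_t$, proves those remain $\mu_2$-incoherent via the \cite{jns13} machinery, and then transfers both distance and incoherence to the approximate iterates $\wh U_t,\wh V_t$ through a perturbation lemma that reduces incoherence to statistical leverage scores, $\|u_i\|_2^2 = x_i^\top (X^\top X)^{-1} x_i$, and bounds the change in leverage scores under a spectral-norm perturbation using Wedin-type pseudoinverse stability. This is precisely the ``show directly'' branch you gestured at; the clipping branch would require an additional argument that per-step truncation does not spoil the distance contraction, which the paper avoids. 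One minor point: the solver accuracy $\eta$ must be set polynomially small in $n$ and $\kappa(M)$ (not just in $k$), but since the solver's dependence on $\eta$ is logarithmic this does not affect the $\wt O(|\Omega|k)$ runtime.
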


\paragraph{Roadmap.}

In Section~\ref{sec:related_work}, we introduce the related works which include matrix completion and applying sketching matrices to solve optimization problems. In Section~\ref{sec:prelim}, we present the notations, definitions, and lemmas that we use for the later sections. In Section~\ref{sec:technical_overview}, we present the significant findings and discuss the techniques utilized. In Section~\ref{sec:conclusion}, we provide some concluding remarks and future directions.
\section{Related Work}
\label{sec:related_work}

Low rank matrix completion is a fundamental problem in machine learning. Practical applications including recommender systems, with the most notable one being collaborative filtering~\cite{rs05} and the Netflix Challenge \cite{k09,kbv09}. It also has a wide range of applications in computer visions~\cite{cr12} and signal processing~\cite{llr95,sy05,clmw11}. For more comprehensive surveys, we refer readers to~\cite{j90,nks19}. Algorithms for matrix completion can be roughly divided into two categories --- convex relaxation and non-convex heuristics. Candes and Recht~\cite{cr12} prove the first sample complexity for low rank matrix completion under convex relaxation and the bound is subsequently improved by~\cite{ct10}. In practice, heuristic methods based on non-convex optimizations are often preferred due to their simplicity and efficiency. One notable approach is gradient descent~\cite{kom09,zwl15}. Alternating minimization is also a popular alternative that is widely applied in practice.~\cite{jns13} provides a provable guarantee on the convergence of alternating minimization when the matrix-to-recover is low rank and incoherent. Subsequently, there have been a long line of works analyzing the performance of non-convex heuristics under standard matrix completion setting and under the noise or corrupted-entries setting~\cite{gagg13,h14,hmrw14,hw14,jn15,sl16,glz17,cgj17}. Recently, the work~\cite{kllst23} provides an algorithm with improved sample complexity and runtime when stronger subspace regularity assumptions are imposed. 

In addition to the high accuracy randomized solver we utilize in this paper, sketching has a variety of applications in numerical linear algebra and machine learning, such as linear regression, low rank approximation~\cite{cw13,nn13}, matrix CUR decomposition \cite{bw14,swz17,swz19}, weighted low rank approximation \cite{rsw16}, entrywise $\ell_1$ norm low-rank approximation \cite{swz17,swz19_neurips_l1}, general norm column subset selection \cite{swz19_neurips_general}, tensor low-rank approximation \cite{swz19}, and tensor regression \cite{dssw18,djs+19,rsz22,swyz21}. 

A recent trend in the sketching community is to apply them as a means to reduce the iteration cost of optimization algorithms. Applications such as linear programming~\cite{sy21}, empirical risk minimization~\cite{lsz19,qszz23}, approximating the John ellipsoid~\cite{ccly19}, Frank-Wolfe algorithm~\cite{xss21} and linear MDPs~\cite{ssx23}.

\section{Preliminary}
\label{sec:prelim}
In this section, we provide necessary background on matrix completion and assumptions.

\paragraph{Notations.}

We use $[n]$ for a positive integer $n$ to denote the set $\{1,2,\ldots,n\}$. Given a vector $x$, we use $\|x\|_2$ to denote its $\ell_2$ norm. Given a matrix $A$, we use $\|A\|$ to denote its spectral norm and $\|A\|_F$ to denote its Frobenious norm. Given a matrix $A$, we use $A_{i,*}$ to denote its $i$-th row and $A_{*,j}$ to denote its $j$-th column. We use $\|A\|_0$ to denote the $\ell_0$ semi-norm of $A$ that measures the number of nonzero entries in $A$.

Given an orthonormal basis $A\in \R^{n\times k}$, we use $A_\perp\in \R^{n\times (n-k)}$ to denote an orthonormal basis to $A$'s orthogonal complement, such that $AA^\top+A_\perp A_\perp^\top=I_n$.

Given a rank-$k$, $m\times n$ real matrix $A$, we use $\kappa(A)$ to denote its condition number: $\kappa(A)=\frac{\sigma_1(A)}{\sigma_k(A)}$ where $\sigma_1(A),\ldots,\sigma_k(A)$ are singular values of $A$ sorted in magnitude. When $A$ is clear from context, we often use $\kappa$ directly.

\subsection{Angles and Distances Between Subspaces}
A key quantity that captures the closeness of the subspaces by two matrices is the distance, defined as follows:

\begin{definition}[Distance between general matrices]\label{def:dist}

    Given two matrices $\wh{U}, \wh{W} \in \R^{m \times k}$, the (principal angle) distance between the subspaces spanned by the columns of $\wh{U}$ and $\wh{W}$ is given by:
    \begin{align*}
        \dist(\wh{U}, \wh{W}) := \|U_\bot^\top W\| = \|W_\bot^\top U\|
    \end{align*}
    where $U$ and $W$ are orthonormal bases of the spaces $\mathrm{span}(\wh{U})$ and $\mathrm{span}(\wh{W})$, respectively. Similarly, $U_\bot$ and $W_\bot$ are any orthonormal bases of the orthogonal spaces $\mathrm{span}(U)^\bot$ and $\mathrm{span}(W)^\bot$, respectively.
\end{definition}

One can further quantify the geometry of two subspaces by their principal angles:

\begin{definition}\label{def:angle_and_distance}
Let $U \in \R^{n \times k}$ and $V \in \R^{n \times k}$.

For any matrix $U$, and for orthogonal matrix $V$ ($V^\top V = I_k$) we define
\begin{itemize}
    \item $\tan \theta(V,U) := \| V_{\bot}^\top U ( V^\top U )^{-1} \|$.
\end{itemize}
For orthogonal matrices $V$ and $U$ ($V^\top V = I_k$ and $U^\top U = I_k$), we define
\begin{itemize}
    \item $\cos \theta (V,U) := \sigma_{\min} (V^\top U)$. 
    \begin{itemize} 
        \item $\cos \theta (V,U) = 1/ \| (V^\top U)^{-1} \|$ and $\cos \theta (V,U) \leq 1$.
    \end{itemize}
    \item $\sin \theta(V,U) := \| (I - V V^\top) U \|$.
    \begin{itemize} 
        \item $\sin \theta(V,U) = \| V_{\bot} V_{\bot}^\top U \| = \| V_{\bot}^\top U \|$ and $\sin \theta(V,U) \leq 1$.
    \end{itemize}
\end{itemize}
\end{definition}

Standard trigonometry equalities and inequalities hold for above definitions, see, e.g.,~\cite{zk13}. 

\subsection{Background on Matrix Completion}

We introduce necessary definitions and assumptions for low rank matrix completion. Given a set of indices $\Omega$, we define a linear operator that selects the corresponding entries:

\begin{definition}\label{def:operator_P_Omega}
Let $S \in \R^{m \times n}$ denote a matrix. Let $\Omega \subset [m] \times [n]$.
We define operator $P_{\Omega}$ as follows:
    \begin{align*}
        P_\Omega(S)_{i,j} = 
        \begin{cases}
        S_{i,j}, & \mathrm{if} ~(i,j)\in\Omega; \\
        0, & \mathrm{otherwise}.
        \end{cases}
    \end{align*}
\end{definition}

A crucial assumption for low rank matrix completion is incoherence. We start by defining the notion below.

\begin{definition}\label{def:U_is_mu}
For an orthonormal basis $U \in \R^{m \times k}$, we say $U$ is $\mu$-incoherent, if
\begin{align*}
\| u_i \|_2 \leq \frac{\sqrt{k}}{\sqrt{m}} \cdot \mu.
\end{align*}
Here $u_i$ is the $i$-th row of $U$, for all $i \in [m]$. 
\end{definition}

We say a general rank-$k$ matrix is incoherent if both its row and column spans are incoherent:

\begin{definition}[$(\mu,k)$-incoherent]\label{def:mu_k_incoherent}
     A rank-$k$ matrix $M \in \R^{m \times n}$ is $(\mu, k)$-incoherent if: 
    \begin{itemize}
        \item $\|u_i \|_2 \leq \frac{\sqrt{k}}{\sqrt{m}} \cdot \mu$, $ \forall i\in [m]$
        \item $\|v_j \|_2 \leq \frac{\sqrt{k}}{\sqrt{n}} \cdot \mu$, $ \forall j\in [n]$
    \end{itemize} 
     where $M = U\Sigma V^\top$ is the SVD of M and $u_i$, $v_j$ denote the $i$-th row of $U$ and the $j$-th row of $V$ respectively. 
\end{definition}

We are now in the position to state the set of assumptions for low rank matrix completion problem. 

\begin{assumption}\label{assump:matrix_completion}
Let $M\in \R^{m\times n}$ and we assume $M$ satisfies the following property:
\begin{itemize}
    \item $M$ is a rank-$k$ matrix;
    \item $M$ is $(\mu, k)$-incoherent;
    \item Each entry of $M$ is sampled independently with equal probability. In other words, the set $\Omega \subseteq [m]\times [n]$ is formed by sampling each entry independently according to a certain distribution. 
\end{itemize}
\end{assumption}

We note that the above assumptions are all standard in the literature.

Now we are ready to state the low rank matrix completion problem.

\begin{problem}
Let $M\in \R^{m\times n}$ satisfy Assumption~\ref{assump:matrix_completion}. The low rank matrix completion problem asks to find a pair of matrices $U\in \R^{m\times k}$ and $V\in \R^{n\times k}$ such that
\begin{align*}
    \|M-UV^\top\|_F \leq & ~ \epsilon.
\end{align*}
\end{problem}
\section{Technique Overview}
\label{sec:technical_overview}

Before diving into the details of our algorithm and analysis, let us review the alternating minimization proposed in~\cite{jns13}. The algorithm can be described pretty succinctly: given the sampled indices $\Omega$, the algorithm starts by partitioning $\Omega$ into $2T+1$ groups, denoted by $\Omega_0,\ldots,\Omega_{2T}$. The algorithm first computes a top-$k$ SVD of the matrix $\frac{1}{p}P_{\Omega_0}(M)$ where $p$ is the sampling probability. It then proceeds to trim all rows of the left singular matrix $U$ with large row norms (this step is often referred to as clipping). It then optimizes the factors $U$ and $V$ alternatively. At iteration $t$, the algorithm first fixes $U$ and solves for $V$ with a multiple response regression using entries in $\Omega_{t+1}$, then it fixes the newly obtained $V$ and solves for $U$ with entries in $\Omega_{T+t+1}$. After iterating over all groups in the partition, the algorithm outputs the final factors $U$ and $V$ as its output. Here, we use independent samples across different iterations to ensure that each iteration is independent of priors (in terms of randomness used). If one would like to drop the uniform and independent samples across iterations (see, e.g.,~\cite{lly17}), the convergence has only been shown under additional assumptions and in terms of critical points to certain non-convex program.

From a runtime complexity perspective, the most expensive steps are solving two multiple response regressions per iteration, which would take a total of $O(|\Omega| k^2)$ time. On the other hand, if the sum of the size of all the regressions across $T$ iterations is only $O(|\Omega| k)$, inspiring us to consider efficient, randomized solvers that can solve the regression in time nearly linear in the instance size.\footnote{We note that our solver would incur an extra total cost of $\wt O(nk^3)$. Under the standard low rank and incoherent assumptions, the state-of-the-art non-convex optimization-based approach~\cite{kllst23} would require $\wt O(nk^{2+o(1)})$ samples and the $\wt O(nk^3)$ time is subsumed by the $\wt O(|\Omega|k)$ portion. This term is hence ignored.} This in turn produces an algorithm with a nearly linear runtime in terms of verifying all entries in $\Omega$. 

Two problems remain to complete our algorithm: 1).\ What kind of multiple response regression solvers will run in nearly linear time and produce high accuracy solutions and 2).\ How to prove the convergence of the alternating minimization in the presence of the errors caused by approximate solvers. While the second question seems rather intuitive that the~\cite{jns13} algorithm should tolerate a moderate amount of errors, the analysis becomes highly nontrivial when one examines the argument in~\cite{jns13} and followups, as all of them crucially rely on the fact that the factors $U$ and $V$ are solved exactly and hence admit closed form. This motivates us to develop an analytical framework for alternating minimization, that admits approximate solutions for each iteration. 

Our argument is inductive in nature. Let $M=U^*(V^*)^\top$ be optimal rank-$k$ factors, we build up our induction by assuming the approximate regression solutions $\wh U_{t-1}, \wh V_{t-1}$ are close to $U^*, V^*$ and proving for $\wh U_t, \wh V_t$. The loss of structure due to solving the regressions approximately forces us to find alternatives to bound the closeness, as the argument of~\cite{jns13} relies on the exact formulation of the closed-form solution. Our strategy is to instead consider a sequence of imaginary, exact solutions to the multiple response regressions and prove that our approximate solutions are close to those exact solutions. In particular, we show that if $\wh U_t, \wh V_t$ are close to their exact counterpart in the $\ell_2$ row norm sense, then it suffices to prove additional structural conditions on $U_t$ and $V_t$ to progress the induction. Specifically, we show that as long as $U_{t-1}$ is incoherent and $\dist(\wh U_{t-1},U^*)\leq \frac{1}{4}\dist(\wh V_{t-1},V^*)$, we will have $\dist(\wh V_t, V^*)\leq \frac{1}{4}\dist(\wh U_{t-1}, U^*)$ and the \emph{exact solution at time $t$}, $V_t$ is incoherent. We can then progress the induction by using the small distance between $\wh V_t$ and $V^*$ in conjunction with the incoherence of $V_t$ to advance on $\dist(\wh U_t, U^*)$ and the incoherence of $U_t$. As the gap decreases geometrically, this leads to an overall $\log(1/\epsilon)$ iterations to converge, as desired.

It remains to develop a perturbation theory for alternating minimization and our key innovation is a perturbation argument on the incoherence bound under small row norm perturbation. Let $A_i$ denote the $i$-th row of $A$. We show that if two matrices $A$ and $B$ satisfying $\|(A-B)_i\|_2\leq \epsilon$ and $\epsilon\leq \sigma_{\min}(A)$, then the change of incoherence between $A$ and $B$ is also very small. Our main strategy involves reducing incoherence to statistical leverage score, a critical numerical quantity that measures the importance of rows~\cite{ss11}. The problem then reduces to showing that if two matrices have similar rows, their leverage scores are also similar. By utilizing the leverage score formulation and a perturbation result from~\cite{w73}, we prove that it is indeed the case they have similar leverage scores. This concludes the perturbation argument on matrix incoherence.

However, this alone is insufficient for our inductive argument to progress, as we must also quantify the proximity between the approximate and exact solutions. To accomplish this, we employ a novel approach based on the condition number of the matrices. Specifically, given the exact matrix $A$ and the approximate matrix $B$, we demonstrate that the distance between $A$ and $B$ is relatively small as long as $\kappa(B)$ is bounded by $\kappa(A)$. To obtain a good bound on $\kappa(B)$, we need to solve the approximate regression to high precision (inverse polynomial proportionally to $\kappa(A)$). We achieve this objective through a sketching-based preconditioner, which is the first problem we would like to address.

Sketching-based preconditioning is a widely used and lightweight approach to solving regression problems with high accuracy. The fundamental concept involves reducing the dimensionality of the regression problem by utilizing a sketching matrix and creating an approximate preconditioner based on the sketch. This preconditioner accelerates the convergence of an iterative solver, such as the conjugate gradient or Generalized Minimal RESidual method (GMRES), employed in the reduced system. With this preconditioner, we obtain an $O(\log(1/\epsilon))$ convergence for an $\epsilon$-approximate solution. Beyond its theoretical soundness and efficiency, these preconditioners also exhibit good performances when used in practice~\cite{amt10,msm14}.

An important feature of our robust alternating minimization framework coupled with the fast regression solvers is that we preserve the sample complexity of~\cite{jns13}: by picking the sampling probability $p=O(\kappa^4  \mu^2 k^{4.5} \log n\log(1/\epsilon)/m)$, we are required to sample $|\Omega|=O(\kappa^4  \mu^2  nk^{4.5} \log n\log(1/\epsilon))$ entries. By a clever thresholding approach,~\cite{glz17} further improves the sample complexity to $O(\kappa^2 \mu^2nk^4 \log(1/\epsilon))$. As their algorithm is alternating minimization in nature, our robust framework and nearly linear time regression solvers can be adapted and hence obtain an improved sample complexity and runtime by a factor of $O(\kappa^2 k^{0.5}\log n)$.

In the rest of this section, we will review our approaches and clarify how we employ various strategies to achieve our goal.

    \begin{algorithm}[!ht]
\caption{Alternating minimization for matrix completion. The \textsc{Init} procedure clips the rows with large norms, then performs a Gram-Schmidt process.} \label{alg:alternating_minimization_completion} 
\begin{algorithmic}[1]
    \Procedure{FastMatrixCompletion}{$\Omega \subset [m] \times [n]$, $P_\Omega(M)$}  \Comment{Theorem~\ref{thm:main}}\label{line:completion_procdure_begin}
    \State \Comment{Partition $\Omega$ into $2T + 1$ subsets $\Omega_0, \cdots, \Omega_{2T}$} \label{line:completion_partition_omega}
    \State \Comment{Each element of $\Omega$ belonging to one of the $\Omega_t$ with equal probability (sampling with replacement)} \label{line:completion_element_of_omega} 
    \State $U_{\phi} = \mathrm{SVD}(\frac{1}{p} P_{\Omega_0} (M), k)$ 
    \label{line:completion_widehatU}
    \State $\wh{U}_0 \gets \textsc{Init}(U_\phi)$ \Comment{Algorithm~\ref{alg:init}}
    \label{line:completion_clipping}

    \For{$t = 0, \cdots, T-1$} \label{line:completion_for_begin}
        \State Obtain $\wh{V}_{t + 1} $ with Lemma~\ref{lem:weighted_regression_standard_error_informal} by solving  $\min_{V \in \R^{n \times k}} \|P_{\Omega_{t + 1}} (\wh{U}_t V^\top - M)\|_F^2$ \label{line:completion_Vt+1_gets}
        \State Obtain $\wh{U}_{t + 1}$ with Lemma~\ref{lem:weighted_regression_standard_error_informal} by solving $ \min_{U \in \R^{m \times k}} \|P_{\Omega_{T + t + 1}} (U \wh{V}_{t + 1}^\top - M)\|_F^2$ \label{line:completion_Ut+1_gets}
    \EndFor \label{line:completion_for_end}
    \State \Return $\wh U_T\wh V_T^\top$
    \label{line:completion_return}
\EndProcedure \label{line:completion_procedure_end}
\end{algorithmic}
\end{algorithm}

\subsection{Subspace Approaching Argument}
\label{sub:technical_overview:induction_hypothesis}

The basis of the analysis of both~\cite{jns13} and ours is an inductive argument that shows the low rank factors obtained in each iteration approach the optimum as their distance shrinks by a constant factor and we call this \emph{subspace approaching argument}, as we iteratively refine subspaces that approach the optimum. The major difference is that the induction of~\cite{jns13} can be based on the exact solutions solely --- they inductively prove the distances shrink by a constant factor, and the low rank factors are $\mu_2$-incoherent for $\mu_2$ to be specified. It is tempting to replace these exact solutions with the approximate solutions we obtain. Unfortunately, the~\cite{jns13} heavily exploits the closed-form formulation of the solutions so that they can easily decompose the matrices into terms whose spectral norms can be bounded in a straightforward manner. Our approximate solutions on the other hand cannot be factored in such a fashion. 

To circumvent this issue, we instead keep the exact solutions $U_t, V_t$ as a sequence of references and inductively bound the incoherence of these exact solutions. More specifically, for any $t\in [T]$, we assume $U_t$ is $\mu_2$-incoherent and $\dist(\wh U_t,U^*)\leq \frac{1}{4}\dist(\wh V_t,V^*)\leq 1/10$, then we show that
\begin{itemize}
    \item $\dist(\wh V_{t+1},V^*)\leq \frac{1}{4}\dist(\wh U_t,U^*)\leq 1/10$ and
    \item $V_{t+1}$ is $\mu_2$-incoherent.
\end{itemize}

We can then proceed to show similar conditions hold for $\wh U_{t+1}$ and $U_{t+1}$, therefore advance the induction. At the first glance, our induction conditions seem rather far-off from what we want --- in order to prove $\wh V_{t+1}$ and $V^*$ are close, one would need to show $\wh V_{t+1}$ has small incoherence. We alternatively adapt a perturbation argument on incoherence, specifically, note that $\wh V_{t+1}$ can be treated as blending in a small perturbation to $V_{t+1}$ in terms of row $\ell_2$ norms, if we can manage to prove under such perturbations, the incoherence of $\wh V_{t+1}$ is still close to the incoherence of $V_{t+1}$, then our induction condition on the incoherence of $V_{t+1}$ effectively translates to the incoherence of $\wt V_{t+1}$. In the next section, we illustrate how to develop such a perturbation theory.

\subsection{A Perturbation Theory for Matrix Incoherence}
\label{sub:technical_overview:robust_perturbation_error}

Our main goal is to understand how perturbations to rows of a matrix change the incoherence. Let us consider two matrices $A, B\in \R^{m\times k}$ that are close in the spectral norm: $\|A-B\|\leq \epsilon_0$ for some small error $\epsilon_0$, as the closeness in spectral norm implies \emph{all} row norms of the difference matrix $A-B$ are small. Our next step will be putting the rows of $A$ and $B$ to isotropic position\footnote{We say a collection of vectors $\{v_1,\ldots,v_m\}\subseteq \R^k$ are in isotropic position if $\sum_{i=1}^m v_iv_i^\top=I_k$.}, by mapping $a_i\mapsto (A^\top A)^{-1/2}a_i$ and $b_i\mapsto (B^\top B)^{-1/2}b_i$. The main motivation is the equivalence between matrix incoherence and statistical leverage score~\cite{ss11}, which measures the $\ell_2$ importance of rows by putting the matrix into isotropic position. This provides us with an alternative way to measure the matrix incoherence directly instead of working with the singular value decomposition. It remains to bound the difference $|\|(A^\top A)^{-1/2}a_i\|_2-\|(B^\top B)^{-1/2}b_i\|_2 |$, without loss of generality, assume $\|(A^\top A)^{-1/2}a_i\|_2\geq \|(B^\top B)^{-1/2}b_i\|_2$, then we can equivalently express it using difference of squares formula: 
\begin{align*}
    \|(A^\top A)^{-1/2}a_i\|_2-\|(B^\top B)^{-1/2}b_i\|_2= & ~\frac{\|(A^\top A)^{-1/2}a_i\|_2^2-\|(B^\top B)^{-1/2}b_i\|_2^2}{\|(A^\top A)^{-1/2}a_i\|_2+\|(B^\top B)^{-1/2}b_i\|_2}
\end{align*}
where the numerator is the difference between leverage scores, and we utilize tools that bound perturbation of pseudo-inverses~\cite{w73} to provide an upper bound. For the denominator, we can derive a lower bound rather straightforwardly. 

Combining these bounds, we conclude that
\begin{align*}
    |\|(A^\top A)^{-1/2}a_i\|_2-\|(B^\top B)^{-1/2}b_i\|_2 | \leq & ~ O(\epsilon_0 \poly(\kappa(A))).
\end{align*}
To put this bound into perspective, we will set $A$ to be the exact update matrix $U$ and $B$ to be the approximate update matrix $\wh U$. In our induction framework, we further make sure that the condition number of $U$ is well-controlled, therefore we can safely suffer error up to $\poly(\kappa(U))$, as we can set our final precision to be inverse polynomially in $\kappa(U)$. Since our algorithm converges in $\log(1/\epsilon)$ iterations, and our regression solver has an error dependence $\log(1/\epsilon)$, this only blows up the runtime by a factor of $\log(\kappa/\epsilon)$, as desired.

\subsection{Nearly Linear Time Solve via Sketching}
\label{sub:technical_overview:sketching_to_speedup}

Now that we have a framework that tolerates errors caused by approximate solves, we are ready to deploy regression solvers that can actually compute the factors in nearly linear time. A popular and standard approach is through the so-called sketch-and-solve paradigm --- given an overconstrained regression problem $\min_x \|Ax-b\|_2$ with $A\in \R^{n\times k}$ and $n\gg k$, one samples a random sketching matrix $S\in \R^{m\times n}$ with $m\ll n$ and instead solves the sketched regression $\|SAx-Sb\|_2$. As the new regression problem has much fewer row count, solving it would take nearly linear time as long as $SA$ can be quickly applied. In addition to its simplicity, efficiency and good error guarantees, if one picks $S$ to be a dense subsampled randomized Hadamard transform ({\sf SRHT})~\cite{ldfu13}, then the regression \emph{solution} is also close to the exact solution in an $\ell_\infty$ sense~\cite{psw17}. While appealing, sketch-and-solve has the deficiency of \emph{low accuracy} --- if one wants a solution with regression cost at most $(1+\epsilon)$ times the optimal cost, then the algorithm would take time proportional to $\epsilon^{-2}$. As we would set the $\epsilon$ to be polynomially small to incorporate the error blowups due to perturbations to incoherence and the error conversion from regression cost to solution, an inverse polynomial dependence on $\epsilon$ would significantly slow down the algorithm and ultimately lead to an even slower runtime than solving the regressions exactly.

We resort to a sketching-based approach that enjoys high accuracy as the algorithm converges in $\log(1/\epsilon)$ iterations and each iteration can be implemented in $\wt O(nk+k^3)$ time. The idea is to pick a sketching matrix that produces a subspace embedding of $O(1)$-distortion\footnote{We say a matrix $S$ produces a subspace embedding of $\epsilon$-distortion if for any fixed orthonormal basis $U\in \R^{n\times k}$, the singular values of $SU$ lie in $[1-\epsilon,1+\epsilon]$ with high probability.} and compute a QR decomposition of matrix $SA=QR^{-1}$. The matrix $R$ is then used as a preconditioner and we solve the preconditioned regression $\min_x\|ARx-b\|_2$ via gradient descent. 

Even though the solver runs in nearly linear time in the instance size, we still need to make sure that the \emph{sparsity} of $\Omega$ is utilized, as $|\Omega|\ll mn$. To demonstrate how to achieve a runtime of $\wt O(|\Omega|k)$ instead of $\wt O(mnk)$, let us consider the following multiple response regression: $\min_{V\in \R^{n\times k}}\|UV^\top-M\|_F$. Our strategy is to solve $n$ least-square regression. Consider the $i$-th regression problem: $\min_v \|P_\Omega(Uv)-P_{\Omega}(M_{*,i})\|_2$, note that $v$ only contributes to the $i$-th column of $UV^\top$ which correlates to the $i$-th column of the operator $P_\Omega$, therefore we only need to solve a regression with the rows of $U$ selected by the $i$-th column of $P_\Omega$. Let ${\cal K}_i$ be the corresponding set of nonzero indices, we select the rows of $U$ which is of size $U_{{\cal K}_i,*}\in \R^{|{\cal K}_i|\times k}$ and the resulting regression is 
    $\min_{v\in \R^k} \|U_{{\cal K}_i,*}v-M_{{\cal K}_i,i}\|_2$. Sum over all $T$ iterations, the total size of the multiple response regression is $O(\sum_{i=1}^n|{\cal K}_i|k)=O(|\Omega|k)$, and our regression solver takes $\wt O(|\Omega |k+nk^3)=\wt O(|\Omega| k)$ time, as desired.

We summarize the result in the following lemma.

\begin{lemma}\label{lem:weighted_regression_standard_error_informal}
Let $\Omega\subseteq [m]\times [n]$, $M\in \R^{m\times n}$ and $U\in \R^{m\times k}$. Let $\epsilon,\delta\in (0,1)$ be precision and failure probability, respectively. There exists an algorithm that takes 
\begin{align*}
    O((|\Omega|k\log n+nk^3\log^2(n/\delta))\cdot \log(n/\epsilon))
\end{align*}
time to output a matrix $\wh V$ such that $\|\wh V-V^*\|\leq \epsilon$ where $V^*=\arg\min_{V\in \R^{n\times k}}\|P_\Omega(UV^\top)-P_\Omega(M)\|_F$. The algorithm succeeds with probability at least $1-\delta$.
\end{lemma}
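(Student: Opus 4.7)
The plan is to reduce the multiple response regression into $n$ independent single-response least squares problems (one per column of $V$), solve each via a sketch-and-precondition framework, and then union bound over the $n$ subproblems. Concretely, observe that the objective decomposes as
\begin{align*}
\|P_\Omega(UV^\top-M)\|_F^2=\sum_{i=1}^n \|U_{\mathcal{K}_i,*}\,V_{i,*}^\top-M_{\mathcal{K}_i,i}\|_2^2,
\end{align*}
where $\mathcal{K}_i:=\{j\in[m]:(j,i)\in\Omega\}$ indexes the observed rows in column $i$. Thus $V^*_{i,*}$ is the unique minimizer of $\min_{v\in\R^k}\|A_i v-b_i\|_2$ with $A_i:=U_{\mathcal{K}_i,*}\in\R^{|\mathcal{K}_i|\times k}$ and $b_i:=M_{\mathcal{K}_i,i}$, assuming $A_i$ has full column rank (which we may assume; otherwise a standard small-noise regularization argument applies).

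For each $i$, I would build a preconditioner using a subspace-embedding sketch. Draw an SRHT matrix $S_i\in\R^{s\times|\mathcal{K}_i|}$ with $s=O(k\log(n/\delta))$ rows so that, with probability at least $1-\delta/(2n)$, all singular values of $S_iA_i$ lie in $[1/2,3/2]$. Compute a (thin) QR factorization $S_iA_i=Q_iR_i$, which yields the preconditioner $R_i^{-1}\in\R^{k\times k}$; by the embedding guarantee, $A_iR_i^{-1}$ has condition number $O(1)$. Now run preconditioned Richardson iteration (or LSQR / preconditioned CG on the normal equations) on $\min_y\|A_iR_i^{-1}y-b_i\|_2$: because the condition number is $O(1)$, each iteration contracts $\|y^{(t)}-y^*\|_2$ by a constant factor, so after $O(\log(n/\epsilon))$ iterations we obtain $\wh y_i$ with $\|\wh y_i-y^*_i\|_2\le \epsilon/\sqrt{n}\cdot\|R_i\|^{-1}$, from which $\wh V_{i,*}^\top:=R_i^{-1}\wh y_i$ satisfies $\|\wh V_{i,*}-V^*_{i,*}\|_2\le \epsilon/\sqrt{n}$. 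Summing over $i$ then yields $\|\wh V-V^*\|\le \|\wh V-V^*\|_F\le\epsilon$.

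For the runtime, applying the SRHT $S_i$ to $A_i$ takes $O(|\mathcal{K}_i|k\log n)$ time, the QR step costs $O(sk^2)=O(k^3\log(n/\delta))$ time, and each preconditioned iteration costs $O(|\mathcal{K}_i|k+k^2)$ for matrix-vector products with $A_i$ and $R_i^{-1}$. Summing over all $i$ and multiplying by $\log(n/\epsilon)$ iterations gives total cost
\begin{align*}
\sum_{i=1}^n\Big(|\mathcal{K}_i|k\log n+k^3\log^2(n/\delta)+(|\mathcal{K}_i|k+k^2)\log(n/\epsilon)\Big)=O\big((|\Omega|k\log n+nk^3\log^2(n/\delta))\log(n/\epsilon)\big),
\end{align*}
matching the stated bound. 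A union bound over the $n$ subproblems (using sketching dimension $O(k\log(n/\delta))$) guarantees overall success probability $1-\delta$.

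The main obstacle will be converting the sketched subspace-embedding guarantee into a \emph{solution-norm} bound $\|\wh V-V^*\|\le\epsilon$ rather than the more standard relative-cost guarantee $\|P_\Omega(U\wh V^\top-M)\|_F\le(1+\epsilon)\OPT$. The sketch-and-solve paradigm by itself only yields the latter and would force an $\epsilon^{-2}$ dependence; the sketch-and-precondition approach is what lets us directly bound $\|\wh y_i-y^*_i\|_2$ via the linear convergence of the preconditioned iterative solver on a well-conditioned system. Care is needed to track how $\|R_i^{-1}\|$ enters the conversion from $\wh y_i$ back to $\wh V_{i,*}$; since $R_i$ is an $O(1)$-distortion preconditioner for $A_i$, one has $\|R_i\|,\|R_i^{-1}\|$ polynomially bounded in the spectral constants of $A_i$, so the final $\log(n/\epsilon)$ iteration count absorbs these factors inside the logarithm.
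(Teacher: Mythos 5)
Your proposal matches the paper's strategy: decompose the multiple-response objective column-by-column into $n$ single least-squares problems on $A_i = U_{\mathcal K_i,*}$, build an SRHT-based $O(1)$-distortion preconditioner $R_i^{-1}$, run preconditioned Richardson iteration to geometric convergence, and union bound over the $n$ subproblems. The only analytical difference is presentation: the paper proves a $(1+\epsilon_1)$ relative-cost bound for the preconditioned solver (Lemma~\ref{lem:alg_takes_time}) and then invokes a separate backward-error lemma (Lemma~\ref{lem:forward_error}) to convert $\|Ax'-b\|_2 \le (1+\epsilon_1)\OPT$ into $\|x'-x_{\mathrm{OPT}}\|_2 \le O(\sqrt{\epsilon_1})\,\sigma_{\min}(A)^{-1}\OPT$, whereas you directly track the geometric contraction of the iterate error $\|y^{(t)}-y^*\|_2$; the two are equivalent once $\epsilon_1$ is chosen as $\mathrm{poly}(\epsilon_0/(n\kappa))$, which is absorbed into the $\log$ factor. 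One small algebra slip worth fixing: to conclude $\|\wh V_{i,*}-V^*_{i,*}\|_2 \le \epsilon/\sqrt n$ from $\wh V_{i,*}^\top = R_i^{-1}\wh y_i$, you should target $\|\wh y_i-y_i^*\|_2 \le \epsilon/(\sqrt n\,\|R_i^{-1}\|)$, not $\epsilon\,\|R_i\|^{-1}/\sqrt n$, since the relevant amplification is by $\|R_i^{-1}\|$; this is consistent with your final remark, and with the $\log(n\kappa_0/\epsilon_0)$ iteration count that appears explicitly in the paper's formal version (Lemma~\ref{lem:main_regression}) but is suppressed in the informal statement.
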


\subsection{Putting Things Together}
\label{sub:technical_overview:our_robust_error}

Given our fast regression solver and robust analytical framework that effectively handles the perturbation error caused by approximate solve, we are in the position to deliver our main result.

\begin{theorem}[Main result, formal version of Theorem~\ref{thm:main_informal}]
\label{thm:main}
Let $M\in \R^{m\times n}$ be a matrix satisfying Assumption~\ref{assump:matrix_completion}. Let operator $P_\Omega$ be defined as in Definition~\ref{def:operator_P_Omega}. Then, there exists a randomized algorithm (Algorithm~\ref{alg:alternating_minimization_completion}) with the following properties:
\begin{itemize}
    \item It samples $|\Omega|=O(\kappa^4 \mu^2 nk^{4.5}\log n\log(k\|M\|_F/\epsilon))$ entries;
    \item It runs in $\wt O(|\Omega| k) $ time.
\end{itemize}
The algorithm outputs a pair of matrices $\wh U\in\R^{m\times k}$ and $\wh V\in \R^{n\times k}$ such that
\begin{align*}
    \|M-\wh U\wh V^\top\|_F \leq & ~ \epsilon
\end{align*}
holds with high probability.
\end{theorem}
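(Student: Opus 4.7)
}

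The plan is to run Algorithm~\ref{alg:alternating_minimization_completion} for $T = O(\log(k\|M\|_F/\epsilon))$ iterations and prove, by induction on $t$, a two-part invariant on both the approximate iterates $\wh U_t, \wh V_t$ produced by the sketching-based solver and the (imaginary) exact iterates $U_t, V_t$ that one would obtain from closed-form multiple response regression. Writing $M = U^*\Sigma^*(V^*)^\top$ for the SVD of the ground truth, the invariant we maintain is: (i) $\dist(\wh U_t, U^*) \le \tfrac{1}{4}\dist(\wh V_t, V^*) \le \tfrac{1}{10}$, and (ii) the exact iterates $U_t$ and $V_t$ are $\mu_2$-incoherent for an appropriately chosen $\mu_2 = O(\mu k \kappa^{O(1)})$. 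The base case $t=0$ is supplied by the standard initialization analysis of~\cite{jns13}: once $p$ is at least $O(\kappa^4 \mu^2 k^{4.5} \log n/m)$, the top-$k$ SVD of $\tfrac{1}{p}P_{\Omega_0}(M)$ followed by row-clipping yields a $\mu_2$-incoherent $\wh U_0$ with $\dist(\wh U_0, U^*) \le 1/10$.

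For the inductive step at iteration $t$, I would proceed in two stages. First, working with the exact solution $V_{t+1}$ to the regression $\min_V \|P_{\Omega_{t+1}}(\wh U_t V^\top - M)\|_F^2$, a JNS-style argument (using independence of $\Omega_{t+1}$ from $\wh U_t$, incoherence of $U_t$, and a spectral concentration bound on $\tfrac{1}{p}P_{\Omega_{t+1}} - \mathrm{Id}$) yields $\dist(V_{t+1}, V^*) \le \tfrac{1}{4}\dist(\wh U_t, U^*)$ together with the $\mu_2$-incoherence of $V_{t+1}$. Second, I invoke Lemma~\ref{lem:weighted_regression_standard_error_informal} with precision $\epsilon_0 = 1/\poly(\kappa, n, k)$ to obtain $\wh V_{t+1}$ with $\|\wh V_{t+1} - V_{t+1}\| \le \epsilon_0$; the row-norm perturbation theory sketched in Section~\ref{sub:technical_overview:robust_perturbation_error} (reducing incoherence to leverage scores and invoking the Wedin-type pseudo-inverse perturbation bound of~\cite{w73}) then transfers $\mu_2$-incoherence from $V_{t+1}$ to $\wh V_{t+1}$ with only an $O(\epsilon_0 \poly(\kappa))$ additive loss, which is absorbed into $\mu_2$. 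The distance bound $\dist(\wh V_{t+1}, V^*) \le \tfrac{1}{4}\dist(\wh U_t, U^*)$ follows by the triangle inequality on the principal angle distance once $\epsilon_0$ is chosen small enough. Swapping the roles of $U$ and $V$ then gives $\dist(\wh U_{t+1}, U^*) \le \tfrac{1}{4}\dist(\wh V_{t+1}, V^*)$ and the $\mu_2$-incoherence of $\wh U_{t+1}$, completing the induction.

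Chaining the geometric contraction, after $T = O(\log(k\|M\|_F/\epsilon))$ iterations we have $\dist(\wh U_T, U^*), \dist(\wh V_T, V^*) \le \epsilon / (k\|M\|_F)$. To convert this principal-angle bound into a Frobenius-norm reconstruction guarantee, I would write
\begin{align*}
M - \wh U_T \wh V_T^\top = (I - \wh U_T \wh U_T^\top) M + \wh U_T (\wh U_T^\top M - \wh V_T^\top),
\end{align*}
bound the first term using $\dist(\wh U_T, U^*)\cdot \|M\|$ and the second term by the definition of $\wh V_T$ as an approximate regression solution, and then apply $\|M\| \le \|M\|_F \le \sqrt{k}\|M\|$ to collect the final $\|M - \wh U_T \wh V_T^\top\|_F \le \epsilon$.

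For the complexity accounting, each of the $2T$ inner regression solves uses $|\Omega_t| = |\Omega|/(2T+1)$ entries, and Lemma~\ref{lem:weighted_regression_standard_error_informal} costs $\wt O(|\Omega_t|k + nk^3)$ to the required inverse-polynomial precision $\epsilon_0$ (the $\log(1/\epsilon_0)$ dependence is benign since $\epsilon_0 = 1/\poly(\kappa,n,k,1/\epsilon)$). Summing over $t$ gives total time $\wt O(|\Omega|k + Tnk^3) = \wt O(|\Omega|k)$ under the stated sample complexity. Setting the sampling probability $p = O(\kappa^4 \mu^2 k^{4.5}\log n \log(k\|M\|_F/\epsilon)/m)$ to meet the JNS spectral concentration requirements across all $2T+1$ subsets (via a union bound) yields the advertised $|\Omega| = O(\kappa^4\mu^2 nk^{4.5}\log n\log(k\|M\|_F/\epsilon))$. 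The main obstacle I anticipate is the second stage of the inductive step: keeping $\mu_2$ from inflating across iterations. This requires the perturbation bound on incoherence to be essentially additive in $\epsilon_0$ and polynomial in $\kappa$ only, which in turn forces the regression precision $\epsilon_0$ to be chosen before the induction is set up, so that the condition-number bounds appearing in the perturbation argument can themselves be derived from the invariant rather than circularly assumed.
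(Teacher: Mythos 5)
Your proposal matches the paper's own argument: the base case via the clipping/initialization lemma, the inductive step maintaining $\dist(\wh U_t,U^*)\le\frac14\dist(\wh V_t,V^*)$ together with $\mu_2$-incoherence of the exact iterates, the leverage-score/Wedin-type perturbation bound to pass incoherence and distance from the exact iterates $U_t,V_t$ to the approximate ones $\wh U_t,\wh V_t$ at inverse-polynomial solver precision, and the runtime tally over $2T$ calls to the preconditioned regression solver. This is precisely how the paper assembles the proof (Lemmas~\ref{lem:main_init}, \ref{lem:induction}, \ref{lem:induction_control_eps_sketch}, and \ref{lem:main_regression}), with your explicit Frobenius-norm decomposition of $M-\wh U_T\wh V_T^\top$ simply making concrete a step the paper leaves implicit.
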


Let us pause and make some remarks regarding the above result. On the sample complexity front, we attain the result achieved in~\cite{jns13}, but we would also like to point out that it can be further improved using the approach developed in~\cite{glz17}. Our robust alternating minimization framework indicates that as long as the error caused by the approximate solver can be polynomially bounded, then the convergence of the algorithm is preserved (up to $\log(1/\epsilon)$ factors). Thus, our framework can be safely integrated into any alternating minimization-based algorithm for matrix completion.

\paragraph{Comparison with~\cite{kllst23}.} The runtime of our algorithm is nearly linear in the verification time --- given a set of observed entries $\Omega$, it takes $O(k)$ time to verify an entry of $P_\Omega(\wh U\wh V^\top)$, hence requires a total of $O(|\Omega|k)$ time.~\cite{kllst23} achieves a similar runtime behavior with an improved sample complexity of $|\Omega|=\wt O(nk^{2+o(1)})$. It is worth noting that most popular \emph{practical algorithms} for matrix completion are based on either alternating minimization or gradient descent, since they are easy to implement and certain steps can be sped up via fast solvers. In contrary, the machinery of~\cite{kllst23} is much more complicated. In short, they need to decompose the update into a ``short'' progress matrix and a ``flat'' noise component whose singular values are relatively close to each other. To achieve this goal, their algorithm requires complicated primitives, such as approximating singular values and spectral norms~\cite{mm15}, Nesterov's accelerated gradient descent~\cite{n83} (which is known to be hard to realize for practical applications) and a complicated post-process procedure. While it is totally possible that these subroutines can be made practically efficient, empirical studies seem to be necessary to justify its practical performance. In contrast, our algorithm could be interpreted as providing a theoretical foundation on why \emph{fast} alternating minimization works so well in practice. As most of the fast alternating minimization implementations rely on quick, approximate solvers (for instance,~\cite{lv17,lsb+20}) but most of their analyses assume every step of the algorithm is computed exactly. From this perspective, one can view our robust analytical framework as ``completing the picture'' for all these variants of alternating minimization. Moreover, if one can further sharpen the dependence on $k$ and condition number $\kappa$ in the sample complexity for alternating minimization, matching that of~\cite{kllst23}, we automatically obtain an algorithm with the same (asymptotic) complexity of their algorithm. We leave improving the sample complexity of alternating minimization as a future direction.
\section{Conclusion}
\label{sec:conclusion}
In this paper, we develop a nearly linear time algorithm for low rank matrix completion via two ingredients: a sketching-based preconditioner for solving multiple response regressions, and a robust framework for alternating minimization. 

Our robust alternating minimization framework effectively bridges the gap between theory and practice --- as all prior theoretical analysis of alternating minimization requires exact solve of the multiple response regressions, but in practice, fast and cheap approximate solvers are preferred. Our algorithm also has the feature that it runs in time nearly linear in verifying the solution, as given a set of entries $\Omega$, it takes $O(|\Omega|k)$ time to compute the corresponding entries.

\section*{Acknowledgement}

We would like to thank Jonathan Kelner for helpful discussions and pointing out many references, as well as comments from anonymous reviewers that significantly improve the presentation of this paper.
Yuzhou Gu is supported by the National Science Foundation under Grant No.~DMS-1926686.
Lichen Zhang is supported by NSF CCF-1955217 and NSF DMS-2022448.

\ifdefined\isarxiv
\bibliographystyle{alpha}
\bibliography{ref}
\else
\bibliographystyle{iclr2024_conference}
\bibliography{ref}

\fi

\newpage
\onecolumn
\appendix
\section*{Appendix}


\paragraph{Roadmap.}

In Section~\ref{sec:app_preli}, we introduce more fundamental lemmas and facts. In Section~\ref{sec:high_accuracy_solver}, we provide more details about our sketching-based solver. In Section~\ref{sec:init_conditions}, we support the main init by giving and explaining more definitions and lemmas. In Section~\ref{sec:main_induction}, we explain our main induction hypothesis. In Section~\ref{sec:update_rules_and_notations}, we analyze the general case of the update rules and notations. In Section~\ref{sec:distance_shrinking}, we give the lemmas for distance shrinking and their proofs: we upper bound different terms by distance. In Section~\ref{sec:purterbation}, we analyze the distance and incoherence under row perturbations.

\section{More Preliminary}\label{sec:app_preli}

In this section, we display more fundamental concepts. In Section~\ref{sub:app_preli:algebra}, we introduce algebraic properties for matrices. In Section~\ref{sub:app_preli:angle}, we analyze the properties of angles and distances. In Section~\ref{sub:app_preli:tools}, we provide the tools which are used in previous works. In Section~\ref{sub:app_preli:probability}, we state several well-known probability tools.

\subsection{Basic Matrix Algebra}
\label{sub:app_preli:algebra}

We state several standard norm inequalities here.
\begin{fact}[Norm inequalities]\label{fac:norm_inequality}
For any matrix $A,B$
\begin{itemize}
\item Part 1. $\| A B \| \leq \| A \| \cdot \| B \|$.
\item Part 2. $\| A B \| \geq \| A \| \cdot \sigma_{\min}(B)$.
\item Part 3. $\| A + B\| \leq \| A \| + \| B \|$.
\item Part 4. if $B^\top A = 0$, $\| A + B \| \geq \| A \|$ .
\end{itemize}
For any matrix $A$ and vector $x$
\begin{itemize}
    \item Part 5. $\| A x \|_2 \leq \| A \| \cdot \| x \|_2$.
    \item Part 6. $\| A \| \leq \| A \|_F$.
    \item Part 7. $\| A \|_F \leq \sqrt{k} \| A \|$ if $A$ is rank-$k$.
\end{itemize}
For any square matrix $A$
\begin{itemize}
    \item Part 8. If $A$ is invertible, we have $\| A \| = 1/\sigma_{\min}(A^{-1})$.
    \item Part 9. For vector $x$ with $\| x \|_2 = 1$, we have $\| A x \|_2 = \max_{y: \| y \|_2=1} y^\top A x$.
\end{itemize}
For any matrix $A$, square invertible matrix $B$
\begin{itemize}
    \item Part 10. $\| A \| \leq \sigma_{\max}(B) \cdot \| A B^{-1} \|$.
\end{itemize}
For any matrix $A$, and square diagonal invertible matrix $B$
\begin{itemize}
    \item Part 11. $\sigma_{\min}(B A) \geq \sigma_{\min}(B) \cdot \sigma_{\min}(A)$.
\end{itemize}
\end{fact}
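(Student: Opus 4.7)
The plan is to treat each of the eleven items as a short exercise built on the variational characterization of the spectral norm and singular values, namely $\|A\|=\sup_{\|x\|_2=1}\|Ax\|_2$ and $\sigma_{\min}(A)=\inf_{\|x\|_2=1}\|Ax\|_2$ when $A$ has full column rank, together with the SVD $A=U\Sigma V^\top$. Since no single part requires a clever argument, the main work is bookkeeping: picking the right unit vector, applying Cauchy--Schwarz, or invoking the variational formula on the correct factor.

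For Parts 1--5 I would proceed as follows. For Part 1, pick a unit $x$ achieving $\|ABx\|_2=\|AB\|$ and write $\|ABx\|_2\le \|A\|\|Bx\|_2\le\|A\|\|B\|$. For Part 2, use $\|AB\|=\sup_{\|x\|_2=1}\|ABx\|_2$, set $y=Bx/\|Bx\|_2$ with $x$ chosen so that $\|Ay\|_2=\|A\|$; then $\|ABx\|_2=\|Ay\|_2\cdot\|Bx\|_2\ge\|A\|\sigma_{\min}(B)$. Part 3 is the ordinary triangle inequality for the operator norm. For Part 4, the hypothesis $B^\top A=0$ means the column spaces of $A$ and $B$ are orthogonal, so $\|(A+B)x\|_2^2=\|Ax\|_2^2+\|Bx\|_2^2\ge\|Ax\|_2^2$, and taking the supremum gives the claim. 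Part 5 is immediate from the definition of $\|A\|$.

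For Parts 6--11 the plan uses the SVD directly. For Part 6, $\|A\|^2=\sigma_1(A)^2\le\sum_i\sigma_i(A)^2=\|A\|_F^2$. For Part 7, if $A$ has rank $k$ then $\|A\|_F^2=\sum_{i=1}^k \sigma_i(A)^2\le k\sigma_1(A)^2=k\|A\|^2$. For Part 8, the singular values of $A^{-1}$ are the reciprocals of those of $A$, so $\sigma_{\min}(A^{-1})=1/\sigma_1(A)=1/\|A\|$. Part 9 is the standard duality $\|v\|_2=\sup_{\|y\|_2=1}y^\top v$ applied to $v=Ax$. For Part 10 write $A=(AB^{-1})B$ and apply Part 1: $\|A\|\le\|AB^{-1}\|\cdot\|B\|=\sigma_{\max}(B)\|AB^{-1}\|$. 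For Part 11, since $B$ is diagonal and invertible, $B^{-1}$ is also diagonal, so by Part 2 $\sigma_{\min}(BA)=1/\|(BA)^{-1}\|=1/\|A^{-1}B^{-1}\|\ge 1/(\|A^{-1}\|\cdot\|B^{-1}\|)=\sigma_{\min}(A)\sigma_{\min}(B)$, using Part 8 on both factors.

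Since every piece follows from the SVD or the definition of the operator norm, there is no serious obstacle. The only minor care points are (i) making sure the factor achieving the supremum in Part 2 is indeed realizable (it is, by compactness of the unit sphere and continuity of the norm), and (ii) in Part 11 ensuring the square invertibility of $A$ is implicit when $\sigma_{\min}(A)>0$; if $A$ is rectangular, the proof still goes through by interpreting $\sigma_{\min}$ as the smallest singular value and applying the same bound $\|BAx\|_2\ge\sigma_{\min}(B)\|Ax\|_2\ge\sigma_{\min}(B)\sigma_{\min}(A)\|x\|_2$. I would present the proof in one block with the items labeled in the same order as the statement so a reader can verify each inequality line by line.
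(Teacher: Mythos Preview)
The paper omits the proof entirely, stating only that these inequalities are standard; your sketch supplies exactly the routine arguments one would expect and is more than adequate.

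One point worth tightening concerns Part~2. Your argument fixes a unit $y$ with $\|Ay\|_2=\|A\|$ and then seeks a unit $x$ with $Bx/\|Bx\|_2=y$; this requires $y$ to lie in the range of $B$, which is automatic when $B$ is square (or wide with full row rank) but can fail for tall $B$. For instance with $A=\mathrm{diag}(1,0)$ and $B=(0,1)^\top$ one gets $\|AB\|=0<1=\|A\|\,\sigma_{\min}(B)$, so the inequality as stated is false in full generality. In the paper the only use of Part~2 is with $B$ a square invertible singular-value matrix, so your argument covers what is needed; but the care point you flag (compactness) is not the real issue---surjectivity of $B$ is. A minor note on Part~11: your first chain invokes ``Part~2'' but actually uses Parts~8 and~1; your fallback line $\|BAx\|_2\ge\sigma_{\min}(B)\|Ax\|_2\ge\sigma_{\min}(B)\sigma_{\min}(A)\|x\|_2$ is the clean argument and does not even need $B$ diagonal, only square invertible.
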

We omit their proofs, as they are quite standard. We also state Weyl's inequality for singular values:
\begin{lemma}[\cite{w12}]
\label{lem:weyl}
Let $A, B\in \R^{n\times k}$ where $n\geq k$, then we have for any $i\in [k]$, 
\begin{align*}
    |\sigma_i(A)-\sigma_i(B)| \leq & ~ \|A-B\|.
\end{align*}
\end{lemma}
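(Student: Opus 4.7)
The plan is to derive this from the Courant--Fischer min-max characterization of singular values applied to $A^\top A$ and $B^\top B$, which are $k \times k$ positive semidefinite matrices since $n \geq k$. Concretely, for any $C \in \R^{n \times k}$ with $n \geq k$, the singular values satisfy
\begin{align*}
\sigma_i(C) = \min_{\substack{S \subseteq \R^k \\ \dim S = k - i + 1}} \ \max_{\substack{x \in S \\ \|x\|_2 = 1}} \|Cx\|_2,
\end{align*}
because $\sigma_i(C)^2$ is the $i$-th largest eigenvalue of $C^\top C$, and Courant--Fischer on a symmetric $k \times k$ matrix gives exactly this formula.

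First I would fix $i \in [k]$ and choose $S^\star \subseteq \R^k$ of dimension $k - i + 1$ attaining the minimum in the characterization for $B$, so that $\sigma_i(B) = \max_{x \in S^\star,\, \|x\|_2 = 1} \|Bx\|_2$. Then I would use $S^\star$ as a feasible subspace in the min-max for $A$ to obtain
\begin{align*}
\sigma_i(A) \leq \max_{\substack{x \in S^\star \\ \|x\|_2 = 1}} \|Ax\|_2 \leq \max_{\substack{x \in S^\star \\ \|x\|_2 = 1}} \bigl(\|Bx\|_2 + \|(A-B)x\|_2\bigr) \leq \sigma_i(B) + \|A - B\|,
\end{align*}
where the last step uses Part~5 of Fact~\ref{fac:norm_inequality} to bound $\|(A - B)x\|_2 \leq \|A - B\|$ for unit $x$. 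Swapping the roles of $A$ and $B$ and repeating the same argument yields $\sigma_i(B) \leq \sigma_i(A) + \|A - B\|$, and combining the two inequalities gives $|\sigma_i(A) - \sigma_i(B)| \leq \|A - B\|$.

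There is no substantive obstacle; this is a classical fact and the proof is essentially a one-line application of Courant--Fischer plus the triangle inequality. The only point requiring mild care is to invoke the correct min-max formula in the regime $n \geq k$, so that one optimizes over subspaces of the domain $\R^k$ (equivalently, works with the eigenvalues of $A^\top A$ rather than $AA^\top$); this ensures that the index $i$ ranges over $[k]$ as stated.
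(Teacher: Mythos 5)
The paper states Lemma~\ref{lem:weyl} as a cited result from~\cite{w12} and gives no proof of its own, so there is no paper proof to compare against. Your argument via the Courant--Fischer min-max characterization of $\sigma_i(C)$ (applied to the eigenvalues of $C^\top C$, which correctly makes $i$ range over $[k]$ in the regime $n \geq k$), followed by the triangle inequality and the bound $\|(A-B)x\|_2 \leq \|A-B\|$ for unit $x$, is correct and is the standard textbook proof of Weyl's singular-value perturbation inequality.
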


The next fact bounds the spectral norm of a matrix after applying a unitary transformation.

\begin{fact}\label{fac:U_shrink_spectral_norm}
Let $m \geq k$. 
Let $U \in \R^{m \times k}$ denote a matrix that has an orthonormal basis. 
\begin{itemize}
    \item Part 1. For any matrix $A \in \R^{m \times n}$, we have
\begin{align*}
 \| U^\top A \| \leq \| A \|.
\end{align*}
\item Part 2. For any matrix $B \in \R^{k \times d}$, we have
\begin{align*}
\| U B \| = \| B \|.
\end{align*}
\end{itemize}
\end{fact}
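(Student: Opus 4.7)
\textbf{Proof proposal for Fact~\ref{fac:U_shrink_spectral_norm}.} Both parts follow from the fact that an $m\times k$ matrix $U$ with orthonormal columns ($U^\top U = I_k$) acts as an isometry from $\R^k$ into $\R^m$, and its transpose $U^\top$ acts as a contraction from $\R^m$ to $\R^k$. I would invoke the variational characterization $\|X\| = \max_{\|x\|_2 = 1} \|Xx\|_2$ throughout.

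For Part~1, the plan is to show that $U^\top$ is a contraction, i.e.\ $\|U^\top y\|_2 \leq \|y\|_2$ for every $y \in \R^m$. This holds because $UU^\top$ is the orthogonal projector onto the column space of $U$, so
\begin{align*}
    \|U^\top y\|_2^2 = y^\top U U^\top y \leq y^\top y = \|y\|_2^2,
\end{align*}
where the inequality uses that $UU^\top \preceq I_m$ since its eigenvalues lie in $\{0,1\}$. Applying this with $y = Ax$ for an arbitrary unit vector $x \in \R^n$ and then taking the maximum over such $x$ yields
\begin{align*}
    \|U^\top A\| = \max_{\|x\|_2 = 1} \|U^\top (Ax)\|_2 \leq \max_{\|x\|_2 = 1} \|Ax\|_2 = \|A\|,
\end{align*}
as desired.

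For Part~2, the plan is to show the stronger statement that $U$ is an exact isometry on $\R^k$: for every $z \in \R^k$,
\begin{align*}
    \|Uz\|_2^2 = z^\top U^\top U z = z^\top I_k z = \|z\|_2^2.
\end{align*}
Setting $z = Bx$ for a unit vector $x \in \R^d$ and taking the maximum gives $\|UB\| = \max_{\|x\|_2 = 1} \|U(Bx)\|_2 = \max_{\|x\|_2 = 1} \|Bx\|_2 = \|B\|$.

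There is essentially no obstacle here: the statement is a standard consequence of $U^\top U = I_k$. The only subtlety is that Part~1 is only an inequality (not equality), because $U$ has a nontrivial cokernel when $m > k$, so $U^\top$ can annihilate part of $A$'s column span; this is why Part~1 uses $\leq$ while Part~2 uses $=$.
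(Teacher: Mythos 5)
Your proof is correct and follows essentially the same approach as the paper: Part~1 from $UU^\top \preceq I_m$ (which you justify by noting $UU^\top$ is a projector with eigenvalues in $\{0,1\}$, a point the paper takes as given), and Part~2 from $U^\top U = I_k$. The only cosmetic difference is that you route through intermediate vectors $y = Ax$ and $z = Bx$ and then maximize, whereas the paper writes the quadratic-form inequality directly for $A^\top U U^\top A$ and $B^\top U^\top U B$.
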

\begin{proof}
{\bf Part 1.} 
By the property of $U$, we know that $U U^\top \preceq I_m $.

Thus, for any vector $x$, we have
\begin{align*}
x^\top A^\top U U^\top A x \leq x^\top A^\top A x.
\end{align*}
Thus, $\| U^\top A \| \leq \|A \|$.

{\bf Part 2.} 
By property of $U$, we have $U^\top U = I_k$.

Thus, for any vector $x$, we have
\begin{align*}
x^\top B^\top U^\top U B x = x^\top B^\top B x.
\end{align*}

Thus, $\| U B \| = \| B \|$.
\end{proof}

The following is a collection of simple algebraic facts.
\begin{fact}\label{fac:x_vs_sqrt_1_minus_x^2}

Let $y \in (0,0.1)$. We have
\begin{itemize}
    \item Part 1. If $x \in (0,1/2)$, then $\sqrt{1-x^2} - y x \geq 1/2 $.
    \item Part 2. If $x \geq 1/2$, then $x - y \sqrt{1-x^2} \geq \frac{1}{2}x$.
    \item Part 3. If $x \in [0,1]$, then $1-\sqrt{1-x^2} \leq x^2$.
\end{itemize}
\end{fact}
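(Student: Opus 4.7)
All three inequalities are elementary estimates obtained by bounding each term using $y < 0.1$ together with the restriction on $x$; I will handle them in order.

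For Part 1, the plan is to lower bound $\sqrt{1-x^2}$ and upper bound $yx$ separately. Since $x \in (0, 1/2)$, we have $x^2 < 1/4$, so $\sqrt{1-x^2} > \sqrt{3}/2$. Also $yx < 0.1 \cdot 0.5 = 0.05$. Combining gives $\sqrt{1-x^2} - yx > \sqrt{3}/2 - 0.05 > 1/2$, which is the claim. No monotonicity argument is needed; the constants are loose enough that crude bounds suffice.

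For Part 2, the goal is to show $y\sqrt{1-x^2} \leq x/2$. Using $\sqrt{1-x^2} \leq 1$ and $y < 0.1$, we get $y\sqrt{1-x^2} < 0.1$. Since $x \geq 1/2$, we have $x/2 \geq 1/4 > 0.1$, so the inequality follows directly. Rearranging yields $x - y\sqrt{1-x^2} \geq x - x/2 = x/2$.

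For Part 3, I would substitute $u := 1 - x^2 \in [0,1]$ and reduce the claim $1 - \sqrt{1-x^2} \leq x^2$ to the equivalent inequality $1 - u \leq 1 - \sqrt{u}$ wait, let me redo: the claim becomes $1 - \sqrt{u} \leq 1 - u$, i.e. $u \leq \sqrt{u}$, which holds for $u \in [0,1]$ since $\sqrt{u} \geq u$ there. Equivalently, one can square and check that $(1 - x^2) \leq 1 - x^2 + x^4$ for $x \in [0,1]$, which is immediate.

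No step in this proof is expected to be an obstacle — all three parts are small numerical/algebraic verifications, and the bound $y < 0.1$ gives substantial slack so that loose estimates are sufficient throughout.
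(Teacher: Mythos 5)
Your proof is correct and takes essentially the same approach as the paper: all three parts are elementary term-by-term estimates, with Parts 2 and 3 matching the paper's argument and Part 1 replacing the paper's linear lower bound $\sqrt{1-x^2}\ge 1-\tfrac12 x$ with the simpler numerical bound $\sqrt{1-x^2}>\sqrt{3}/2$, which works equally well. One small slip worth flagging: the parenthetical ``equivalently, one can square'' remark in Part 3 records the wrong inequality --- squaring $\sqrt{1-x^2}\ge 1-x^2$ yields $1-x^2\ge 1-2x^2+x^4$, i.e.\ $x^4\le x^2$, not $(1-x^2)\le 1-x^2+x^4$ --- but since your substitution argument already closes Part 3, this does not affect correctness.
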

\begin{proof}
{\bf Proof of Part 1.} We have 
\begin{align*}
  \sqrt{1-x^2} -  y x 
\geq & ~ \sqrt{1-x^2} - \frac{1}{5} x \\
\geq & ~ 1 - \frac{1}{2} x - \frac{1}{5} x\\
\geq & ~ \frac{1}{2},
\end{align*}
where the first step follows from $y \in (0,0.1)$, the second step follows from $\sqrt{1-x^2} \geq 1-\frac{1}{2}x$ for all $x \in [0,4/3]$, and the last step follows from $x \leq 1/2$.

{\bf Proof of Part 2.} We know that
\begin{align}\label{eq:y_leq_x/2}
y \sqrt{1-x^2} \leq y \leq 0.1 \leq x/2,
\end{align}
where the first step follows from $0 \leq \sqrt{1-x^2} \leq 1$, the second step follows from $y \in (0,0.1)$, and the last step follows from $x \geq 1/2$.

Thus, 
\begin{align*}
x - y \sqrt{1-x^2} 
\geq & ~ x - x/2 \\
= & ~ x/2,
\end{align*}
where the first step follows from Eq.~\eqref{eq:y_leq_x/2} and the second step follows from simple algebra.

{\bf Proof of Part 3.} We know that
\begin{align*}
1-x^2 \leq \sqrt{1-x^2}
\end{align*}
for all $x$ in the domain of $\sqrt{1-x^2}$.

Then, the statement is true.
\end{proof}

\subsection{Properties of Angles and Distances}
\label{sub:app_preli:angle}

We explore some more relations for angles and distances between subspaces.

Let $U, V\in \R^{n\times k}$ be two matrices with orthonormal columns, we use $\dist_c(V, U)$ to denote the following minimization problem:

\begin{align*}
    \dist_c(V, U) = & ~ \min_{Q\in O_k} \|VQ-U\|,
\end{align*}
where $O_k \subset \R^{k\times k}$ is the set of $k\times k$ orthogonal matrices.

The next lemma is a simple application of fundamental subspace decomposition.

\begin{lemma}\label{lem:AAtop_I}
Let $A\in \R^{n\times k}$ be an orthonormal basis, then there exists a matrix $A_{\perp}\in \R^{n\times (n-k)}$ with $AA^\top+A_{\perp}A_{\perp}^\top=I_n$.
\end{lemma}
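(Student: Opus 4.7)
The plan is to construct $A_{\perp}$ by completing the columns of $A$ to an orthonormal basis of $\R^n$ and then reading off the identity from an orthogonal block matrix. Since $A$ has orthonormal columns, the column space $\mathcal{S} := \mathrm{span}(A)$ is a $k$-dimensional subspace of $\R^n$, so its orthogonal complement $\mathcal{S}^{\perp}$ has dimension exactly $n-k$.

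First I would pick any basis of $\mathcal{S}^{\perp}$ and apply Gram--Schmidt to obtain an orthonormal basis $\{w_1,\ldots,w_{n-k}\}$ of $\mathcal{S}^{\perp}$, then collect these vectors as the columns of a matrix $A_{\perp} := [w_1\ \cdots\ w_{n-k}] \in \R^{n\times(n-k)}$. By construction $A_{\perp}^\top A_{\perp} = I_{n-k}$ and $A^\top A_{\perp} = 0$, since each $w_j$ is orthogonal to every column of $A$.

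Next I would assemble the block matrix $B := [A\ \ A_{\perp}] \in \R^{n\times n}$. Its Gram matrix is
\[
B^\top B \;=\; \begin{bmatrix} A^\top A & A^\top A_{\perp} \\ A_{\perp}^\top A & A_{\perp}^\top A_{\perp} \end{bmatrix} \;=\; \begin{bmatrix} I_k & 0 \\ 0 & I_{n-k} \end{bmatrix} \;=\; I_n,
\]
so $B$ is an orthogonal matrix. For any square orthogonal matrix, left and right inverses coincide, hence $B B^\top = I_n$ as well. Expanding the outer product in block form gives $B B^\top = A A^\top + A_{\perp} A_{\perp}^\top$, and combining the two identities yields $A A^\top + A_{\perp} A_{\perp}^\top = I_n$, as required.

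I do not anticipate a genuine obstacle here, as the statement is a standard undergraduate linear algebra fact; the only subtlety is the passage from $B^\top B = I_n$ to $B B^\top = I_n$, which uses that $B$ is square. Note that $A_{\perp}$ is not unique: any orthogonal transformation of the basis $\{w_1,\ldots,w_{n-k}\}$ produces another valid choice, but $A_{\perp} A_{\perp}^\top$ (the projector onto $\mathcal{S}^{\perp}$) is canonical, which is all that the identity depends on.
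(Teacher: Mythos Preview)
Your proof is correct. Both you and the paper construct $A_\perp$ the same way (as an orthonormal basis of the orthogonal complement of $\mathrm{span}(A)$), but you verify the identity differently: you form the square block matrix $B=[A\ A_\perp]$, observe $B^\top B=I_n$, and use that a square left inverse is a right inverse to get $BB^\top=AA^\top+A_\perp A_\perp^\top=I_n$. The paper instead checks $(AA^\top+A_\perp A_\perp^\top)z=z$ by a case split on whether $z$ lies in $\mathrm{span}(A)$ or in $\mathrm{null}(A^\top)$, relying implicitly on linearity to extend from these two subspaces to all of $\R^n$. Your route is a bit cleaner and makes the key step (squareness of $B$) explicit; the paper's route is more hands-on but slightly less precise in its case formulation.
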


\begin{proof}
We know that the column space of $A$ is orthogonal to the null space of $A^\top$, and the null space of $A^\top$ has dimension $n-k$. 

Let $A_{\perp}$ be an orthonormal basis of the null space of $A^\top$. 

We have 
\begin{align}\label{eq:Atop_Abot_0}
    A^\top A_{\perp}=0.
\end{align}

It remains to show that 
\begin{align*}
    AA^\top+A_{\perp}A_{\perp}^\top=I_n.
\end{align*}

Let $z\in \R^n$ be any vector.

We know that $z$ either in the column space of $A$ or the null space of $A^\top$. 

{\bf Case 1: $z$ in the column space of $A$.} In this case, we can write $z=Ay$.

Then,
\begin{align*}
    AA^\top z+A_\perp A_\perp^\top z = & ~ AA^\top Ay+A_\perp A_\perp^\top Ay \\
    = & ~ Ay+A_\perp A_\perp^\top Ay \\
    = & ~ Ay+A_\perp (A^\top A_\perp)^\top y \\
    = & ~ Ay+0 \\
    = & ~ z,
\end{align*}
where the first step follows from $z=Ay$, the second step follows from $A$ is orthogonal that $A^\top = A^{-1}$, the third step follows from $(AB)^\top = B^\top A^\top$ for all matrices $A$ and $B$, the fourth step follows from Eq.~\eqref{eq:Atop_Abot_0}, and the last step follows from $z=Ay$.

{\bf Case 2: $z$ in the null space of $A^\top$.} In this case, we know that $A^\top z=0$ and $z=A_\perp y$, so
\begin{align*}
    AA^\top z+A_\perp A_\perp^\top z 
    = & ~ 0 + A_\perp A_\perp^\top A_\perp y \\
    = & ~ A_\perp y \\
    = & ~ z,
\end{align*}
where the first step follows from $A^\top z=0$ and $z=A_\perp y$, the second step follows from $A_{\bot}^\top A_{\bot} = I$, and the third step follows from $z=A_\perp y$.

Thus, we have shown $AA^\top+A_{\perp}A_{\perp}^\top=I_n$.
\end{proof}

The following lemma presents a simple inequality for orthogonal complements.

\begin{lemma}[Structural lemma for matrices with orthonormal columns]
\label{lem:trig_structural}
Let $U, V\in \R^{n\times k}$ be matrices with orthonormal columns. Then
\begin{align*}
    (V^\top U)_\bot = & ~ V_\bot^\top U.
\end{align*}
\end{lemma}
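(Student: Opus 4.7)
The plan is to read off the claimed identity from the fundamental subspace decomposition already recorded in Lemma~\ref{lem:AAtop_I}. Since $V \in \R^{n\times k}$ has orthonormal columns, that lemma gives $VV^\top + V_\bot V_\bot^\top = I_n$. Right-multiplying both sides by $U$ yields the orthogonal splitting
\begin{align*}
U \;=\; V(V^\top U) \;+\; V_\bot (V_\bot^\top U),
\end{align*}
in which the first summand lies in $\mathrm{span}(V)$ and the second lies in $\mathrm{span}(V)^\bot$. Because these two subspaces are orthogonal, the decomposition is unique: the $V$-coordinates of $U$ are $V^\top U$ and the $V_\bot$-coordinates of $U$ are $V_\bot^\top U$.

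With this split in hand, the notation $(V^\top U)_\bot$ is exactly the name given to the coordinate block complementary to $V^\top U$ in the decomposition of $U$ induced by $[V \; V_\bot]$. First I would observe that for any vector $x \in \R^k$, the identity $U x = V(V^\top U)x + V_\bot (V_\bot^\top U)x$ makes $V(V^\top U)x$ and $V_\bot(V_\bot^\top U)x$ the unique $\mathrm{span}(V)$ and $\mathrm{span}(V)^\bot$ components of $Ux$. Then reading off the second component directly gives $(V^\top U)_\bot = V_\bot^\top U$, which is the claim.

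I would finish by checking the two obvious sanity conditions: (i) $V^\top \cdot V(V^\top U) = V^\top U$ and $V^\top \cdot V_\bot (V_\bot^\top U) = 0$ because $V^\top V = I_k$ and $V^\top V_\bot = 0$; and (ii) symmetrically $V_\bot^\top \cdot V(V^\top U) = 0$ and $V_\bot^\top \cdot V_\bot (V_\bot^\top U) = V_\bot^\top U$. These confirm that $V_\bot^\top U$ is the correct ``orthogonal-coordinate'' block and that the labeling $(V^\top U)_\bot := V_\bot^\top U$ is consistent.

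I do not anticipate a genuine obstacle here; the lemma is essentially a restatement of Lemma~\ref{lem:AAtop_I} in the coordinate notation used throughout the paper. The only mild subtlety worth flagging in the write-up is that $V_\bot$ is defined only up to a right multiplication by an orthogonal $(n-k)\times(n-k)$ matrix, so $(V^\top U)_\bot$ is only well-defined up to the same ambiguity; the identity $(V^\top U)_\bot = V_\bot^\top U$ holds with respect to whichever choice of $V_\bot$ one fixes.
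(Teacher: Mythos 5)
Your route is parallel to but not identical with the paper's, and there is one conceptual slip worth fixing before the argument is airtight.

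The paper's proof computes the Gram matrix of $V^\top U$: it shows $(V^\top U)^\top (V^\top U) = U^\top V V^\top U = I_k - U^\top V_\bot V_\bot^\top U = I_k - (V_\bot^\top U)^\top (V_\bot^\top U)$, which is precisely the defining relation $A^\top A + A_\bot^\top A_\bot = I_k$ from Lemma~\ref{lem:perp_inv_singular}, read with $A = V^\top U$ and $A_\bot = V_\bot^\top U$. That identity is what earns $V_\bot^\top U$ the name $(V^\top U)_\bot$.

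Your proposal instead right-multiplies $VV^\top + V_\bot V_\bot^\top = I_n$ by $U$ to get the orthogonal split $U = V(V^\top U) + V_\bot(V_\bot^\top U)$, and then declares that ``$(V^\top U)_\bot$ is exactly the name given to the coordinate block complementary to $V^\top U$.'' That framing is not quite right, and it makes the lemma look like a tautology. In this paper $A_\bot$ for a $k\times k$ block $A$ is not \emph{defined} as ``whatever complementary coordinate shows up''; it is the matrix satisfying $A^\top A + A_\bot^\top A_\bot = I_k$ (the property Lemma~\ref{lem:perp_inv_singular} takes as its hypothesis). So the lemma has actual content: you must verify that this Pythagorean identity holds for $A = V^\top U$ and $A_\bot = V_\bot^\top U$. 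Your two ``sanity checks'' only confirm that the coordinate blocks are consistent with the projections onto $\mathrm{span}(V)$ and $\mathrm{span}(V)^\bot$; they do not verify the Gram identity.

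The good news is that your decomposition already contains the needed fact, you just never extract it. Since $V(V^\top U)$ and $V_\bot(V_\bot^\top U)$ live in orthogonal subspaces and $U^\top U = I_k$, one has
\begin{align*}
I_k = U^\top U = (V^\top U)^\top (V^\top U) + (V_\bot^\top U)^\top (V_\bot^\top U),
\end{align*}
which is exactly the defining relation. Adding this one line turns your argument into a complete (and slightly different, column-decomposition based) proof; without it, the logical bridge between ``complementary coordinate block'' and ``satisfies the $\bot$ relation'' is left to the reader. Your closing remark about $V_\bot$ being determined only up to right multiplication by an orthogonal matrix is correct and worth keeping, since the same non-uniqueness is inherent in the $\bot$ notation for square blocks.
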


\begin{proof}
    Let us first compute the Gram matrix of $V^\top U$, which is
\begin{align*}
    U^\top VV^\top U 
    = & ~ U^\top (I-V_\bot V_\bot^\top) U \\
    = & ~ U^\top U-U^\top V_\bot V_\bot^\top U \\
    = & ~ I_k-U^\top V_\bot V_\bot^\top U,
\end{align*}
where the first step follows from $V_\bot V_\bot^\top + VV^\top = I$, the second step follows from simple algebra, and the last step follows from $U$ has orthonormal columns.

This means that $(V^\top U)_\bot=V_\bot^\top U$.
\end{proof}

The singular vectors can be parametrized by orthogonal complement and inverse, solely.

\begin{lemma}[Orthogonal and inverse share singular vectors]
\label{lem:perp_inv_singular}
Let $A\in \R^{k \times k}$ be non-singular such that there exists $A_{\perp}\in \R^{(n-k)\times k}$ with $A^\top A+A_\perp^\top A_\perp = I$, then the following holds:
\begin{itemize}
    \item Part 1. $A_\bot$ and $A^{-1}$ have the same set of singular vectors.
    \item Part 2. Let $u$ be a singular vector of $A$, if $u$ corresponds to $\sigma_i(A)$, then it corresponds to $\sigma_{k-i}(A_\bot)$.
    \item Part 3. $\|A_\bot A^{-1}\|=\|A_\bot\| \|A^{-1}\|$.
\end{itemize}
\end{lemma}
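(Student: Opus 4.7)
The plan is to reduce everything to a single SVD of $A$ and then read off the three claims. Write $A = U \Sigma V^\top$ with $\Sigma = \diag(\sigma_1, \ldots, \sigma_k)$ sorted so that $\sigma_1 \geq \cdots \geq \sigma_k > 0$ (nonsingularity gives the strict inequality). The hypothesis $A^\top A + A_\perp^\top A_\perp = I_k$ rearranges to
\begin{align*}
    A_\perp^\top A_\perp = I_k - V \Sigma^2 V^\top = V(I_k - \Sigma^2) V^\top,
\end{align*}
which is a spectral decomposition of the Gram matrix of $A_\perp$ (in particular forcing $\sigma_i \leq 1$, so that $\sqrt{1 - \sigma_i^2}$ is real). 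Hence $A_\perp$ admits a thin SVD of the form $A_\perp = P \Lambda V^\top$ for some $P \in \R^{(n-k) \times k}$ with orthonormal columns and $\Lambda = \diag(\sqrt{1 - \sigma_i^2})$. Meanwhile, inverting yields $A^{-1} = V \Sigma^{-1} U^\top$, which exhibits $V$ as the left singular factor of $A^{-1}$. The shared appearance of $V$ on the "$\R^k$ side" of both matrices is what the phrase \emph{same set of singular vectors} captures, establishing Part 1.

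Part 2 then follows by re-sorting. Since $\sigma_i(A)$ is decreasing in $i$, $\sqrt{1 - \sigma_i^2(A)}$ is increasing in $i$, so when the singular values of $A_\perp$ are listed in decreasing order the indices reverse: the right singular vector $v_i$ of $A$, corresponding to $\sigma_i(A)$, corresponds to $\sqrt{1 - \sigma_i^2(A)}$ on $A_\perp$'s side, which in sorted order is $\sigma_{k-i+1}(A_\perp)$ (matching the statement up to an indexing convention).

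For Part 3, I plug in the two SVDs and collapse using $V^\top V = I_k$:
\begin{align*}
    A_\perp A^{-1} = (P \Lambda V^\top)(V \Sigma^{-1} U^\top) = P (\Lambda \Sigma^{-1}) U^\top.
\end{align*}
Because $P$ has orthonormal columns and $U$ is orthogonal, Fact~\ref{fac:U_shrink_spectral_norm} gives $\|A_\perp A^{-1}\| = \|\Lambda \Sigma^{-1}\| = \max_i \sqrt{1 - \sigma_i^2}/\sigma_i$. The maximum is attained at $i = k$ (the numerator is maximized and the denominator is minimized there), so $\|A_\perp A^{-1}\| = \sqrt{1 - \sigma_k^2}/\sigma_k$. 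Meanwhile $\|A_\perp\| = \sqrt{1 - \sigma_k^2}$ and $\|A^{-1}\| = 1/\sigma_k$, so their product equals exactly this quantity.

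The only conceptual subtlety, and the main "obstacle," is that the equality $\|XY\| = \|X\|\|Y\|$ fails for generic $X, Y$; it holds here precisely because the direction $v_k$ attaining $\|A_\perp\|$ coincides with the direction on which $A^{-1}$ stretches maximally (after the $V^\top$ map), so the two gains compound rather than cancel. Once the shared $V$ factor is identified from the Gram identity, all three parts reduce to routine algebra.
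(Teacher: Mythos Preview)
Your proof is correct and, in several places, cleaner than the paper's own argument. The two approaches differ mainly in packaging: the paper works variationally, using the identity $\|Ax\|_2^2 + \|A_\perp x\|_2^2 = 1$ for unit $x$ together with a contradiction argument to show that the top right singular direction of $A$ is the bottom one of $A_\perp$ (and then iterates over the remaining directions), and for Part 3 it picks the shared extremal direction $z$ and argues that the two gains compound on $z$ to get the lower bound $\|A_\perp A^{-1}\| \geq \|A_\perp\|\|A^{-1}\|$. You instead write down a single SVD $A = U\Sigma V^\top$, compute $A_\perp^\top A_\perp = V(I-\Sigma^2)V^\top$ to read off $A_\perp = P\Lambda V^\top$, and then everything (shared $V$, the reversed ordering $\sigma_i \leftrightarrow \sqrt{1-\sigma_i^2}$, and the product $A_\perp A^{-1} = P\Lambda\Sigma^{-1}U^\top$) drops out algebraically. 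Your route avoids the paper's slightly loose equations of the form ``$A_\perp z = \lambda z$'' and ``$A^{-1} z = \mu z$'' (which conflate singular and eigen directions), and it makes the monotonicity step in Part 3 explicit via $\sigma \mapsto \sqrt{1-\sigma^2}/\sigma$ being decreasing. The only cosmetic point is your index $k-i+1$ versus the statement's $k-i$; you correctly flag this as a convention mismatch already present in the paper.
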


\begin{proof}

{\bf Proof of Part 1.} Let $x\in \R^k$ be the unit eigenvector of $A$ that realizes the spectral norm.

Note that
\begin{align*}
    \|A_\bot x\|_2^2 = & ~ 1-\|A\|^2,
\end{align*}
we argue that $x$ corresponds to the smallest singular value of $A_\bot$ via contradiction. Suppose there exists some unit vector $y$ with 
\begin{align*}
    \|A_\bot y\|_2 < \|A_\bot x\|_2.
\end{align*}
By definition, we know that 
\begin{align*}
    \|A_\bot y\|_2^2+\|Ay\|_2^2=1,
\end{align*}
which means 
\begin{align*}
    \|Ay\|_2>\|Ax\|_2=\|A\|,
\end{align*}
and it contradicts the definition of spectral norm. 

Similarly, if $z$ is the unit vector that realizes the spectral norm of $A_\bot$, then it is also a singular vector corresponds to the smallest singular value of $A$, or equivalently, the spectral norm of $A^{-1}$. Our above argument essentially implies that $A_\bot$ and $A^{-1}$ have the same set of singular vectors. 

Our above argument is choosing $x$ to the eigenvector corresponding to the largest singular values. Similarly, we can choose to 2nd, 3rd, and then prove entire sets.

{\bf Proof of Part 2.} The key of the proof is that for any unit vector $u\in \R^k$, we have
\begin{align*}
    \|Au\|_2^2+\|A_\perp u\|_2^2 = & ~ 1,
\end{align*}
let $\sigma_1(A),\ldots,\sigma_k(A)$ be the singular values of $A$ and $u_1,\ldots,u_k$ be corresponding singular vectors. 

By above definition, we know
\begin{align*}
\| A u_i \|_2 = \sigma_i(A) 
\end{align*}

Then, we know
\begin{align*}
    \sigma_i^2(A)+\|A_\perp u_i\|_2^2 = & ~ 1,
\end{align*}
which means that $A_\perp u_i=\sqrt{1-\sigma_i^2(A)} u_i$. Note that all singular values of $A_\perp$ are in this form, i.e., we have its singular values being
\begin{align*}
    \sqrt{1-\sigma_1^2(A)},\ldots,\sqrt{1-\sigma_k^2(A)},
\end{align*}
as $\sigma_1(A)\geq \ldots \geq \sigma_k(A)$, and the above singular values are in ascending order. 

Thus, we have 
\begin{align*}
    \sigma_{k-i}(A_\perp) = & ~ \sigma_i(A).
\end{align*}

{\bf Proof of Part 3.} The proof is then straightforward.

Suppose that $(\lambda,z)$ is largest singular value and singular vector (e.g. $\| A_{\bot} \| = \lambda$). Then, we have
\begin{align}\label{eq:Abot_lambda_z}
    A_\bot z=\lambda z
\end{align}
Using Part 2 (by choosing $i=1$), we know that $z$ is also the largest singular vector for $A^{-1}$. Assume that $A^{-1}$ largest singular value is $\mu$ (e.g. $\| A^{-1} \| = \mu$). Then we have
\begin{align}\label{eq:A-1_mu_z}
    A^{-1}z=\mu z.
\end{align}
Then, we have
\begin{align}\label{eq:rewrite_A_bot_A_inverse_z}
    A_\bot A^{-1} z 
    = & ~ A_\bot \mu z \notag \\
    = & ~ \mu (A_\bot z) \notag \\
    = & ~ \lambda \mu z,
\end{align}
where the first step follows from Eq.~\eqref{eq:A-1_mu_z}, the second step follows from $\mu$ is a real number and a real number multiplying a matrix is commutative and follows from the associative property, and the third step follows from Eq.~\eqref{eq:Abot_lambda_z}.  

From Eq.~\eqref{eq:rewrite_A_bot_A_inverse_z}, we know that
\begin{align*}
\| A_{\bot} A^{-1} \| \geq \lambda \mu = \| A_{\bot} \| \cdot \| A^{-1} \|.
\end{align*}

Using norm inequality, we have
\begin{align*}
\| A_{\bot} A^{-1} \| \leq \| A_{\bot} \| \cdot \| A^{-1} \|.
\end{align*}

 Combining the lower bound and upper bound, we have 
 \begin{align*}
 \|A_\bot A^{-1}\|=\|A_\bot\|\|A^{-1}\|,
 \end{align*}
 and we have proved the assertion.
\end{proof}

We prove several fundamental trigonometry equalities for subspace angles.

\begin{lemma}\label{lem:tan_is_sin_cos}
Let $U, V\in \R^{n\times k}$ be orthonormal matrices, then 
\begin{align*}
    \tan \theta(V,U) = \frac{\sin \theta(V,U)}{\cos \theta(V,U)}.
\end{align*}
\end{lemma}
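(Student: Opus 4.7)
The plan is to reduce this identity to the already-proved structural facts about orthogonal complements and the singular-vector lemma for $(A, A_\bot, A^{-1})$. Set $A := V^\top U \in \R^{k \times k}$. By definition, $\tan\theta(V,U) = \|V_\bot^\top U \, (V^\top U)^{-1}\| = \|(V_\bot^\top U) A^{-1}\|$, so the whole game is to split the norm of this product.

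First I would identify $V_\bot^\top U$ with $A_\bot$. This is exactly Lemma~\ref{lem:trig_structural}, which gives $(V^\top U)_\bot = V_\bot^\top U$, so $A_\bot = V_\bot^\top U$. To apply Lemma~\ref{lem:perp_inv_singular}, I need to check its hypothesis $A^\top A + A_\bot^\top A_\bot = I_k$. This is an immediate computation using $VV^\top + V_\bot V_\bot^\top = I_n$ (Lemma~\ref{lem:AAtop_I}) and the orthonormality of $U$:
\begin{align*}
A^\top A + A_\bot^\top A_\bot
= U^\top V V^\top U + U^\top V_\bot V_\bot^\top U
= U^\top (V V^\top + V_\bot V_\bot^\top) U
= U^\top U
= I_k.
\end{align*}
I also need $A$ invertible, which is implicit since $\tan\theta(V,U)$ itself assumes $(V^\top U)^{-1}$ exists.

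Next I would invoke Lemma~\ref{lem:perp_inv_singular} (Part 3), which yields $\|A_\bot A^{-1}\| = \|A_\bot\| \cdot \|A^{-1}\|$. Translating back, this says
\begin{align*}
\tan\theta(V,U) = \|V_\bot^\top U\| \cdot \|(V^\top U)^{-1}\|.
\end{align*}
Finally, by Definition~\ref{def:angle_and_distance}, $\|V_\bot^\top U\| = \sin\theta(V,U)$ and $\|(V^\top U)^{-1}\| = 1/\cos\theta(V,U)$, so combining gives $\tan\theta(V,U) = \sin\theta(V,U)/\cos\theta(V,U)$ as required.

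There is no real obstacle here; the entire content sits in Lemma~\ref{lem:perp_inv_singular} Part 3. The one subtlety worth flagging is making sure the hypotheses of that lemma are met with $A = V^\top U$ being a $k \times k$ matrix and $A_\bot = V_\bot^\top U$ being $(n-k) \times k$ (rather than square) — but the proof of Part 3 only uses the identity $A^\top A + A_\bot^\top A_\bot = I_k$ and the corresponding singular-vector correspondence, both of which carry over verbatim in this rectangular setting.
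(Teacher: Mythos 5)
Your proof is correct and follows essentially the same route as the paper: rewrite $V_\bot^\top U = (V^\top U)_\bot$ via Lemma~\ref{lem:trig_structural}, split the norm using Part 3 of Lemma~\ref{lem:perp_inv_singular}, and translate back through the definitions of $\sin$ and $\cos$. Your explicit verification that $A^\top A + A_\bot^\top A_\bot = I_k$ is a nice touch (the paper leaves this to the reader via Lemma~\ref{lem:trig_structural}'s proof), and your concern about the rectangular shape of $A_\bot$ is already accommodated by the statement of Lemma~\ref{lem:perp_inv_singular}, which takes $A_\perp \in \R^{(n-k)\times k}$.
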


\begin{proof}

We have, 
\begin{align*}
    \tan\theta(V,U)
    = & ~ \| V_{\bot}^\top U ( V^\top U )^{-1} \|\\
    = & ~ \|(V^\top U)_\bot (V^\top U)^{-1}\|\\
    = & ~ \|(V^\top U)_\bot\| \|(V^\top U)^{-1}\|\\
    = & ~ \frac{\|(V^\top U)_\bot\|}{1/\|(V^\top U)^{-1}\|} \\
    = & ~ \frac{\|V_\bot^\top U\|}{1/\|(V^\top U)^{-1}\|} \\
    = & ~ \frac{\sin \theta(V,U)}{\cos \theta(V,U)}, 
\end{align*}
where the first step follows from the definition of $\tan \theta(V,U)$ (see Definition~\ref{def:angle_and_distance}), the second step follows from Lemma~\ref{lem:trig_structural}, the third step follows from the part 3 of Lemma~\ref{lem:perp_inv_singular}, the fourth step follows from simple algebra, the fifth step follows from Lemma~\ref{lem:trig_structural}, and the last step follows from the definition of $\sin \theta(V,U)$ and $\cos \theta(V,U)$ (see Definition~\ref{def:angle_and_distance}).
\end{proof}

\begin{lemma}\label{lem:sin^2_and_cos^2_is_1}
Let $U, V\in \R^{n\times k}$ be orthogonal matrices, then 
\begin{align*}
\sin^2\theta(V, U) + \cos^2\theta(V,U) =1.
\end{align*}
\end{lemma}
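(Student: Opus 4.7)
The plan is to reduce the claim directly to the identity built up in the preceding two structural lemmas. Unpacking Definition~\ref{def:angle_and_distance}, we have $\cos\theta(V,U) = \sigma_{\min}(V^\top U)$ and $\sin\theta(V,U) = \|V_\bot^\top U\|$. So the statement to prove simplifies to the scalar identity
\begin{align*}
\sigma_{\min}(V^\top U)^2 + \|V_\bot^\top U\|^2 = 1.
\end{align*}

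First I would set $A := V^\top U \in \R^{k \times k}$, which is the natural $k \times k$ object whose singular values encode the principal angles. Lemma~\ref{lem:trig_structural} tells us that $V_\bot^\top U = A_\bot$ in the sense of the factorization $A^\top A + A_\bot^\top A_\bot = I_k$. Concretely, this identity can be verified in one line using Lemma~\ref{lem:AAtop_I} (i.e.\ $VV^\top + V_\bot V_\bot^\top = I_n$) together with $U^\top U = I_k$, so the hypotheses of Lemma~\ref{lem:perp_inv_singular} apply to $A$.

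Next I would invoke Part~2 of Lemma~\ref{lem:perp_inv_singular}, which gives the singular-value relation $\sigma_{k-i}(A_\bot) = \sqrt{1 - \sigma_i(A)^2}$. Taking $i = k$ yields $\sigma_0(A_\bot) = \sigma_{\max}(A_\bot) = \|A_\bot\| = \sqrt{1 - \sigma_{\min}(A)^2}$, equivalently $\|A_\bot\|^2 + \sigma_{\min}(A)^2 = 1$. Substituting back through the definitions finishes the proof:
\begin{align*}
\sin^2\theta(V,U) + \cos^2\theta(V,U) = \|V_\bot^\top U\|^2 + \sigma_{\min}(V^\top U)^2 = \|A_\bot\|^2 + \sigma_{\min}(A)^2 = 1.
\end{align*}

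There is essentially no technical obstacle here, since all the heavy lifting (matching up singular vectors of $A$ and $A_\bot$, and identifying $V_\bot^\top U$ with the abstract orthogonal complement of $V^\top U$) is already packaged in Lemmas~\ref{lem:trig_structural} and~\ref{lem:perp_inv_singular}. The only mild bookkeeping is making sure the index reversal in Part~2 of Lemma~\ref{lem:perp_inv_singular} is applied correctly, namely that the \emph{largest} singular value of $A_\bot$ corresponds to the \emph{smallest} singular value of $A$, which is exactly the pairing needed to match $\sin$ (a spectral norm) with $\cos$ (a minimum singular value).
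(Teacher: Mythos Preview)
Your proposal is correct and follows essentially the same route as the paper: both set $A := V^\top U$, invoke Lemma~\ref{lem:trig_structural} to identify $V_\bot^\top U$ with $A_\bot$, and then use Lemma~\ref{lem:perp_inv_singular} to obtain $\|A_\bot\|^2 + \sigma_{\min}(A)^2 = 1$. The only cosmetic difference is that the paper cites Part~1 and re-derives the relation by picking the extremal singular vector $z$ and computing $1 = \|Az\|_2^2 + \|A_\bot z\|_2^2$ directly, whereas you short-circuit this by quoting the singular-value reversal from Part~2; both arguments are equivalent once the (off-by-one) indexing in Part~2 is interpreted as you do.
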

\begin{proof}
Recall that $\cos\theta(V,U)=\frac{1}{\|(V^\top U)^{-1}\|}$ and $\sin\theta(V,U)=\|V_\bot^\top U\|$.

By Lemma~\ref{lem:trig_structural}, we know that 
\begin{align*} 
(V^\top U)_\bot=V^\top_\bot U,
\end{align*}
so by the definition of $\sin\theta(V,U)$ (see Definition~\ref{def:angle_and_distance}), we have
\begin{align*}
\sin\theta(V,U)=\|(V^\top U)_\bot \|.
\end{align*}

We define matrix $A \in \R^{k \times k}$ as follows,
\begin{align}\label{eq:define_A}
A:=V^\top U.
\end{align}

By part 1 of Lemma~\ref{lem:perp_inv_singular}, we know that $A_\bot$ and $A^{-1}$ have the same set of singular vectors.

Let $z\in \R^k$ be the unit singular vector with singular value $\|A_\bot\|$. 

This implies  
\begin{align}\label{eq:Abot_z2}
\| A_{\bot } z \|_2 = \| A_{\bot} \|.
\end{align}

 Note that $A_\bot$ and $A^{-1}$ have the same singular vectors implies that the singular vector realizing $\|A_\bot\|$ corresponds to the smallest singular value of $A$, i.e.,  
 \begin{align}\label{eq:Az2_sigma_min_A}
 \| Az\|_2 = \sigma_{\min}(A).
\end{align}

 Then, we have
\begin{align}\label{eq:Abot2_sigmamin_1}
    1 
    = & ~ z^\top z \notag \\
    = & ~ z^\top (A^\top A +  A_\bot^\top A_\bot) z \notag\\
    = & ~ z^\top A^\top Az+z^\top A_\bot^\top A_\bot z \notag\\
     = & ~ \| A z \|_2^2+ \| A_\bot z \|_2^2 \notag\\
    = & ~ \| A z \|_2^2 +  \|A_\bot\|^2  \notag\\
    = & ~\sigma_{\min}^2(A) +  \|A_\bot\|^2,
\end{align}
where the first step follows from $\| z \|_2^2=1$, the second step follows from $A^\top A + A_{\bot}^\top A_{\bot} = I$, the third step follows from simple algebra, the fourth step follows from $x^\top B^\top B x = \| B x \|_2^2$, the fifth step follows from Eq.~\eqref{eq:Abot_z2}, and the sixth step follows from Eq.~\eqref{eq:Az2_sigma_min_A}.  

Also, we get
\begin{align*}
    \|A_\bot\|^2+\sigma_{\min}^2(A)
    = & ~ \|(V^\top U)_\bot\|^2+\sigma_{\min}^2(V^\top U) \\
    = & ~ \|V_\bot^\top U\|^2+\sigma_{\min}^2(V^\top U) \\
    = & ~ \sin^2\theta(V,U) + \cos^2\theta(V,U),
\end{align*}
where the first step follows from Eq.~\eqref{eq:define_A}, the second step follows from Lemma~\ref{lem:trig_structural}, the third step follows from definitions of $\sin\theta$ and $\cos\theta$ (see Def.~\ref{def:angle_and_distance}).  

Therefore, by Eq.~\eqref{eq:Abot2_sigmamin_1}, we have $\sin^2\theta(V,U) + \cos^2\theta(V,U) = 1$. This completes the proof.
\end{proof}

The next lemma demonstrates relationship between several trigonometry definitions.

\begin{lemma}\label{lem:more_properties_angle_distance}
    Let $V, U \in \R^{n \times k}$ be two orthogonal matrices, then 
    \begin{itemize}
    \item Part 1. $\sin \theta (V, U) \leq \tan \theta (V, U)$.
    \item Part 2. $\frac{1 - \cos \theta ( V, U)}{\cos \theta (V, U)} \leq \tan \theta (V, U)$.
    \item Part 3. $\sin \theta (V, U) \leq \dist_c ( V, U )$.
    \item Part 4. $\dist_c ( V, U ) \leq \sin \theta ( V, U ) + \frac{1 - \cos \theta ( V, U )}{\cos \theta ( V, U )}$.
    \item Part 5. $ \dist_c ( V, U ) \leq 2 \tan \theta ( V, U ) $.
    \end{itemize}
\end{lemma}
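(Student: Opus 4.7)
The five parts are structurally related, so my plan is to establish Parts 1 and 2 first using only the identities $\tan\theta=\sin\theta/\cos\theta$ (Lemma~\ref{lem:tan_is_sin_cos}) and $\sin^2\theta+\cos^2\theta=1$ (Lemma~\ref{lem:sin^2_and_cos^2_is_1}), then handle Parts 3 and 4 by relating $\dist_c$ to $V^\top U$ via an orthogonal Procrustes argument, and finally derive Part 5 as a corollary.

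For Part 1, since $\cos\theta(V,U)\le 1$ by Definition~\ref{def:angle_and_distance}, dividing the identity from Lemma~\ref{lem:tan_is_sin_cos} gives $\tan\theta=\sin\theta/\cos\theta\ge \sin\theta$. For Part 2, after dividing through by $\cos\theta>0$ the claim reduces to $1-\cos\theta\le \sin\theta$. I will prove this by writing $1-\cos\theta=\sin^2\theta/(1+\cos\theta)$ via the difference of squares, using Lemma~\ref{lem:sin^2_and_cos^2_is_1}, and then bounding $1+\cos\theta\ge 1$ and $\sin^2\theta\le \sin\theta$ (since $\sin\theta\le 1$, from Definition~\ref{def:angle_and_distance}).

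For Part 3, I will show $\sin\theta(V,U)=\|V_\bot^\top U\|$ is a lower bound for every candidate $\|VQ-U\|$. Left-multiplying by $V_\bot^\top$ gives $V_\bot^\top(VQ-U)=-V_\bot^\top U$ (since $V_\bot^\top V=0$), and since $V_\bot$ has orthonormal columns, Fact~\ref{fac:U_shrink_spectral_norm} yields $\|V_\bot^\top U\|\le \|VQ-U\|$. Taking the minimum over $Q\in O_k$ proves the claim. For Part 4, I will exhibit a concrete orthogonal matrix $Q$ that nearly attains the minimum. The idea is to start from the decomposition $VQ-U=V(Q-V^\top U)-V_\bot V_\bot^\top U$ (using $U=VV^\top U+V_\bot V_\bot^\top U$ from Lemma~\ref{lem:AAtop_I}), apply the triangle inequality together with Fact~\ref{fac:U_shrink_spectral_norm}, and obtain $\|VQ-U\|\le \|Q-V^\top U\|+\|V_\bot^\top U\|=\|Q-V^\top U\|+\sin\theta(V,U)$. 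Then I pick $Q$ to be the orthogonal polar factor of $V^\top U$: writing the SVD as $V^\top U=P\Sigma R^\top$ and taking $Q=PR^\top$, we get $\|Q-V^\top U\|=\|I-\Sigma\|=1-\sigma_{\min}(V^\top U)=1-\cos\theta(V,U)$. Since $\cos\theta\le 1$, this is bounded above by $(1-\cos\theta)/\cos\theta$, giving the claim.

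Part 5 follows immediately: combining Part 4, then Part 1, and then Part 2 yields $\dist_c(V,U)\le \sin\theta(V,U)+(1-\cos\theta(V,U))/\cos\theta(V,U)\le \tan\theta(V,U)+\tan\theta(V,U)=2\tan\theta(V,U)$. The main obstacle I anticipate is Part 4, since it requires identifying the right minimizer; the orthogonal Procrustes construction is the natural fix and keeps the bound clean. Everything else is routine algebraic manipulation with the trigonometric definitions already set up in Definition~\ref{def:angle_and_distance} and Lemmas~\ref{lem:tan_is_sin_cos}, \ref{lem:sin^2_and_cos^2_is_1}.
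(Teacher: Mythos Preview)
Your proposal is correct and follows essentially the same approach as the paper: Parts 1, 2, 3, and 5 are argued identically (up to cosmetic algebra in Part 2), and in Part 4 both you and the paper pick $Q$ to be the orthogonal polar factor of $V^\top U$ coming from its SVD. The only minor difference is the decomposition in Part 4: you split $VQ-U=V(Q-V^\top U)-V_\bot V_\bot^\top U$ and obtain the sharper intermediate bound $\|Q-V^\top U\|=\|I-\Sigma\|=1-\cos\theta$ (then relax to $(1-\cos\theta)/\cos\theta$), whereas the paper rewrites $VQ=VV^\top U\,BD^{-1}B^\top$ and splits around $VV^\top U$, which introduces $D^{-1}$ and lands directly on $\|D^{-1}-I\|=(1-\cos\theta)/\cos\theta$; your route is slightly cleaner and in fact proves the marginally stronger inequality $\dist_c(V,U)\le \sin\theta(V,U)+(1-\cos\theta(V,U))$.
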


\begin{proof}

We define
\begin{align*}
Q^* := \arg \min_{Q \in O^{k \times k}} \| V Q - U \|,
\end{align*}
Next, we define matrix $R$ to be
\begin{align}\label{eq:define_R}
R : = U - V Q^*.
\end{align}

Then we have
\begin{align}\label{eq:dist_is_R}
\dist_c (V,U) = \| R \|
\end{align}
and
\begin{align}\label{eq:sin_less_than_R}
            \sin \theta (V, U) 
            = & ~ \| V_{\bot}^\top U \| \notag \\ 
            = & ~ \| V_{\bot}^\top ( V Q^* + R ) \| \notag \\
            = & ~ \| V_{\bot}^\top V Q^* + V_{\bot}^\top R  \| \notag \\
            = & ~ \| V_{\bot}^\top R \| \notag\\
            \leq & ~ \| V_{\bot}^\top\| \| R \| \notag\\
            \leq & ~ \| R \|, 
\end{align}
        where the first step follows from the definition of 
        \begin{align*}
            \sin \theta (V, U),
        \end{align*}
        the second step follows from $U = V Q^* + R$ (see Eq.~\eqref{eq:define_R}), the third step follows from simple algebra, the fourth step follows from $V_{\bot}^\top V = (V^\top V_{\bot})^\top = 0$ (see Lemma~\ref{lem:AAtop_I}), the fifth step follows from $\|AB\| \leq \|A\| \|B\|$, and the last step follows from $\| V_{\bot}^\top \| \leq 1$.

    {\bf Proof of Part 1.}
Note that
\begin{align*}
\sin \theta (V,U) 
= & ~\tan \theta (V,U) \cdot \cos \theta (V,U) \\
\leq & ~ \tan \theta (V,U),
\end{align*}
where the first step follows from Lemma~\ref{lem:tan_is_sin_cos}, the last step follows from $\cos \theta (V,U) \leq 1$.

    {\bf Proof of Part 2.}
    For simplicity, we use $\theta $ to represent $\theta(V,U)$.
    
    The statement is
    \begin{align*}
        \frac{1-\cos \theta}{ \cos \theta } \leq \tan \theta,
    \end{align*}
    which implies
    \begin{align*}
        1 - \cos \theta \leq \sin \theta.
    \end{align*}
    Taking the square on both sides, we get
    \begin{align*}
        1 - 2 \cos \theta + \cos^2 \theta \leq \sin^2 \theta.
    \end{align*}
    Using Lemma~\ref{lem:sin^2_and_cos^2_is_1}, the above equation is further equivalent to
    \begin{align*}
    2 \cos \theta ( \cos \theta -1 ) \leq 0.
    \end{align*}
    This is true forever, since $\cos \theta \in [0,1]$.

    {\bf Proof of Part 3.} Given $\dist_c (V, U) = \| R \|$ (Eq.~\eqref{eq:dist_is_R}) and $\sin \theta ( V, U ) \leq \| R \|$ (Eq.~\eqref{eq:sin_less_than_R}), we have 
        \begin{align*}
        \sin \theta ( V, U ) \leq \dist_c ( V, U ). 
        \end{align*}

    {\bf Proof of Part 4.}
        We define $A,D,B$ as the SVD of matrix $V^\top U$ as follows
        \begin{align*}
            A D B^\top = \mathrm{SVD} ( V^\top U ),
        \end{align*}
        then 
        \begin{align}\label{eq:A_Ytop_V_B_D}
            A = V^\top U B D^{-1}
        \end{align}
        and 
        \begin{align*}
            \sigma_{\min} ( D ) = \sigma_{\min} ( V^\top U ) = \cos \theta ( V, U ).
        \end{align*}
        In addition, 
        \begin{align*}
            ( A B^\top )^\top 
            = & ~ B A^\top \\
            = & ~ ( B^\top )^\top A^\top \\
            = & ~ ( B^\top )^{-1} A^{-1} \\
            = & ~ ( A B^\top )^{-1},
        \end{align*}
        where the first step follows from $(AB)^\top = B^\top A^\top$ for all matrices $A$ and $B$, the second step follows from the simple property of matrix, the third step follows from the fact that $B$ is orthogonal, and the last step follows from simple algebra, i.e., $A B^\top \in O^{k \times k}$. 
        
        For $\dist_c ( V, U )$, we have 
        \begin{align}\label{eq:bound_dist}
             \dist_c ( V, U ) 
            \leq & ~ \| V A B^\top - U \| \notag\\
            = & ~ \| V V^\top V B D^{-1} B^\top - U\| \notag\\
            = & ~ \| V V^\top V B D^{-1} B^\top - V V^\top U + V V^\top U - U \| \notag\\
            \leq & ~ \| V V^\top U B D^{-1} B^\top - V V^\top U \| + \| V V^\top U - U \|,
        \end{align}
        where the first step follows from $AB^\top$ can not provide a better cost than minimizer, the second step follows from Eq.~\eqref{eq:A_Ytop_V_B_D}, the third step follows from simple algebra, and the last step follows from the triangle inequality. 
        
        For the second term of Eq.~\eqref{eq:bound_dist}, namely $\| V V^\top U - U \|$, we have
        \begin{align*}
         \| V V^\top U - U \| 
         = & ~ \| (V V^\top - I) U \| \\
         = & ~ \sin \theta(V,U),
        \end{align*}
        where the first step follows from simple algebra and the second step follows from the definition of $\sin \theta(V,U)$ (see Definition~\ref{def:angle_and_distance}).
        
        For the first term of Eq.~\eqref{eq:bound_dist}, namely $\| V V^\top U B D^{-1} B^\top - V V^\top U \|$, we have
        \begin{align*}
             \| V V^\top U B D^{-1} B^\top - V V^\top U \| 
            = & ~ \| V V^\top U ( B D^{-1} B^\top - I ) \|   \\
            \leq & ~ \| B D^{-1} B^\top - I \|  \\
            = & ~ \| D^{-1} - I \|   \\
            = & ~ \frac{1-\cos \theta(V,U)}{ \cos \theta(V,U) },
        \end{align*}
        where the first step follows from simple algebra, the second step follows from $\| V V^\top U \| \leq 1$, the third step follows from $B$ is orthonormal basis, and the last step follows from $\| D^{-1} - I \| = | \frac{1}{\sigma_{\min} (D) } - 1 | = |\frac{1}{\cos \theta} - 1 |  = \frac{1-\cos\theta}{\cos \theta}$.

{\bf Proof of Part 5.}
We have
\begin{align*}
    \dist_c(V,U) 
    \leq & ~\sin \theta (V,U) + \frac{1-\cos \theta (V,U)}{ \cos \theta(V,U) } \\
    \leq & ~ \sin \theta (V,U) + \tan \theta(V,U) \\
    \leq & ~ 2 \tan \theta (V,U),
\end{align*}
where the first step follows from Part 4, the second step follows from Part 2, and the last step follows from Part 1.
\end{proof}

\subsection{Tools from Prior Works}
\label{sub:app_preli:tools}

The previous paper~\cite{jns13} assumes the entries of $M$ are sampled independently with the following probability. Note that the choice of $p$ also determines the sample complexity of the algorithm.

\begin{definition}[Sampling probability]\label{def:sampling_probability}
Let $C \geq 10$ denote a sufficiently large constant. We define sample probability to be
\begin{align*}
p: = C \cdot ({\sigma_1^*}/{\sigma_k^*})^2 \cdot \mu^2 \cdot k^{2.5} \cdot \log{n} \cdot \log (  k \|M\|_F / \epsilon )  / ( m\delta_{2k}^2 ).
\end{align*}
Then, we sample $\Omega \sim [m] \times [n]$ each coordinately independently according to that probability.
\end{definition}

The parameter $\delta_{2k}$ is a tuning parameter for sampling probability.

\begin{definition}\label{def:delta_2k}
We choose $\delta_{2k}$ as follows 
\begin{align*}
\delta_{2k} := \frac{1}{100} \frac{1}{k} \sigma_k^* / \sigma_1^*.
\end{align*}
We can see that $\delta_{2k} \in (0,0.01)$.
\end{definition}

Given the set of indices, we denote the row and column selection operator as follows.

\begin{definition}\label{def:Omega_*_j}
For each $j \in [n]$, we define set $\Omega_{*,j} \subset [m]$
\begin{align*}
    \Omega_{*,j} : =\{ i \in [m] ~:~ (i,j) \in \Omega\}.
\end{align*}
For each $i \in [m]$, we define set $\Omega_{i,*} \subset [m]$,
\begin{align*}
\Omega_{i,*} := \{ j \in [n] ~:~ (i,j) \in \Omega \}.
\end{align*}
\end{definition}

The next structural lemma bounds the inner product between vectors inside set $\Omega$.

\begin{lemma}[Lemma C.5 in \cite{jns13}]\label{lem:modified_of_lem6.1}
Let $\Omega \subset [m] \times [n]$ be a set $f$ indices sampled uniformly at random from $[m] \times [n]$ with each element of $[m]\times [n]$ sampled independently with probability $p \geq C (\log n) / m$. Let $C>1 $ be a sufficiently large constant.

Then with probability $1-1/\poly(n)$, for all $x \in \R^m$ with $\sum_{i=1}^m x_i = 0$ and for all $y \in \R^n$, we have
\begin{align*}
\sum_{(i,j) \in \Omega} x_i y_j \leq C (mn)^{1/4} p^{1/2} \| x \|_2 \| y\|_2.
\end{align*}
\end{lemma}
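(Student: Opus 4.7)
The plan is to use the hypothesis $\sum_i x_i = 0$ to center the random sum around its mean, apply Bernstein's inequality pointwise, and then uniformize over $(x,y)$ via a heavy/light peeling.

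First, set $\xi_{ij} := \mathbbm{1}[(i,j)\in \Omega]$ and $S(x,y) := \sum_{(i,j)\in\Omega} x_i y_j$. Because $\sum_i x_i = 0$ forces $\E[S] = p(\sum_i x_i)(\sum_j y_j) = 0$, one can rewrite
\[
S(x,y) \;=\; \sum_{i,j} (\xi_{ij} - p)\, x_i y_j,
\]
which is a sum of $mn$ independent, mean-zero, bounded random variables with total variance at most $p\|x\|_2^2\|y\|_2^2$ and per-summand magnitude at most $\|x\|_\infty\|y\|_\infty$. Bernstein's inequality then yields, for any fixed $x,y$ with $\|x\|_2=\|y\|_2=1$,
\[
\Pr\!\bigl[|S(x,y)|\geq t\bigr]\ \leq\ 2\exp\!\Bigl(-c\,\min\bigl(t^2/p,\ t/(\|x\|_\infty\|y\|_\infty)\bigr)\Bigr)
\]
for an absolute constant $c>0$. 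Equivalently, one can group by rows and write $S=\sum_{i=1}^m x_i S_i$ with $S_i:=\sum_j(\xi_{ij}-p)y_j$ independent and mean-zero, and apply Bernstein twice (inside each row, then across rows), which gives the same qualitative tail.

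To upgrade this pointwise bound to a uniform one, a naive $\epsilon$-net on the two spheres has entropy $\Theta(m+n)$ and would only recover the matrix-Bernstein rate $\sqrt{np}$, which is \emph{larger} than the claimed $(mn)^{1/4}\sqrt{p}$ whenever $n\geq m$. I would instead split $x=x^{\mathrm{L}}+x^{\mathrm{H}}$ at threshold $\tau_x$ (coordinate heavy iff $|x_i|>\tau_x$) and likewise $y=y^{\mathrm{L}}+y^{\mathrm{H}}$ at $\tau_y$, chosen so that $\tau_x\tau_y \sim (mn)^{-1/2}$ up to logarithmic factors. On the \emph{light--light} block the $\ell_\infty$ product matches the variance term and Bernstein gives a tail of $\exp(-c\sqrt{mn}/\log^{O(1)} n)$, enough to absorb a net on those restricted spheres. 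Each of the three remaining blocks involves a vector with at most $\tau_x^{-2}$ or $\tau_y^{-2}$ nonzero coordinates, so the contributing part of $\Omega$ lives in a small restricted submatrix, whose cardinality concentrates around its mean by a Chernoff bound on the row/column marginals of $\Omega$ (which are Binomial and concentrate near $np$ or $mp$ simultaneously with probability $1-1/\poly(n)$ under $p\geq C\log n/m$). Pairing this with Cauchy--Schwarz on the heavy coordinates then gives the desired contribution.

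The main obstacle is exactly this uniformization step. Getting $(mn)^{1/4}\sqrt{p}$ (rather than $\sqrt{np}$) is sharper than the matrix-Bernstein bound on $\|\Xi - pJ\|_{\mathrm{op}}$, so the zero-sum constraint has to be used beyond mere mean-centering. The threshold choice $\tau_x\tau_y\sim (mn)^{-1/2}$ is the natural balance that simultaneously (i) equates the Bernstein max-term to the variance term at $t=(mn)^{1/4}\sqrt{p}$ and (ii) matches the sphere-entropy cost on the light block to the Chernoff cost on the heavy block. Verifying that all pieces survive a single union bound with failure probability $1/\poly(n)$, and that the peeling is consistent with the log-factor losses from the net, is the bookkeeping-heavy step.
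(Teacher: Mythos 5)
The paper does not prove this lemma; it is cited as Lemma C.5 of \cite{jns13} (ultimately a Keshavan--Montanari--Oh style discrepancy bound for sparse random bipartite graphs), so there is no in-paper argument to compare against. On your proposal: the centering via $\sum_i x_i=0$, the observation that a bare net-plus-Bernstein argument cannot reach $(mn)^{1/4}\sqrt p$, and the instinct to split heavy/light are all correct. The gap is in the light--light block and it is structural, not bookkeeping. At $t=C(mn)^{1/4}\sqrt p$ the variance term of Bernstein gives a pointwise tail $\exp(-cC^2\sqrt{mn})$, but any net of the light slices of $S^{m-1}\times S^{n-1}$ still has log-cardinality $\Theta(m+n)$: the caps $\tau_x\tau_y\sim(mn)^{-1/2}$ force $\tau_y\gtrsim n^{-1/2}$, which does not shrink the $y$-side entropy when $n\ge m$. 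Since $\sqrt{mn}\le(m+n)/2$ by AM--GM (equality only when $m=n$), and the deficit is driven by the $n$ side, the union bound never closes unless $n\le O(m)$.

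The established route dispenses with a net entirely. It treats $\Omega$ as a sparse random bipartite graph, uses Chernoff plus a union bound over vertex subsets to show that the edge count $e(S,T)$ between every pair $(S,T)$ satisfies a discrepancy estimate with probability $1-1/\poly(n)$, and then a deterministic Feige--Ofek peeling argument converts those discrepancy bounds into a uniform control of $\sum_{(i,j)\in\Omega}x_iy_j$ over all zero-sum $x$ and all $y$. That combinatorial counting is the mechanism your sketch is missing; Bernstein on a net cannot substitute for it. You may also want to sanity-check the claim for $n\gg m$: taking $x\propto(e_1-e_2)$ and $y$ aligned with the difference of the first two rows of the $\{0,1\}$-valued matrix of $\Omega$ gives $\sum_{(i,j)\in\Omega}x_iy_j\gtrsim\sqrt{np}\,\|x\|_2\|y\|_2$ with high probability, so the stated $(mn)^{1/4}\sqrt p$ bound already forces $n\le O(m)$ --- which is also the only regime in which your union bound could close, so the two observations are two faces of the same restriction.
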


The following lemma bounds the spectral gap between two matrix inverses in terms of the larger of their spectral norm squared, times the spectral norm of their differences.
\begin{lemma}[\cite{w73}, see Theorem 1.1 in \cite{mz10} as an example]\label{lem:perturb}
For two conforming real matrices $A$ and $B$, 
\begin{align*}
\| A^\dagger - B^\dagger \| \leq  2 \cdot \max \{ \| A^\dagger \|^2, \| B^\dagger \|^2 \} \cdot \| A - B \|
\end{align*}
\end{lemma}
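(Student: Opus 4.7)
The plan is to deploy Wedin's decomposition identity for pseudo-inverse differences and then bound each resulting term with sub-multiplicativity of the spectral norm. The starting point is the algebraic identity
\begin{align*}
A^\dagger - B^\dagger
= & ~ -A^\dagger (A-B) B^\dagger \\
& ~ + (I - A^\dagger A)(A-B)^\top (B^\dagger)^\top B^\dagger \\
& ~ + A^\dagger (A^\dagger)^\top (A-B)^\top (I - BB^\dagger),
\end{align*}
which I would verify directly from the four Moore-Penrose axioms ($A A^\dagger A = A$, $A^\dagger A A^\dagger = A^\dagger$, and the symmetry of $AA^\dagger$ and $A^\dagger A$). The verification amounts to multiplying out the three right-hand-side terms, combining the projections $I-A^\dagger A$ with the $(A-B)^\top$ factor, and canceling appropriately against pieces of $A^\dagger A A^\dagger = A^\dagger$ and $B^\dagger B B^\dagger = B^\dagger$.

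Next, I would take spectral norms and apply the triangle inequality, using the two basic facts that $\|I - A^\dagger A\|\leq 1$ and $\|I - BB^\dagger\|\leq 1$ (since both are orthogonal projections). The three summands contribute
\begin{align*}
\|A^\dagger - B^\dagger\| \leq \|A^\dagger\|\|B^\dagger\|\|A-B\| + \|B^\dagger\|^2 \|A-B\| + \|A^\dagger\|^2\|A-B\|.
\end{align*}
An application of AM-GM gives $\|A^\dagger\|\|B^\dagger\| \leq \max\{\|A^\dagger\|^2, \|B^\dagger\|^2\}$, so each of the three terms is at most $\max\{\|A^\dagger\|^2, \|B^\dagger\|^2\}\|A-B\|$, yielding a constant of $3$.

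To sharpen the constant to $2$ as stated, I would exploit extra structure present in our use case. When $A$ and $B$ have the same column space (or the same row space), one of the projection terms $(I-A^\dagger A)$ or $(I-BB^\dagger)$ annihilates the relevant factor and drops out, leaving only two surviving terms. More generally, the two projection-based terms cannot both be simultaneously active at maximum strength: a careful application of the Pythagorean splitting on the complementary subspaces lets one replace the naive sum of two projection contributions by a single $\max$, again giving the constant $2$. In the invertible case the argument collapses to the elementary identity $A^{-1}-B^{-1}=-A^{-1}(A-B)B^{-1}$, which already gives constant $1$, consistent with the sharper bound.

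The main obstacle is the algebraic verification of Wedin's identity and the careful handling of the projection terms to obtain the claimed constant $2$ rather than the naive $3$; this is precisely where the cited perturbation literature \cite{w73,mz10} does the technical work, so in our write-up I would invoke their result directly and focus our novel contribution on how this perturbation bound interacts with the incoherence analysis downstream.
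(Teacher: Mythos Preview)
The paper does not prove this lemma at all: it appears in the ``Tools from Prior Works'' subsection and is simply quoted from \cite{w73} (with the constant-$2$ form attributed to \cite{mz10}), so there is no in-paper argument to compare against. Your proposal to sketch Wedin's three-term decomposition and bound each term by sub-multiplicativity is the standard route and yields the constant $3$ correctly; your final paragraph, which concedes that the sharpening to $2$ is where the cited references do the real work and that you would invoke them directly, is exactly the paper's stance.

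One caution: the passage where you claim the two projection terms ``cannot both be simultaneously active at maximum strength'' via a ``Pythagorean splitting'' is not a proof, and the actual route to the constant $2$ in \cite{mz10} is more delicate than this heuristic suggests. Since both you and the paper ultimately defer to the cited literature for that refinement, this is not a gap in the context of the present work, but you should not present the Pythagorean remark as if it were the argument.
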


The next claim bounds the inner product between any unit vector and rows of incoherent matrix.

\begin{claim}\label{cla:sum_Ux_4th_power}
Suppose $U_t$ is $\mu_2$ incoherent, and $U_*$ is $\mu$ incoherent. For any unit vectors $x,y$ we have
\begin{itemize}
    \item Part 1. $\sum_{i \in [m]} \langle x, U_{t,i} \rangle^4 \leq \mu_2^2 k / m$
    \item Part 2. $\sum_{i\in [m]} \langle y, U_{*,i}\rangle^2 \cdot \langle x, U_{t,i} \rangle^2 \leq \mu^2 k / m $.
\end{itemize}
\end{claim}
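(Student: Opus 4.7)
The plan is straightforward: both parts are clean applications of Cauchy--Schwarz together with the orthonormality of the basis. The key observations are that (a) the incoherence hypothesis gives a uniform upper bound on $\|U_{t,i}\|_2$ and $\|U_{*,i}\|_2$, so Cauchy--Schwarz translates this into a uniform bound on each inner product with a unit vector, and (b) if $U$ has orthonormal columns then for any unit vector $z$ we have $\sum_{i=1}^m \langle z, U_i\rangle^2 = \|Uz\|_2^2 = z^\top U^\top U z = \|z\|_2^2 = 1$.

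For Part 1, I would factor one copy of $\langle x, U_{t,i}\rangle^2$ out of each summand via the trivial inequality $\sum_i a_i^2 \leq (\max_i a_i)\cdot \sum_i a_i$ applied to $a_i = \langle x, U_{t,i}\rangle^2$. The maximum is controlled by
\begin{align*}
\langle x, U_{t,i}\rangle^2 \leq \|x\|_2^2 \|U_{t,i}\|_2^2 \leq \frac{\mu_2^2 k}{m}
\end{align*}
using Cauchy--Schwarz, $\|x\|_2 = 1$, and Definition~\ref{def:U_is_mu} applied to $U_t$. The remaining sum $\sum_i \langle x, U_{t,i}\rangle^2$ equals $\|U_t x\|_2^2 = 1$ by orthonormality of the columns of $U_t$. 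Multiplying gives the desired $\mu_2^2 k/m$.

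For Part 2, the same strategy works but now I extract the $U_*$ factor rather than the $U_t$ factor, because the target bound is $\mu^2 k/m$ (governed by the incoherence of $U_*$). Concretely,
\begin{align*}
\sum_{i\in[m]} \langle y, U_{*,i}\rangle^2 \langle x, U_{t,i}\rangle^2 \leq \Bigl(\max_{i\in[m]} \langle y, U_{*,i}\rangle^2\Bigr) \cdot \sum_{i\in[m]} \langle x, U_{t,i}\rangle^2.
\end{align*}
Cauchy--Schwarz with $\|y\|_2=1$ together with $\mu$-incoherence of $U_*$ bounds the max by $\mu^2 k/m$, and the remaining sum is again $\|U_t x\|_2^2 = 1$.

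There is no real obstacle here; the only thing to be careful about is picking the correct factor to extract in Part 2 so that the bound involves $\mu$ (from $U_*$) rather than $\mu_2$ (from $U_t$). Everything else is one-line norm algebra and the orthonormality identity $U^\top U = I_k$.
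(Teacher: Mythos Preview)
Your proposal is correct and matches the paper's own proof essentially line for line: both parts pull out the factor with the desired incoherence parameter via $\sum_i a_i b_i \le (\max_i a_i)\sum_i b_i$, bound the max by Cauchy--Schwarz plus incoherence, and evaluate the remaining sum using $U_t^\top U_t = I_k$. The only cosmetic difference is that the paper writes $\sum_i \langle x, U_{t,i}\rangle^2 = x^\top U_t^\top U_t x$ whereas you write it as $\|U_t x\|_2^2$; these are the same quantity.
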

\begin{proof}

{\bf Proof of Part 1.} Note that $\| U_{t,i} \|_2 \leq 1$, so
\begin{align}\label{eq:x_Uti}
\langle x , U_{t,i} \rangle = & ~ \| U_{t,i} \|_2 \cdot \langle x, U_{t,i} / \| U_{t,i} \|_2 \rangle \notag\\
\leq & ~  \| U_{t,i} \|_2,
\end{align} 
where the second step follows from the inner product between two unit vectors is at most 1. 

Thus, based on Eq.~\eqref{eq:x_Uti},
\begin{align}\label{eq:x_dot_U_t_i}
\langle x , U_{t,i} \rangle^2 
\leq & ~ \| U_{t,i} \|_2^2 \notag \\
\leq & ~ \mu_2^2 k / m.
\end{align}
where the last step follows from $U_t$ is $\mu_2$ incoherent.

We have
\begin{align*}
 \sum_{i=1}^m \langle x, U_{t,i} \rangle^4    
     \leq & ~    \max_{i\in [m]} \langle x, U_{t,i} \rangle^2  \sum_{i=1}^m \langle x, U_{t,i} \rangle^2    \notag\\
     \leq & ~  \frac{\mu_2^2 k }{m} \sum_{i=1}^m \langle x, U_{t,i} \rangle^2  \\
     = & ~  \frac{\mu_2^2 k }{m}  x^\top U_t^\top U_t x \\
    = & ~  \frac{\mu_2^2 k }{m} x^\top x \\
    = & ~ \frac{\mu_2^2 k }{m},
\end{align*}
where the first step follows from $ \sum_{i \in [m]} a_i b_i \leq \max_{i \in [m]} a_i \sum_{i \in [m]} b_i$, the second step follows from Eq.~\eqref{eq:x_dot_U_t_i}, the third step follows from $U_t^\top U_t = \sum_{i=1}^m U_{t,i}^\top U_{t,i}$, the fourth step follows from $U_t^\top U_t = I_k$, and the last step follows from the Lemma statement that $x$ is a unit vector.

{\bf Proof of Part 2.} Similarly as the proof of part 1. We know that
\begin{align}\label{eq:y_dot_U_*_i}
\langle y, U_{*,i} \rangle^2 \leq \| U_{*,i} \|_2^2 \leq \mu^2 k / m.
\end{align}
We can show
\begin{align*}  
\sum_{i=1}^m \langle x, U_{t,i} \rangle^2 \cdot \langle y, U_{*,i}\rangle^2 
\leq & ~  \max_{i \in [m]} \langle y, U_{*,i} \rangle^2 \cdot \sum_{i=1}^m \langle x, U_{t,i} \rangle^2 
  \notag\\
  \leq & ~  \frac{\mu^2 k}{m} \cdot \sum_{i=1}^m \langle x, U_{t,i} \rangle^2 
  \notag\\
= & ~ \frac{\mu^2 k}{m} \cdot x^\top U_t^\top U_t x   \notag\\
= & ~ \frac{\mu^2 k}{m}  x^\top x  \notag\\
= & ~ \frac{\mu^2 k}{ m},
\end{align*}
where the first step follows from $\sum_{i \in [m]} a_i b_i \leq \max_{i \in [m]} a_i \sum_{i \in [m]} b_i$, the second step follows from Eq.~\eqref{eq:y_dot_U_*_i}, the third step follows from $U_t^\top U_t = \sum_{i=1}^m U_{t,i}^\top U_{t,i}$, the fourth step follows from $U_t^\top U_t = I_k$, and the last step follows from the Lemma statement that $x$ is a unit vector.
\end{proof}

\subsection{Probability Tools}\label{sub:app_preli:probability}

We introduce several well-known probability inequalities. 

\begin{lemma}[Chernoff bound]\label{lem:chernoff_bound}
Let $X = \sum_{i=1}^n X_i$, where $X_i=1$ with probability $p_i$ and $X_i=0$  with probability $1-p_i$, and all $X_i$ are independent. Let $\mu = \E[X] = \sum_{i=1}^n p_i$. Then
\begin{itemize}
    \item $\Pr[ X \geq (1+\delta) \mu ] \leq \exp(-\delta^2 \mu /3)$, $\forall \delta >0$;
    \item $\Pr[ X \leq (1-\delta) \mu] \leq \exp(-\delta^2 \mu /2) $, $\forall \delta \in (0,1)$.
\end{itemize}
\end{lemma}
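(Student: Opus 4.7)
The plan is to establish both tail inequalities via the standard moment generating function (MGF) approach, sometimes called the Chernoff--Bernstein method. For the upper tail, I would fix an arbitrary parameter $t > 0$ and apply Markov's inequality to the nonnegative random variable $e^{tX}$, obtaining
\begin{align*}
\Pr[X \geq (1+\delta)\mu] = \Pr[e^{tX} \geq e^{t(1+\delta)\mu}] \leq e^{-t(1+\delta)\mu} \cdot \E[e^{tX}].
\end{align*}

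Next, I would exploit independence of the $X_i$ to factor the MGF as $\E[e^{tX}] = \prod_{i=1}^n \E[e^{tX_i}] = \prod_{i=1}^n (1 - p_i + p_i e^t)$. Applying the elementary inequality $1 + x \leq e^x$ with $x = p_i(e^t - 1)$ to each factor gives $\E[e^{tX}] \leq \exp((e^t - 1)\mu)$. Plugging this back in yields $\Pr[X \geq (1+\delta)\mu] \leq \exp((e^t - 1)\mu - t(1+\delta)\mu)$, and optimizing at $t = \ln(1+\delta)$ produces the classical closed form $\bigl(e^\delta/(1+\delta)^{1+\delta}\bigr)^\mu$. The lower tail follows from a completely parallel argument with $t < 0$, which after the same MGF calculation and optimization at $t = \ln(1-\delta)$ yields $\bigl(e^{-\delta}/(1-\delta)^{1-\delta}\bigr)^\mu$.

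To convert these exact expressions into the cleaner $\exp(-\delta^2 \mu/3)$ and $\exp(-\delta^2 \mu/2)$ forms stated in the lemma, I would invoke two elementary real-variable inequalities: $(1+\delta)\ln(1+\delta) - \delta \geq \delta^2/3$ for all $\delta > 0$ (this is what actually holds only for $\delta \in (0,1]$; for $\delta > 1$ one can use a separate crude bound, but the usual clean statement restricts to the relevant regime), and $(1-\delta)\ln(1-\delta) + \delta \geq \delta^2/2$ for all $\delta \in (0,1)$. Each is verified by a short derivative or Taylor-expansion argument applied to the corresponding function on the left.

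The proof has no substantive obstacle; this is a textbook result, essentially identical to the presentation in Mitzenmacher--Upfal or Motwani--Raghavan. The only mildly delicate points are \emph{(i)} verifying the two scalar inequalities with the exact constants $1/3$ and $1/2$, and \emph{(ii)} keeping the direction of Markov's inequality correct when $t < 0$ in the lower-tail step (apply Markov to $e^{tX}$ viewed as a nonnegative variable, with the event rewritten as $\{e^{tX} \geq e^{t(1-\delta)\mu}\}$). Since the statement is entirely standard and used only as a black-box probability tool in later sections, in the final writeup I would most likely just cite a classical reference rather than reproduce the short calculation.
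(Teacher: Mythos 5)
Your proposal is the standard MGF/Markov argument for Chernoff bounds, and it is correct; the paper itself gives no proof of this lemma --- it is stated as a black-box probability tool (alongside Bernstein's inequality in the same subsection), so there is no ``paper's proof'' to diverge from. Two small remarks. First, you are right to flag the upper-tail range: the scalar inequality $(1+\delta)\ln(1+\delta)-\delta \geq \delta^2/3$ in fact fails for $\delta$ larger than roughly $1.8$ (e.g.\ at $\delta = 2$ the left side is $3\ln 3 - 2 \approx 1.296$ while the right side is $4/3$), so the paper's quantifier ``$\forall \delta > 0$'' in the first bullet is a common but technically imprecise shorthand; the clean statement restricts to $\delta \in (0,1]$, or one substitutes $\exp(-\delta\mu/3)$ for $\delta > 1$. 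Second, your final decision to cite a standard reference rather than reproduce the calculation matches exactly what the paper does, which is appropriate since the lemma plays no role in the paper beyond being invoked once in a union-bound/concentration step.
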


\begin{lemma}[Bernstein inequality]\label{lem:bernstein_inequality}
Let $X_1, \cdots, X_n$ be independent zero-mean random variables (i.e., $\E[X_i] = 0$ for all $i \in [n]$). Suppose that $|X_i| \leq M$ almost surely, for all $i \in [n]$. 

Then, for any positive $t$, we have
\begin{align*}
\Pr[ \sum_{i=1}^n X_i > t ] \leq \exp( -  \frac{t^2/2}{ \sum_{j=1}^n \E[X_j^2] + M t/3 } ).
\end{align*}
\end{lemma}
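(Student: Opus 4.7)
The plan is to apply the standard exponential moment (Chernoff) method. First, for any $\lambda > 0$, Markov's inequality applied to $\exp(\lambda \sum_i X_i)$ gives
\begin{align*}
\Pr\Big[\sum_{i=1}^n X_i > t\Big] \leq e^{-\lambda t} \, \E\Big[\exp\Big(\lambda \sum_{i=1}^n X_i\Big)\Big] = e^{-\lambda t} \prod_{i=1}^n \E[\exp(\lambda X_i)],
\end{align*}
where the factorization uses independence of the $X_i$'s. The task then reduces to bounding the moment generating function of each individual $X_i$ and then optimizing over $\lambda$.

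Next, I would bound $\E[\exp(\lambda X_i)]$ using the Taylor expansion of the exponential. Writing $e^{\lambda X_i} = 1 + \lambda X_i + \sum_{k \geq 2} (\lambda X_i)^k / k!$ and using the zero-mean assumption $\E[X_i] = 0$, as well as $|X_i|^k \leq M^{k-2} X_i^2$ for $k \geq 2$ (which follows from $|X_i| \leq M$), I would get
\begin{align*}
\E[\exp(\lambda X_i)] \leq 1 + \E[X_i^2] \sum_{k \geq 2} \frac{\lambda^k M^{k-2}}{k!}.
\end{align*}
Using the elementary inequality $k! \geq 2 \cdot 3^{k-2}$ for $k \geq 2$ and summing the resulting geometric series (valid for $\lambda M < 3$), this tail is at most $\frac{\lambda^2 \E[X_i^2]/2}{1 - \lambda M/3}$. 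Then $1 + x \leq e^x$ converts this to the exponential form
\begin{align*}
\E[\exp(\lambda X_i)] \leq \exp\Big( \frac{\lambda^2 \E[X_i^2]/2}{1 - \lambda M/3} \Big).
\end{align*}
Taking the product over $i$ and substituting $\sigma^2 := \sum_{i=1}^n \E[X_i^2]$ yields
\begin{align*}
\Pr\Big[\sum_{i=1}^n X_i > t\Big] \leq \exp\Big( -\lambda t + \frac{\lambda^2 \sigma^2 / 2}{1 - \lambda M/3} \Big).
\end{align*}

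Finally, I would optimize the right-hand side over $\lambda \in (0, 3/M)$. The standard choice $\lambda = \frac{t}{\sigma^2 + Mt/3}$ (which satisfies the constraint since it gives $\lambda M/3 < 1$) yields, after simplification, $-\lambda t + \frac{\lambda^2 \sigma^2/2}{1 - \lambda M/3} = -\frac{t^2/2}{\sigma^2 + Mt/3}$, matching the target bound exactly. The main technical obstacle is the MGF bound, specifically the combinatorial factorial estimate $k! \geq 2 \cdot 3^{k-2}$ that converts the tail of the Taylor series into a closed-form geometric sum; everything else is mechanical Chernoff calculus. No special structure of the problem is needed beyond independence, boundedness, and zero mean.
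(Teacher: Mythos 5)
Your proof is correct and follows the standard exponential-moment (Chernoff) derivation of Bernstein's inequality. The paper itself does not prove this lemma --- it is stated without proof in the ``Probability Tools'' subsection as a well-known classical result --- so there is no paper proof to compare against. Your derivation is a valid and essentially canonical one: the MGF bound via $|X_i|^k \leq M^{k-2}X_i^2$ and $k! \geq 2\cdot 3^{k-2}$, and the closed-form optimizer $\lambda = t/(\sigma^2 + Mt/3)$, are exactly the textbook route, and your final algebra checks out ($\lambda M/3 = \frac{Mt/3}{\sigma^2+Mt/3}$ gives $1-\lambda M/3 = \frac{\sigma^2}{\sigma^2+Mt/3}$, and the substitution collapses to $-t^2/(2(\sigma^2+Mt/3))$ as claimed).
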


\section{High Accuracy Weighted Regression Solver}
\label{sec:high_accuracy_solver}

We demonstrate our high accuracy, iterative solver for weighted multiple response regression in this section.

\subsection{Dense, High Accuracy and Iterative Solver}
\label{sub:preli:dense_high_accuracy}

Here, we mainly focus on the {\sf SRHT} matrix, the algorithm that processes it, and its properties. We start by introducing the definition of the {\sf SRHT} matrix \cite{ldfu13}.

\begin{definition}[Subsampled randomized Hadamard transform ({\sf SRHT})]
\label{def:srht}
The \textsf{SRHT} matrix $S\in\R^{m \times n}$ is defined as $S:=\frac{1}{\sqrt{m}} P H D$, where each row of matrix $P \in \{0, 1\}^{m \times n}$ contains exactly one $1$ at a random position. $H$ is the $n \times n$ Hadamard matrix. $D$ is an $n \times n$ diagonal matrix with each diagonal entry being a value in $\{-1, +1\}$ with equal probability.
\end{definition}

\begin{remark}\label{rem:SA_compute_time}
For an $n\times d$ matrix $A$, $SA$ can be computed in time $O(nd\log n)$.
\end{remark}

The following lemma states that a suitable chosen {\sf SRHT} provides the so-called subspace embedding property.

\begin{lemma}[\cite{t11}]
\label{lem:iter_regression}
Let $S\in \R^{m\times n}$ be an {\sf SRHT} matrix as in Def.~\ref{def:srht}. Let $\epsilon_{\ose},\delta_{\ose}\in (0,1)$ be parameters. 
 Then for any integer $d\leq n$, if $m_{\mathrm{sk}}=O(\epsilon_{\ose}^{-2}d\log(n/\delta_{\ose}))$, then matrix $S$ is an $(\epsilon_{\ose},\delta_{\ose})$ oblivious subspace embedding, i.e., for any fixed orthonormal basis $U\in \R^{n\times d}$ with probability at least $1-\delta_{\ose}$, the singular values of $SU$ lie in $[1-\epsilon_{\ose}, 1+\epsilon_{\ose}]$.
\end{lemma}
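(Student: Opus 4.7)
The plan is to prove this standard {\sf SRHT} subspace embedding property via the classical two-stage argument: a ``flattening'' step controlling the row norms of $HDU$ using the random sign matrix $D$ together with the Hadamard transform $H$, followed by a matrix concentration argument that handles the uniform row subsampling induced by $P$. The goal is to bound $\|(SU)^\top(SU) - I_d\|$ in operator norm; by Weyl's inequality (Lemma~\ref{lem:weyl}) applied to the Gram matrix, a bound of $O(\epsilon_{\ose})$ here directly yields the claimed singular value interval for $SU$ up to constants that can be absorbed into $\epsilon_{\ose}$.

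First, I would fix the orthonormal basis $U \in \R^{n\times d}$ and show that with probability at least $1 - \delta_{\ose}/2$ over $D$, every row of $HDU$ satisfies $\|(HDU)_{i,*}\|_2 \le C\sqrt{d \log(n/\delta_{\ose})/n}$. Writing
\begin{align*}
(HDU)_{i,*} = \sum_{j=1}^n H_{ij} D_{jj} U_{j,*}
\end{align*}
as a Rademacher sum of vectors with $\|H_{ij} U_{j,*}\|_2 = \|U_{j,*}\|_2/\sqrt{n}$, I would apply a vector Hoeffding (or Hanson--Wright) bound per row and union bound over $i\in[n]$, using the key identity $\sum_j \|U_{j,*}\|_2^2 = d$ coming from the orthonormality of $U$.

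Second, conditioned on the flattening event, I would invoke Tropp's matrix Chernoff inequality on the i.i.d.\ random PSD matrices $X_\ell := \frac{n}{m}(HDU)_{s_\ell,*}^\top (HDU)_{s_\ell,*}$ for $\ell \in [m]$, where $s_\ell \in [n]$ is the column index of the $1$ in the $\ell$-th row of $P$. Since the (normalized) Hadamard matrix is orthogonal and $D^2 = I$, we have $(HDU)^\top(HDU) = I_d$, so $\E[X_\ell] = \tfrac{1}{m}I_d$ and $\sum_\ell X_\ell = (SU)^\top(SU)$. The flattening bound yields $\|X_\ell\| \le Cd\log(n/\delta_{\ose})/m$, and matrix Chernoff then gives
\begin{align*}
\Pr\bigl[\|(SU)^\top(SU) - I_d\| > \epsilon_{\ose}\bigr] \le \delta_{\ose}/2
\end{align*}
provided $m \ge C\epsilon_{\ose}^{-2} d\log(n/\delta_{\ose})$. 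A union bound over the two bad events completes the argument, and translating the Gram-matrix bound into a singular value bound on $SU$ is routine.

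The main technical obstacle is obtaining the tight $d\log(n/\delta_{\ose})$ sample complexity without incurring an extra $\log d$ factor: a naive strategy based on an $\epsilon$-net on the unit sphere of $\mathrm{range}(U)$ combined with scalar Bernstein would be suboptimal. The correct approach is to apply Tropp's matrix concentration inequality directly to the PSD sum, which controls top and bottom eigenvalues simultaneously at the cost of only a $\log d$ from the intrinsic dimension, absorbed by the $\log(n/\delta_{\ose})$ already present. Since the lemma is credited to \cite{t11}, I would cite the precise statement of matrix Chernoff rather than reproving standard matrix concentration machinery from scratch.
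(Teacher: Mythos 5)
The paper does not prove this lemma --- it is imported as a black box from \cite{t11}, and your two-stage sketch (flatten the rows of $HDU$ via a Rademacher/Hoeffding bound over $D$ with a union bound over $i\in[n]$, then apply Tropp's matrix Chernoff to the i.i.d.\ subsampled rank-one PSD sum) correctly reproduces the argument in exactly that reference. The one blemish is a normalization mismatch: you assert $(HDU)^\top(HDU)=I_d$ (which requires the normalized Hadamard $H$ with $H^\top H = I_n$) while simultaneously using $X_\ell = \tfrac{n}{m}(HDU)_{s_\ell,*}^\top(HDU)_{s_\ell,*}$ and $\sum_\ell X_\ell=(SU)^\top(SU)$ (which, under the paper's $S=\tfrac{1}{\sqrt m}PHD$, requires the unnormalized $\pm 1$ Hadamard with $H^\top H = nI_n$); fix one convention, e.g.\ set $G := \tfrac{1}{\sqrt n}HDU$ so that $G^\top G = I_d$, $SU = \sqrt{\tfrac{n}{m}}\,PG$, and $X_\ell := \tfrac{n}{m}\,G_{s_\ell,*}^\top G_{s_\ell,*}$, after which $\E[X_\ell]=\tfrac{1}{m}I_d$ and $\sum_\ell X_\ell=(SU)^\top(SU)$ both hold. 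This is bookkeeping only and does not affect the validity of the approach.
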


{\sf SRHT} can be further utilized for a high accuracy regression solver, as presented by the following lemma.

\begin{lemma}[Dense and high accuracy regression,~\cite{amt10}]\label{lem:alg_takes_time}
Given a matrix $A\in \R^{n\times d}$ and a vector $b\in \R^n$, let $\epsilon_1 \in (0, 0.1)$ and $\delta_1 \in (0,0.1)$, there exists an algorithm that takes time 
\begin{align*}
    O((nd\log n+d^3\log^2(n/\delta_1))\log(1/\epsilon_1))
\end{align*}
and outputs $x' \in \R^d$ such that
\begin{align*}
\| A x' - b \|_2 \leq (1+\epsilon_1) \min_{x \in \R^d} \| A x - b \|_2
\end{align*}
holds with probability $1-\delta_1$.
\end{lemma}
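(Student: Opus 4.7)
The plan is to follow the classical \emph{sketch-and-precondition} paradigm (as in Blendenpik / LSRN). First, I would draw an {\sf SRHT} matrix $S \in \R^{m_{\mathrm{sk}} \times n}$ as in Definition~\ref{def:srht} with $m_{\mathrm{sk}} = O(d\log(n/\delta_1))$ rows, chosen so that Lemma~\ref{lem:iter_regression} (applied with constant distortion, say $\epsilon_{\ose} = 1/2$) guarantees that with probability at least $1-\delta_1$, every singular value of $SU$ lies in $[1/2, 3/2]$, where $U$ is an orthonormal basis for $\mathrm{range}(A)$. By Remark~\ref{rem:SA_compute_time}, the sketched matrix $SA$ can be computed in $O(nd\log n)$ time.

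Next, I would compute a (thin) QR decomposition $SA = QR$ with $R \in \R^{d\times d}$ upper triangular in $O(m_{\mathrm{sk}} d^2) = O(d^3 \log(n/\delta_1))$ time by standard Householder QR. The matrix $R$ will serve as a right preconditioner: writing $A = U \Sigma V^\top$ for the thin SVD of $A$, the fact that $SU$ has singular values in $[1/2, 3/2]$ together with $SA = QR$ implies that $AR^{-1}$ has all singular values in a constant range $[c_1, c_2]$, so $\kappa(AR^{-1}) = O(1)$. Setting $y = Rx$, I then solve the preconditioned problem $\min_y \|AR^{-1} y - b\|_2$ by a Krylov-subspace iterative method, e.g.\ LSQR or preconditioned conjugate gradients on the normal equations. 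Because the condition number of the iteration matrix is $O(1)$, the classical convergence rate gives that after $T = O(\log(1/\epsilon_1))$ iterations the residual error satisfies $\|AR^{-1} y_T - b\|_2 \leq (1+\epsilon_1) \min_y \|AR^{-1} y - b\|_2 = (1+\epsilon_1) \min_x \|Ax-b\|_2$. Setting $x' := R^{-1} y_T$ yields the required output.

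For the runtime, each inner iteration requires two matrix-vector products with $A$ and $A^\top$ (costing $O(nd)$) and two triangular solves with $R$ and $R^\top$ (costing $O(d^2)$); over $T = O(\log(1/\epsilon_1))$ iterations this totals $O((nd + d^2)\log(1/\epsilon_1))$, and adding the one-time sketch and QR costs yields the bound $O((nd\log n + d^3 \log^2(n/\delta_1))\log(1/\epsilon_1))$ stated in the lemma (absorbing an extra $\log(1/\epsilon_1)$ factor into the sketching/QR terms). The main obstacle I expect is the iterative-solver convergence analysis: one must carefully argue that the geometric rate of LSQR/PCG depends only on $\kappa(AR^{-1})$ (and not, say, on the spectrum of the full normal equations of $A$), and translate a bound on the iterate error $\|y_T - y^*\|_2$ into the $(1+\epsilon_1)$ multiplicative bound on the residual cost $\|Ax' - b\|_2$. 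This last conversion is standard but requires using the preconditioned normal equations structure together with $\kappa(AR^{-1}) = O(1)$, after which the remaining arithmetic and probability accounting is routine.
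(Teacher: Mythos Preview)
Your proposal follows the same sketch-and-precondition scheme as the paper, with one small variation (you invoke LSQR/PCG rather than the explicit Richardson step $x_{t+1} \gets x_t + R^\top A^\top(b-ARx_t)$ the paper uses) and one missing ingredient that is exactly what resolves the obstacle you flag at the end. In the paper's Algorithm~\ref{alg:iter_regression}, after forming $SA = QR^{-1}$ they first solve the \emph{sketched} regression $x_0 = \arg\min_x \|SARx - Sb\|_2$ (cost $O(m_{\mathrm{sk}} d^2)$) and use this as a warm start. Because $S$ is a constant-distortion subspace embedding, this gives $\|ARx_0 - b\|_2 \le (1+\epsilon_{\ose})\,\OPT$, and hence by the Pythagorean theorem $\|AR(x_0-x^*)\|_2^2 \le O(\epsilon_{\ose})\,\OPT^2$. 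Then $T=O(\log(1/\epsilon_1))$ geometric contractions yield $\|AR(x_T-x^*)\|_2 \le O(\epsilon_1)\cdot\|AR(x_0-x^*)\|_2 \le O(\epsilon_1\sqrt{\epsilon_{\ose}})\,\OPT$, and one more application of Pythagoras gives the $(1+\epsilon_1)$ residual bound.

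Without that warm start your proposed analysis does not quite close. Starting LSQR/PCG from $y_0=0$, the standard bound reads $\|B(y_T-y^*)\|_2 \le c^T \|By^*\|_2$ with $B=AR^{-1}$ and some constant $c<1$; but $\|By^*\|_2$ can be as large as $\|b\|_2$, not $\OPT$. Converting this to $\|By_T-b\|_2 \le (1+\epsilon_1)\,\OPT$ therefore requires $T=O\!\left(\log\!\big(\|b\|_2/(\epsilon_1\,\OPT)\big)\right)$ iterations, not the claimed $O(\log(1/\epsilon_1))$. The fix is cheap and you already have all the pieces: once $SA$ and its QR factor are in hand, solve the $m_{\mathrm{sk}}\times d$ sketched least-squares problem to produce $y_0$, then run your iterative solver from there. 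With that addition, the rest of your outline matches the paper's proof.
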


The above result also extends to weighted regression, measured in terms of the $\|v \|_w^2=\sum_{i=1}^n w_i v_i^2$ norm. For the sake of completeness, we include the algorithm and a proof here.

\begin{algorithm}[!ht]
\caption{High precision solver}
\label{alg:iter_regression}
\begin{algorithmic}[1]
\Procedure{HighPrecisionReg}{$A\in \R^{n\times d}, b\in \R^n,n,d, \epsilon_1 \in (0,1), \delta_1 \in (0,1)$} \Comment{Lemma~\ref{lem:alg_takes_time}}
\State $T_1\gets  \Theta( \log(1/\epsilon_1) )$  
\State $\epsilon_{\ose}\gets 0.01$
\State $\delta_{\ose} \gets \delta_1$
\State $m_{\mathrm{sk}} \gets \Theta(  \epsilon_{\ose}^{-2}\cdot d \cdot \log^2(n/\delta_{\ose}) )$
\State Let $S\in \R^{m_{\mathrm{sk}} \times n}$ be an {\sf SRHT} matrix
\State Compute QR decomposition of $SA=QR^{-1}$
\State $x_0\gets \arg\min_{x\in \R^d} \|S ARx-Sb\|_2$
\For{$t=0\to T_1$}
\State $x_{t+1}\gets x_t+R^\top A^\top (b-ARx_t)$
\EndFor
\State \Return $R x_{T_1}$
\EndProcedure
\end{algorithmic}
\end{algorithm}

\begin{proof}[Proof of Lemma~\ref{lem:alg_takes_time}]

Let us analyze Algorithm~\ref{alg:iter_regression}, first on its convergence then on its runtime. 

Note that the $S$ we choose is an $(\epsilon_{\ose}, \delta_{\ose})$-oblivious subspace embedding. Since $SA=QR^{-1}$ where $Q$ is orthonormal, we know the singular values of $AR$ are between $[1-\epsilon_{\ose},1+\epsilon_{\ose}]$. 

Let $AR=U\Sigma V^\top$ be the SVD of $AR$ and $x^*$ denote the optimal solution to the regression $\min_{x\in \R^d} \|ARx-b\|_2$. 

Let us consider
\begin{align}\label{eq:AR_xt+1_x*}
    & ~ AR(x_{t+1}-x^*) \notag \\
    = & ~ AR (x_t+R^\top A^\top (b-ARx_t)-x^*) \notag \\
    = & ~ AR(x_t-x^*)+ARR^\top A^\top b-ARR^\top A^\top ARx_t \notag \\
    = & ~  AR(x_t-x^*)+ARR^\top A^\top ARx^*-ARR^\top A^\top ARx_t \notag \\
    = & ~ (AR-ARR^\top A^\top AR)(x_t-x^*) \notag \\
    = & ~ (U\Sigma V^\top - U\Sigma^3 V^\top) (x_t-x^*),
\end{align}
where the first step follows from the definition of $x_{t + 1}$ from Algorithm~\ref{alg:iter_regression}, the second step follows from simple algebra, the third step follows from $b = ARx^*$, the fourth step follows from simple algebra, the last step follows from the SVD, $AR=U\Sigma V^\top$.

Therefore,
\begin{align*}
    \|AR(x_{t+1}-x^*)\|_2 
    = & ~ \|(U\Sigma V^\top - U\Sigma^3 V^\top) (x_t-x^*) \|_2 \\
    = & ~ \|(\Sigma-\Sigma^3) V^\top (x_t-x^*)\|_2 \\
    \leq & ~ O(\epsilon_{\ose})\cdot \|V^\top (x_t-x^*)\|_2 \\
    \leq & ~ \frac{O(\epsilon_{\ose})}{1-\epsilon_{\ose}} \|\Sigma V^\top (x_t-x^*)\|_2 \\
    = & ~ O(\epsilon_{\ose})\cdot \|\Sigma V^\top (x_t-x^*)\|_2 \\
    = & ~ O(\epsilon_{\ose})\cdot \|U\Sigma V^\top (x_t-x^*)\|_2 \\
    = & ~ O(\epsilon_{\ose})\cdot \|AR(x_t-x^*)\|_2,
\end{align*}
where the first step follows from Eq.~\eqref{eq:AR_xt+1_x*}, the second step follows from $U^\top U = I$, the third step follows from $\|AB\| \leq \|A\| \cdot \|B\|$, the fourth step follows from $(1-\epsilon_{\ose})\leq \| \Sigma \|$, the fifth step follows from $\epsilon_{\ose} \in (0,0.1)$, the sixth step follows from $U^\top U = I$, and the last step follows from the SVD, $AR=U\Sigma V^\top$.

This means the error shrinks by a factor of $O(\epsilon_{\ose})$ per iteration. After $T=O(\log(1/\epsilon_1))$ iterations, we have
\begin{align}\label{eq:init_T_gap}
    \|AR(x_T-x^*)\|_2 \leq & ~ O(\epsilon_1)\cdot \|AR(x_0-x^*)\|_2,
\end{align}
and recall for initial solution $x_0$, we have
\begin{align*}
    \|ARx_0-b\|_2 \leq & ~ (1+\epsilon_{\ose})\cdot \|ARx^*-b\|_2.
\end{align*}
The above equation implies that
\begin{align}\label{eq:crude_sketch}
\|ARx_0-b\|_2^2  - \|ARx^*-b\|_2^2 \leq O(\epsilon_{\ose}) \|ARx^*-b\|_2^2.
\end{align}

We can wrap up the proof as follows:
\begin{align*}
    & ~\|ARx_T-b\|_2^2 \\
    = & ~ \|AR(x_T-x^*)\|_2^2+\|ARx^*-b\|_2^2 \\
    \leq & ~ O(\epsilon_1^2)\cdot \|AR(x_0-x^*)\|_2^2+\|ARx^*-b\|_2^2 \\
    = & ~ O(\epsilon_1^2)\cdot (\|ARx_0-b\|_2^2-\|ARx^*-b\|_2^2) + \|ARx^*-b\|_2^2 \\
    \leq & ~ O(\epsilon_1^2)\cdot ( O(\epsilon_{\ose}  )  \|ARx^*-b\|_2^2 ) + \|ARx^*-b\|_2^2 \\
    = & ~ (1+O(\epsilon_1^2))\cdot \|ARx^*-b\|_2^2,
\end{align*}
where the first step follows from the Pythagorean theorem, the second step follows from Eq.~\eqref{eq:init_T_gap}, the third step follows from the Pythagorean theorem again, the fourth step follows from Eq.~\eqref{eq:crude_sketch}, and the fifth step follows from $\epsilon_{\ose}\leq 1$.

It remains to show the runtime. Applying $S$ to $A$ takes $O(nd\log n)$ time, the QR decomposition takes 
$O(m_{\mathrm{sk}} d^2)=O(d^3\log^2(n/\delta_{\ose}))$ time. 

Inverting $d\times d$ matrix $Q$ takes $O(d^3)$ time. To solve for $x_0$, we need to multiply $SA$ with $R$ in $O(m_{\mathrm{sk}} d^2)$ time and the solve takes $O(m_{\mathrm{sk}} d^2)$ time as well. To implement each iteration, we multiply from right to left which takes $O(nd)$ time. Putting things together gives the desired runtime.
\end{proof}

\subsection{Backward Error of Regression}
\label{sub:preli:forward_error}

We show how to translate the error from $\| A x' - b\|_2$ to $\| x' - x_{\mathrm{OPT}} \|_2$ via high-accuracy solver. In the low accuracy sketch-and-solve paradigm, {\sf SRHT} provides a stronger $\ell_\infty$ guarantee, but for high accuracy solver, we only attain an $\ell_2$ backward error bound.

\begin{lemma}[Backward error of regression]
\label{lem:forward_error}
Given a matrix $A\in \R^{n\times d}$, a vector $b\in \R^n$. Suppose there exists a vector $x'\in \R^d$ such that 
\begin{align*}
\| A x' - b \|_2 \leq (1+\epsilon_1) \min_{x \in \R^d} \| A x - b \|_2. 
\end{align*}
Let $x_{\OPT}$ denote the exact solution to the regression problem, then it holds that 
\begin{align*}
    \|x' - x_{\rm OPT}\|_2 \leq & ~ O(\sqrt{\epsilon_1}) \cdot \frac{1}{\sigma_{\min}(A)} \cdot \|Ax_{\rm OPT}-b\|_2.
\end{align*}
\end{lemma}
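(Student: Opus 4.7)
The plan is to exploit the orthogonality of the optimal residual together with the Pythagorean theorem and a lower bound on the action of $A$ via $\sigma_{\min}(A)$. Concretely, the normal equations give $A^\top(Ax_{\OPT} - b) = 0$, so $A(x' - x_{\OPT})$ lies in the column space of $A$ and is therefore orthogonal to $Ax_{\OPT} - b$. Writing
\begin{align*}
Ax' - b = A(x' - x_{\OPT}) + (Ax_{\OPT} - b),
\end{align*}
and taking squared norms, the cross-term vanishes by orthogonality, yielding
\begin{align*}
\|Ax' - b\|_2^2 = \|A(x' - x_{\OPT})\|_2^2 + \|Ax_{\OPT} - b\|_2^2.
\end{align*}

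Next I would combine this identity with the hypothesis. Squaring the regression cost bound gives $\|Ax' - b\|_2^2 \leq (1+\epsilon_1)^2 \|Ax_{\OPT} - b\|_2^2 \leq (1 + 3\epsilon_1)\|Ax_{\OPT} - b\|_2^2$ for $\epsilon_1 \in (0,1)$. Subtracting $\|Ax_{\OPT} - b\|_2^2$ from both sides of the Pythagorean identity and applying this bound yields
\begin{align*}
\|A(x' - x_{\OPT})\|_2^2 \leq 3\epsilon_1 \cdot \|Ax_{\OPT} - b\|_2^2,
\end{align*}
so $\|A(x' - x_{\OPT})\|_2 \leq O(\sqrt{\epsilon_1}) \cdot \|Ax_{\OPT} - b\|_2$.

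Finally, I would convert this forward-error bound on $A(x' - x_{\OPT})$ into a backward-error bound on $x' - x_{\OPT}$ using the standard inequality $\|Az\|_2 \geq \sigma_{\min}(A)\|z\|_2$ (valid when $A$ has full column rank; the statement is vacuous otherwise since $\sigma_{\min}(A) = 0$ makes the claimed bound trivial). Applied with $z = x' - x_{\OPT}$, this gives
\begin{align*}
\|x' - x_{\OPT}\|_2 \leq \frac{1}{\sigma_{\min}(A)}\|A(x' - x_{\OPT})\|_2 \leq O(\sqrt{\epsilon_1}) \cdot \frac{1}{\sigma_{\min}(A)} \cdot \|Ax_{\OPT} - b\|_2,
\end{align*}
as desired. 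There is no real obstacle here; the proof is essentially a two-line calculation once one notices that the key step is Pythagoras applied to the decomposition of $Ax' - b$ into its projection onto the column space of $A$ and the optimal residual. The $\sqrt{\epsilon_1}$ loss (as opposed to $\epsilon_1$) is intrinsic to converting cost error to solution error and cannot be avoided in general.
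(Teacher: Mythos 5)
Your proposal is correct and follows essentially the same route as the paper's proof: the Pythagorean decomposition of $Ax'-b$ into the orthogonal pieces $A(x'-x_{\OPT})$ and $Ax_{\OPT}-b$, followed by converting the cost gap to a solution gap via a lower bound on $A$ in terms of $\sigma_{\min}(A)$. Your remark that the full-column-rank case is the only nontrivial one (and that the bound is vacuous otherwise) is actually a cleaner way to handle rank deficiency than the paper's explicit pseudo-inverse manipulation, whose intermediate step $\|x'-x_{\OPT}\|_2 \le \|A^\dagger A(x'-x_{\OPT})\|_2$ is written in the wrong direction for a genuine projection (it only holds as an equality in the full-rank case anyway).
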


\begin{proof}

Note that 
\begin{align*}
    \|Ax'-Ax_{\rm OPT}\|_2=\|Ax'-b-(Ax_{\rm OPT}-b)\|_2,
\end{align*}
so we can perform the following decomposition:
\begin{align}\label{eq:A_x'_xopt}
    \|A(x'-x_{\rm OPT})\|_2^2 
    = & ~ \|Ax'-b-(Ax_{\rm OPT}-b)\|_2^2 \notag\\
    = & ~ \|Ax'-b\|_2^2-\|Ax_{\rm OPT}-b\|_2^2 \notag\\
    \leq & ~ (1+\epsilon_1)^2 \|Ax_{\rm OPT}-b\|_2^2 - \|Ax_{\rm OPT}-b\|_2^2 \notag\\
    \leq & ~ 4 \epsilon_1 \cdot \|Ax_{\rm OPT}-b\|_2^2,
\end{align}
where the first step follows from simple algebra, the second step follows from the Pythagorean theorem, the third step follows from the assumption of Lemma~\ref{lem:forward_error}, and the fourth step follows from simple algebra. 

We expand more and explain why the second step follows from Pythagorean theorem. Ultimately, we will show that $Ax_{\rm OPT}-b$ is orthogonal to any vector in the column space of $A$. Recall that $x_{\rm OPT}$ can be written in terms of the solution to the normal equation:
\begin{align*}
    x_{\rm OPT} = & ~ (A^\top A)^\dagger A^\top b,
\end{align*}
note that $A$ is not necessarily full column rank, hence we use pseudo-inverse. Let $A=U\Sigma V^\top$ be the SVD of $A$, then the above can be written as
\begin{align*}
    x_{\rm OPT} = & ~ (V \Sigma^2 V^\top)^\dagger V \Sigma U^\top b \\
    = & ~ V (\Sigma^{\dagger})^2 V^\top V \Sigma U^\top b \\
    = & ~ V (\Sigma^{\dagger})^2 \Sigma U^\top b \\
    = & ~ V \Sigma^\dagger U^\top b,
\end{align*}
where we crucially use the fact that for a diagonal matrix $\Sigma$, the pseudo-inverse is by taking the inverse of nonzero diagonal entries, and keeping zero for remaining zero diagonal entries. We can therefore express $Ax_{\rm OPT}-b$ as 
\begin{align*}
    Ax_{\rm OPT}-b = & ~ U\Sigma V^\top V \Sigma^\dagger U^\top b - b \\
    = & ~ (U I_{{\rm rank}(A)} U^\top - I)b,
\end{align*}
where $I_{{\rm rank}(A)}$ is the matrix whose first ${\rm rank}(A)$ diagonal entries are 1, and remaining diagonal entries are 0. Finally, we compute $A^\top (Ax_{\rm OPT}-b)$:
\begin{align*}
    A^\top (Ax_{\rm OPT}-b) = & ~ V\Sigma U^\top (U I_{{\rm rank}(A)} U^\top - I)b \\
    = & ~ (V\Sigma U^\top - V\Sigma U^\top)b \\
    = & ~ (A^\top-A^\top)b \\
    = & ~ {\bf 0}_n,
\end{align*}
where for the second step, we use $U^\top U=I$, $\Sigma I_{{\rm rank}(A)}=\Sigma$ since $\Sigma$ only has its first ${\rm rank}(A)$ diagonal entries being nonzero. This justifies the use of Pythagorean theorem.

We can express the SVD of $A^\dagger$ in a similar manner:
\begin{align*}
    A^\dagger = & ~ V\Sigma^\dagger U^\top,
\end{align*}
therefore, we must have
\begin{align*}
    A^\dagger A = & ~ V\Sigma^\dagger U^\top U\Sigma V^\top \\
    = & ~ VI_{{\rm rank}(A)} V^\top
\end{align*}
and it acts as a contracting mapping, in the sense that for any $y\in \R^d$,
\begin{align*}
    \|A^\dagger Ay\|_2 = & ~ \|VI_{{\rm rank}(A)} V^\top y\|_2 \\
    = & ~ \|I_{{\rm rank}(A)} V^\top y \|_2 \\
    \leq & ~ \|V^\top y\|_2 \\
    \leq & ~ \|V^\top\| \|y\|_2 \\
    \leq & ~ \|y\|_2,
\end{align*}
where the third step is due to we only need to compute the $\ell_2$ norm on a sub-vector, and the last step is by $\|V^\top\|\leq 1$.

Therefore, we have
\begin{align*}
    \|x'-x_{\rm OPT}\|_2 
    \leq & ~ \| A^\dagger A ( x' - x_{\rm OPT} ) \|_2 \\
    \leq & ~ \|  A ( x' - x_{\rm OPT} ) \|_2 \cdot \| A^\dagger \| \\
    \leq & ~ 2\sqrt{\epsilon_1} \cdot \|Ax_{\rm OPT}-b\|_2 \cdot \|A^\dagger\| \\
    = & ~ \frac{2 \sqrt{\epsilon_1} }{\sigma_{\min}(A)}\cdot \|Ax_{\rm OPT}-b\|_2,
\end{align*}
where the first step follows from $A^\dagger A$ is a contracting map, the second step follows from $\|ABx\|_2 \leq \|Ax\|_2 \|B\|$, the third step follows from Eq.~\eqref{eq:A_x'_xopt}, and the last step follows from $\|A^\dagger\| = \frac{1}{\sigma_{\min}(A)}$. 
\end{proof}

\subsection{Reducing Weighted Linear Regression to Linear Regression}
\label{sub:app_sketch:reducing_weighted_linear_regression}

Solving the regression in matrix completion involves computing the Hadamard product with a binary matrix, we further generalize it to a nonnegative weight matrix, and show it's equivalent up to a rescaling.

\begin{lemma}[Reducing weighted linear regression to linear regression]
\label{lem:reduce_weighted_regression_to_standard_regression}
Given a matrix $A \in \R^{n \times d}$, a vector $b \in \R^n$, and a weight vector $w \in \R_{\geq 0}^n$. 

Let $\epsilon_1 \in (0,0.1)$ and $\delta_1 \in (0,0.1)$. 

Suppose there exists a regression solver that runs in time ${\cal T}(n,d,\epsilon_1,\delta_1)$ and outputs a vector $x'\in \R^d$ such that
\begin{align*}
    \|Ax'-b\|_2 \leq & ~ (1+\epsilon_1) \min_{x\in \R^d} \|Ax-b\|_2
\end{align*}
with probability at least $1-\delta_1$, then there is an algorithm that runs in \begin{align*}
O(\nnz(A))+{\cal T}(n, d, \epsilon_1, \delta_1)
\end{align*} 
time outputs $x' \in \R^d$ such that
\begin{align*}
\| A x' - b \|_w \leq (1+\epsilon_1) \min_{x \in \R^d} \| A x - b \|_w
\end{align*}
holds with probability $1-\delta_1$.
\end{lemma}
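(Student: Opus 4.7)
The plan is to reduce the weighted regression problem to an ordinary (unweighted) least-squares regression by a diagonal rescaling of the rows, then invoke the given black-box solver. Specifically, I would define the diagonal matrix $W := \diag(\sqrt{w_1}, \ldots, \sqrt{w_n}) \in \R^{n \times n}$. For any $x \in \R^d$, one has the identity
\begin{align*}
\|Ax - b\|_w^2 = \sum_{i=1}^n w_i (Ax - b)_i^2 = \|W(Ax - b)\|_2^2 = \|(WA)x - Wb\|_2^2,
\end{align*}
so the weighted regression problem with data $(A, b, w)$ is \emph{identical} to the unweighted regression problem with data $(WA, Wb)$.

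Next, I would form the matrix $\widetilde A := WA$ and the vector $\widetilde b := Wb$ explicitly. Since $W$ is diagonal, this is just a row-rescaling of $A$ and an entrywise rescaling of $b$, which can be carried out in $O(\nnz(A) + n)$ time (only the nonzero entries of $A$ need to be touched, and we can compute $\sqrt{w_i}$ on the fly). Then I would feed $(\widetilde A, \widetilde b)$ into the given regression solver with parameters $(\epsilon_1, \delta_1)$, obtaining a vector $x'$ satisfying
\begin{align*}
\|\widetilde A x' - \widetilde b\|_2 \leq (1 + \epsilon_1) \min_{x \in \R^d} \|\widetilde A x - \widetilde b\|_2
\end{align*}
with probability at least $1 - \delta_1$, in time ${\cal T}(n, d, \epsilon_1, \delta_1)$.

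Translating back via the identity above, this is exactly $\|A x' - b\|_w \leq (1+\epsilon_1) \min_{x \in \R^d} \|Ax - b\|_w$. Adding the preprocessing cost gives the stated $O(\nnz(A)) + {\cal T}(n, d, \epsilon_1, \delta_1)$ runtime. There is no real obstacle here; the only mild subtlety is that some entries of $w$ may be zero, but those rows contribute zero to both sides of the identity and are handled automatically by the rescaling (rows with $w_i = 0$ are zeroed out in $\widetilde A$ and $\widetilde b$, which is consistent with their weight-zero contribution). The success probability and approximation factor are inherited directly from the black-box solver, so no additional union bounds or error accounting are needed.
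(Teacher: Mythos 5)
Your proposal is correct and follows essentially the same approach as the paper's proof: diagonal rescaling by $\diag(\sqrt{w_1},\ldots,\sqrt{w_n})$ to reduce the weighted problem to an unweighted one, verification of the norm identity, and a black-box call to the unweighted solver. Your handling of the $O(\nnz(A))$ preprocessing cost and the zero-weight rows is slightly more explicit than the paper's, but the argument is the same.
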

\begin{proof}
Recall that for a vector $z\in \R^n$,
$
    \|z\|_w^2=\sum_{i=1}^n D_{W_i} x_i^2,
$ 
we can define the following simple transformation on the regression problem: let $\wt b_i := D_{\sqrt{W_i}} b_i$ and $\wt A_{i, :} := D_{\sqrt{W_i}} A_{i,:}$.
 
We can then solve the regression on the transformed problem 
$
    \min_{x\in \R^d} \|\wt Ax-\wt b\|_2.
$

Let $x'\in \R^d$ be the solution such that
\begin{align*}
    \|\wt Ax'-\wt b\|_2 \leq & ~ (1+\epsilon_1) \min_{x\in \R^d} \|\wt Ax-\wt b\|_2,
\end{align*}
then for any $y\in \R^d$
\begin{align*}
    \|\wt Ay-\wt b\|_2^2 
    = & ~ \sum_{i=1}^n (\wt A_{i,:}^\top y-\wt b_i)^2 \\
    = & ~ \sum_{i=1}^n (D_{\sqrt{W_i}} A_{i,:}^\top y-D_{\sqrt{W_i}} b_i)^2 \\
    = & ~ \sum_{i=1}^n (D_{\sqrt{W_i}} (A_{i,:}^\top y - b_i))^2 \\
    = & ~ \sum_{i=1}^n D_{W_i} (A_{i,:}^\top y-b_i)^2 \\
    = & ~ \|Ay-b\|_w^2,
\end{align*}
where the first step follows from the definition of $\|\cdot\|_2^2$, the second step follows from the definition of $\wt A_{i,:}$ and $\wt b_i$, the third step follows from simple algebra, the fourth step follows from $(xy)^2 = x^2 y^2$, and the last step follows from the definition of $\|Ay-b\|_w^2$. This finishes the proof.
\end{proof}

\subsection{Solving Weighted Multiple Response Regression in Weight Sparsity Time}
\label{sub:high_accuracy_solver:main_regression}

The main result of this section is an algorithm, together with a weighted multiple response analysis that runs in time proportional to the sparsity of weight matrix $W$. 

 \begin{algorithm}[!ht]\caption{Fast, high precision solver for weighted multiple response regression. 
 We will use this algorithm in Algorithm~\ref{alg:alternating_minimization_completion_formal} (the formal version) and Algorithm~\ref{alg:alternating_minimization_completion} (the informal version).
 }\label{alg:multiple_regression}
\begin{algorithmic}[1]
\Procedure{FastMultReg}{$A \in \R^{m \times n}, B \in \R^{m \times k},W \in \R^{m \times n},m,n,k, \epsilon_0, \delta_0$} \Comment{Lemma~\ref{lem:main_regression}}
    \State \Comment{$A_i$ is the $i$-th column of $A$}
    \State \Comment{$W_i$ is the $i$-th column of $W$}
    \State \Comment{$D_{W_i}$ is a diagonal matrix where we put $W_i$ on diagonal, other locations are zero} 
    \State $\epsilon_1 \gets \epsilon_0/ \poly(n, \kappa)$
    \State $\delta_1 \gets \delta_0 / \poly(n)$
    \For{$i=1 \to n$}
        \State $X_i \gets \textsc{HighPrecisionReg}(D_{W_i} B \in \R^{m \times k}, D_{W_i} A_i \in \R^k, m, k, \epsilon_1,\delta_1)$ \Comment{Algorithm~\ref{alg:iter_regression}}
    \EndFor
    \State \Return $X$ \Comment{$X \in \R^{n \times k}$}
\EndProcedure
\end{algorithmic}
\end{algorithm}
 
\begin{lemma}\label{lem:main_regression}
For any accuracy parameter $\epsilon_0 \in (0,0.1)$, and failure probability $\delta_0 \in ( 0, 0.1 )$. Assume that $m \leq n$. Given matrix $M \in \R^{m \times n}$, $Y \in \R^{m \times k}$ and $W \in \{0,1\}^{m \times n}$, let $\kappa_0 := \sigma_{\max}(M) / \sigma_{\min}(Y)$ and $X = \arg\min_{X \in \R^{n \times k}} \| W \circ (M - Y X^\top) \|_F$. There exists an algorithm (Algorithm~\ref{alg:multiple_regression}) that takes time
\begin{align*}
O( (\|W\|_0 k\log n+nk^3 \log^2(n/\delta_0)) \cdot \log(n \kappa_0  /\epsilon_0)  )
\end{align*}
and returns a matrix $Z \in \R^{n \times k}$ such that  
\begin{itemize}
\item $\| Z  - X \| \leq \epsilon_0$
\end{itemize}
holds with probability $1-\delta_0$.
\end{lemma}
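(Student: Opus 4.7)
}

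The plan is to decouple the multiple response regression into $n$ independent column subproblems and apply the dense high-accuracy solver of Lemma~\ref{lem:alg_takes_time} to each, then translate the per-column forward error into a backward error via Lemma~\ref{lem:forward_error} and glue the column bounds into a spectral-norm bound on $Z-X$. Concretely, note that
\begin{align*}
\|W \circ (M - Y X^\top)\|_F^2 = \sum_{i=1}^n \|D_{W_i}(M_i - Y X_{i,*}^\top)\|_2^2,
\end{align*}
so the minimizer $X$ has rows $X_{i,*}^\top = \arg\min_{x\in\R^k} \|D_{W_i} Y x - D_{W_i} M_i\|_2$. Since $W\in\{0,1\}^{m\times n}$ the diagonal rescaling simply selects the rows of $Y$ and entries of $M_i$ indexed by the support of $W_i$, so each subproblem is a standard least-squares with effective row count $m_i := \|W_i\|_0$.

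First I would invoke \textsc{HighPrecisionReg} on each of the $n$ subproblems with precision $\epsilon_1$ and failure probability $\delta_1$ as set inside Algorithm~\ref{alg:multiple_regression}, which by Lemma~\ref{lem:alg_takes_time} returns $Z_{i,*}^\top$ with
\begin{align*}
\|D_{W_i} Y Z_{i,*}^\top - D_{W_i} M_i\|_2 \le (1+\epsilon_1)\min_{x}\|D_{W_i} Y x - D_{W_i} M_i\|_2.
\end{align*}
Lemma~\ref{lem:forward_error} then turns this forward error into the backward-error estimate
\begin{align*}
\|Z_{i,*}^\top - X_{i,*}^\top\|_2 \le \frac{2\sqrt{\epsilon_1}}{\sigma_{\min}(D_{W_i} Y)} \cdot \|D_{W_i} M_i\|_2 \le \frac{2\sqrt{\epsilon_1}}{\sigma_{\min}(D_{W_i} Y)} \cdot \sigma_{\max}(M).
\end{align*}
Summing these $n$ bounds via $\|Z-X\| \le \|Z-X\|_F$ yields
\begin{align*}
\|Z-X\|^2 \le 4\epsilon_1 \cdot \sigma_{\max}(M)^2 \cdot \sum_{i=1}^n \frac{1}{\sigma_{\min}(D_{W_i} Y)^2}.
\end{align*}
Choosing $\epsilon_1 = \epsilon_0/\poly(n,\kappa_0)$ as in Algorithm~\ref{alg:multiple_regression} absorbs the $n$ and the $\kappa_0$ factors and gives $\|Z-X\| \le \epsilon_0$. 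A union bound over $n$ regression calls against $\delta_1 = \delta_0/\poly(n)$ delivers overall success probability $1-\delta_0$.

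The main obstacle is handling the per-column factor $1/\sigma_{\min}(D_{W_i} Y)$, since zeroing rows of $Y$ can in principle destroy its conditioning. My plan is to treat this quantity as subsumed into the polynomial slack in $\epsilon_1$: it suffices to crudely upper bound $1/\sigma_{\min}(D_{W_i} Y)$ by a polynomial in $n/\sigma_{\min}(Y)$ (e.g.\ by $\poly(n)\cdot \kappa_0/\sigma_{\max}(M)$), which is legitimate because the backward-error constant only appears inside a logarithm in the final runtime. Hence inflating $\epsilon_1$ by $\poly(n,\kappa_0)$ only costs an extra $\log(n\kappa_0/\epsilon_0)$ factor.

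Finally I would collect the runtime. By Lemma~\ref{lem:alg_takes_time} each column regression takes $O((m_i k \log m_i + k^3 \log^2(m_i/\delta_1))\log(1/\epsilon_1))$ time. Summing over $i \in [n]$ and using $\sum_i m_i = \|W\|_0$ and $\log(1/\epsilon_1) = O(\log(n\kappa_0/\epsilon_0))$ yields the claimed bound
\begin{align*}
O\bigl((\|W\|_0 k \log n + n k^3 \log^2(n/\delta_0)) \cdot \log(n\kappa_0/\epsilon_0)\bigr),
\end{align*}
completing the proof plan.
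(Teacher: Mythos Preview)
Your proposal is essentially the paper's own proof: decouple into $n$ per-column least-squares problems, call \textsc{HighPrecisionReg} on each, convert the cost guarantee to a backward error via Lemma~\ref{lem:forward_error}, and bury the per-column conditioning and the $\sqrt{n}$ aggregation factor inside the $\poly(n,\kappa_0)$ slack in $\epsilon_1$, so that only a $\log(n\kappa_0/\epsilon_0)$ factor survives in the runtime. The obstacle you single out, namely $1/\sigma_{\min}(D_{W_i} Y)$ versus $1/\sigma_{\min}(Y)$, is handled in the paper exactly the way you propose: the paper simply writes $\sigma_{\min}(Y)$ in the backward-error bound and treats any discrepancy as absorbed by the unspecified polynomial in $\epsilon_1$.
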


\begin{proof}
We can rewrite multiple regression into $n$ linear regression in the following way, 
\begin{align*}
     \min_{X \in \R^{n \times k}} \| M - XY^\top \|_W^2 
    =  \sum_{i = 1}^{n} \min_{X_{i, :} \in \R^{k}} \| D_{\sqrt{W_i}} Y X_{i, :} - D_{\sqrt{W_i}} M_{i, :} \|_2^2.
\end{align*}

Consider the $i$-th linear regression. We define
\begin{itemize}
    \item  $A := D_{\sqrt{W_i}} Y \in \R^{m \times k}$.
    \item $x := X_{i, :} \in \R^{k}$.
    \item $b := D_{\sqrt{W_i}} M_{ :,i} \in \R^{m}$. 
\end{itemize}

Recall that $Y \in \R^{m \times k}$.

For each $i \in [n]$, let $X_{i,:} \in \R^k$ denote $i$-th row of $X \in \R^{n \times k}$.
For each $i \in [n]$, let $M_{:,i} \in \R^m$ denote $i$-th column of $M \in \R^{m \times n}$.
We define $\OPT_i$
\begin{align*}
\OPT_i : = \min_{X_{i,:}\in \R^k} \| \underbrace{ D_{\sqrt{W_i}} }_{ m \times m } \underbrace{ Y }_{m \times k}  \underbrace{ X_{i,:} }_{k} - \underbrace{ D_{\sqrt{W_i}} }_{m \times m} \underbrace{ M_{:,i} }_{m} \|_2.
\end{align*}
Here $D_{\sqrt{W_i}} \in \R^{m \times m}$ is a diagonal matrix where diagonal entries are from vector $\sqrt{W_i}$ (where $W_i$ is the $i$-th column of $W\in \{0,1\}^{m \times n}$. $\sqrt{W_i}$ is entry-wise sqaure root of $W_i$.).

The trivial solution of regression is choosing $X_{i,:} \in \R^k$ to be an all zero vector (length-$k$). In this case $\OPT_i \leq \| M_{:,i}\|_2$.

Let $C > 1$ be a sufficiently large constant (The running time of the algorithm is linear in $C$). 
Using backward error Lemma~\ref{lem:forward_error}, we know that
\begin{align}\label{eq:||z-x||}
\| z - x \|_2 
\leq & ~ \epsilon_0 \cdot \frac{1}{  (n \kappa_0)^C } \cdot \frac{\| M_{:,i} \|_2}{\sigma_{\min}(Y)} \notag\\
\leq & ~ \epsilon_0 \cdot \frac{1}{ (n \kappa_0)^C } \cdot \frac{ \sigma_{\max}(M) }{ \sigma_{\min}(Y) } \notag\\
= & ~ \epsilon_0 \cdot \frac{1}{ n^C \kappa_0^C } \cdot \kappa_0 \notag\\
\leq & ~ \epsilon_0 ,
\end{align}
where the second step follows from $\| M_{:,i} \|_2 \leq \sigma_{\max}(M)$, the third step follows from the definition of $\kappa_0$ (see Lemma~\ref{lem:main_regression}), and the last step follows from simple algebra.

Using Lemma~\ref{lem:reduce_weighted_regression_to_standard_regression} $n$ times, we can obtain the running time.
 \end{proof}

\section{Initialization Conditions}
\label{sec:init_conditions}

In this section, we deal with init conditions. In Section~\ref{sub:init_conditions:bound_U_phi}, we provide the lemma which bound the distance between $U_\phi$ and $U_*$. In Section~\ref{sub:init_conditions:U_phi_close_U_*}, we show that if $U_\phi$ is close to $U_*$, then $U_0$ is close to $U_*$. In Section~\ref{sub:init_conditions:U_0_def}, we give the formal definition of $U_0$, $U_\tau$, and $U_\phi$. In Section~\ref{sub:init_conditions:main}, we combine the previous lemmas to form a new result. 

\subsection{Bounding the Distance Between \texorpdfstring{$U_\phi$}{} and \texorpdfstring{$U_*$}{}}
\label{sub:init_conditions:bound_U_phi}

We prove our init condition lemma. This can be viewed as a variation of Lemma C.1 in \cite{jns13}.
\begin{lemma} 
\label{lem:step_three}
Let $\frac{1}{p}P_{\Omega_0}(M)=U\Sigma V^\top$ be the SVD of the matrix $\frac{1}{p}P_{\Omega_0}(M)$, where $U\in \R^{m\times m}, \Sigma\in \R^{m\times m}$ and $V\in \R^{m\times n}$. 
Let $U_\phi\in \R^{m\times k}$ be the $k$ columns of $U$ corresponding to the top-$k$ left singular vectors of $U$.
     We can show that
     \begin{align*}
         \dist(U_\phi, U_*) \leq \frac{1}{10^4 k}
     \end{align*}
     holds with probability at least $1-1/\poly(n)$.
\end{lemma}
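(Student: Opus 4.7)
The plan is to reduce the subspace closeness bound to a spectral-norm concentration estimate on the error matrix $E := \frac{1}{p}P_{\Omega_0}(M) - M$, and then invoke a Wedin-type $\sin\theta$ theorem to convert matrix-level closeness into subspace-level closeness. Since each entry of $M$ is included in $\Omega_0$ independently with probability $p$, the matrix $E$ has mean zero and decomposes as a sum of independent rank-one contributions,
\begin{align*}
E = \sum_{(i,j)\in [m]\times [n]} \Bigl(\tfrac{1}{p}\mathbf{1}[(i,j)\in \Omega_0] - 1\Bigr) M_{ij}\, e_i e_j^\top.
\end{align*}

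First I would apply matrix Bernstein (the natural matrix analogue of Lemma~\ref{lem:bernstein_inequality}) to this sum. The $(\mu,k)$-incoherence of $M$ (Definition~\ref{def:mu_k_incoherent}) gives the entrywise bound $|M_{ij}| \leq \mu^2 k\,\sigma_1^*/\sqrt{mn}$, which controls both the per-term spectral norm and the row/column variance proxies required by matrix Bernstein. A short calculation then yields
\begin{align*}
\|E\| \leq O\Bigl(\mu^2 k\,\sigma_1^* \sqrt{\log n /(mp)}\,\Bigr)
\end{align*}
with probability at least $1-1/\poly(n)$.

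Next, I would substitute $p$ from Definition~\ref{def:sampling_probability} and $\delta_{2k} = \tfrac{1}{100k}\sigma_k^*/\sigma_1^*$ from Definition~\ref{def:delta_2k}. After simplification, $mp$ is at least a large constant multiple of $\kappa^4 \mu^2 k^{4.5}\log n$, and plugging this back gives $\|E\| \leq \sigma_k^*/(2\cdot 10^4 k)$ once the constant $C$ in Definition~\ref{def:sampling_probability} is chosen sufficiently large. Since $M$ is exactly rank $k$, Weyl's inequality (Lemma~\ref{lem:weyl}) gives $\sigma_{k+1}(\tfrac{1}{p}P_{\Omega_0}(M)) \leq \|E\|$, and Wedin's $\sin\theta$ theorem then yields
\begin{align*}
\dist(U_\phi, U_*) = \sin\theta(U_\phi, U_*) \leq \frac{\|E\|}{\sigma_k(M) - \|E\|} \leq \frac{2\|E\|}{\sigma_k^*} \leq \frac{1}{10^4 k},
\end{align*}
which is the desired bound.

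The main obstacle is the matrix Bernstein step. Because $M$ is rectangular rather than Hermitian, I would either pass to a Hermitian dilation or separately control $\bigl\|\sum_{ij} \E[Z_{ij} Z_{ij}^\top]\bigr\|$ and $\bigl\|\sum_{ij} \E[Z_{ij}^\top Z_{ij}]\bigr\|$; the incoherence assumption enters twice, once to bound $\|M\|_\infty$ entrywise and once to bound the operator norm of the row and column covariance sums (so that they grow like $\sigma_1^{*2}/m$ rather than $\|M\|_F^2$). An alternative route is a net argument over unit vectors in $\R^m$ and $\R^n$ combined with Lemma~\ref{lem:modified_of_lem6.1}, but the $\sum_i x_i = 0$ restriction of that lemma would force a small rank-one correction, trading conceptual simplicity for extra bookkeeping.
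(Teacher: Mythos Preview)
Your approach is correct and takes a genuinely different route from the paper. The paper does not bound $\|E\|$ via matrix Bernstein; instead it invokes Theorem~3.1 of \cite{kom09} as a black box to obtain
\begin{align*}
\|M - M_\phi\| \leq C\,\frac{k^{1/2}}{p^{1/2}(mn)^{1/4}}\,\|M\|_F,
\end{align*}
where $M_\phi$ is the best rank-$k$ approximation of $\tfrac{1}{p}P_{\Omega_0}(M)$, and then converts this to a subspace bound by a direct algebraic computation (expanding $\|M-M_\phi\|^2 \geq \|(I-U_\phi U_\phi^\top)U_*\Sigma_* V_*^\top\|^2 \geq (\sigma_k^*)^2\|U_{\phi,\bot}^\top U_*\|^2$) rather than citing Wedin. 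Your route --- matrix Bernstein on $E$ followed by Wedin --- is more self-contained in that it uses only standard modern tools, and it makes the dependence on incoherence explicit (once for $\|M\|_\infty$, once for the row/column variance proxies, exactly as you outline). The paper's route hides the concentration work inside the \cite{kom09} citation and replaces the Wedin step by two lines of algebra; this is shorter on the page but less transparent about where each parameter enters. Both approaches arrive at a bound of the form $\|E\|\lesssim \sigma_k^*/k$ under the sampling probability of Definition~\ref{def:sampling_probability}, with enough slack that minor differences in the variance bookkeeping (your stated bound is actually a factor $\mu\sqrt{k}$ looser than what matrix Bernstein gives, since the variance term is $\sigma_1^*\mu\sqrt{k\log n/(mp)}$) do not affect the conclusion.
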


\begin{proof}

Let $M_{\phi} : = U_{\phi} \Sigma V^\top$. Let $C>1$ be some constant decided by Theorem 3.1 \cite{kom09}.

    We can show
    \begin{align}\label{eq:p_sqrt_mn}
        \|M - M_{\phi}\| 
        \leq & ~ C \cdot \frac{k^{1/2}}{p^{1/2} (mn)^{ 1 / 4 } } \cdot \|M\|_F  \notag \\
        \leq & ~ C \cdot \frac{k^{1/2}}{p^{1/2} (mn)^{ 1 / 4 } } \cdot \sqrt{k} \sigma_1^* \notag \\
        = & ~ C \cdot \frac{k}{ p^{1/2} (mn)^{1/4} } \cdot \sigma_1^* \notag \\
        \leq & ~ C \cdot \frac{k}{ p^{1/2} m^{1/2} } \cdot \sigma_1^* \notag \\
        \leq & ~ \frac{1}{10^4 k} \cdot \sigma_k^*,
    \end{align}
    where the first step follows from Theorem 3.1 in \cite{kom09}, the second step follows from $\| M \|_F \leq \sqrt{k} \sigma_1^*$, the third step follows from simple algebra, namely $k^\frac{1}{2} \cdot k^\frac{1}{2} = k$, the fourth step follows from $n \geq m$, the last step follows from 
    \begin{align*}
        p \geq 10^8 \cdot C^2 \cdot \frac{k^2 }{m} \cdot \frac{(\sigma_1^*)^2}{ (\sigma_k^*)^2 } .
    \end{align*}

    We also have 
    \begin{align}\label{eq:sigma_k*}
        \|M - M_{\phi}\|^2 
        = & ~ \|U_* \Sigma_* V_*^\top - U_{\phi} \Sigma V^\top\|^2 \notag\\
        = & ~ \|U_* \Sigma_* V_*^\top - U_{\phi} U_{\phi}^\top U_* \Sigma_* (V_*)^\top + U_{\phi} U_{\phi}^\top U_* \Sigma_* V_*^\top - U_{\phi} \Sigma V^\top\|^2 \notag\\
        = & ~ \|(I - U_{\phi} U_{\phi}^\top) U_* \Sigma_* V_*^\top + U_{\phi} (U_{\phi}^\top U_* \Sigma_* V_*^\top - \Sigma V^\top)\|^2 \notag\\
        \geq & ~\|(I - U_{\phi} U_{\phi}^\top) U_* \Sigma_* V_*^\top\|^2 \notag\\
        = & ~ \| U_{\phi,\bot} U_{\phi,\bot}^\top U_* \Sigma_* V_*^\top \|^2 \notag\\
        = & ~ \| U_{\phi,\bot}^\top U_* \Sigma_* \|^2 \notag\\
        \geq & ~ \| U_{\phi,\bot}^\top U_* \| \cdot (\sigma_{\min}( \Sigma_* ) )^2 \notag \\
        \geq & ~ (\sigma_k^*)^2 \|U_{\phi,\bot}^\top U_*\|^2,
    \end{align}
   
    where the first step follows from the SVD of $M$ (see Definition~\ref{def:mu_k_incoherent}) and $M_k$, the second step follows from adding and subtracting the same thing, the third step follows from simple algebra, the fourth step follows from $\| A + B \| \geq \| A \|$ if $B^\top A = 0$ (Fact~\ref{fac:norm_inequality}), the fifth step follows from $U_{\phi} U_{\phi}^\top + U_{\phi,\bot} U_{\phi,\bot}^\top =I$, the sixth step follows from applying Part 2 of Fact~\ref{fac:U_shrink_spectral_norm} twice (one for $U_{\phi,\bot}$ and one for $V_*$), the seventh step follows from $\| A B \| \geq \| A \| \cdot \sigma_{\min}(B)$ (Fact~\ref{fac:norm_inequality}), and the last step follows from $\sigma_{\min}( \Sigma_* ) = \sigma_k^*$.

    Combining Eq.~\eqref{eq:p_sqrt_mn} and Eq.~\eqref{eq:sigma_k*}, we get:
    \begin{align*}
        \|U_{\phi, \bot}^\top U_*\|_2
        \leq & ~ \frac{1}{\sigma_k^*} \cdot \| M - M_{\phi} \| \\
        \leq & ~ \frac{1}{10^4 k}.
    \end{align*}
    Thus, we complete the proof.
\end{proof}

\subsection{\texorpdfstring{Closeness Between $U_{\phi}$}{} and \texorpdfstring{$U_*$}{} To Closeness Between \texorpdfstring{$U_0$}{} and \texorpdfstring{$U_*$}{}}
\label{sub:init_conditions:U_phi_close_U_*}

We first define $\tau$, and then provide a lemma to show that whenever $U_{\phi}$ is close to $U_*$, $U_0$ is close to $U_*$.

\begin{definition}\label{def:tau}
We define $\tau$ as follows:
\begin{align*}
\tau : = 2 \frac{ \sqrt{k} }{ \sqrt{n} } \cdot \mu.
\end{align*}
\end{definition}

The initialization condition proof is very standard in the literature, we follow from \cite{jns13}.
\begin{lemma}
\label{lem:step_four}

    Let $U_* \in \R^{m \times k}$ is incoherent with parameter $\mu$.
    
    We define 
    \begin{align*}
         \epsilon_{\phi} : = \frac{1}{10^4 k}
    \end{align*}
    Let $U_{\phi}$ be an orthonormal column matrix such that 
    \begin{align*}
        \dist(U_{\phi}, U_*) \leq \epsilon_{\phi}.
    \end{align*}
     Let $U_{\tau}$ be obtained from $U_{\phi} \in \R^{n \times k}$ by setting all rows with norm greater than $\tau$ to zero. 
    Let $U_0$ 
    be an orthonormal basis of $U_{\tau}$. 
    
    Then, we have
    \begin{itemize}
        \item Part 1. $\dist (U_0, U_*) \leq 1/2$ 
        \item Part 2. $U_0$ is incoherent with parameter $4 \mu \sqrt{k}$.
    \end{itemize}
\end{lemma}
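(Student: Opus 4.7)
The plan is to reduce both parts of the lemma to a single quantitative estimate: that $\|U_\phi - U_\tau\|$ is small, where $U_\tau$ is the row-clipped version of $U_\phi$. Once this is in hand, Part 1 follows from a triangle inequality plus a QR-stability argument, and Part 2 is immediate from the construction of $U_\tau$ combined with Weyl-based control on $\sigma_{\min}(U_\tau)$.

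The key technical step is bounding $\|U_\phi - U_\tau\|$ via its Frobenius norm. Let $S \subseteq [m]$ be the set of indices of rows of $U_\phi$ that get zeroed when forming $U_\tau$, i.e., those with $\|e_i^\top U_\phi\|_2 > \tau$. The incoherence of $U_*$ gives $(U_* U_*^\top)_{ii} = \|e_i^\top U_*\|_2^2 \leq \tau^2/4$, so for every $i \in S$ we have $(U_\phi U_\phi^\top)_{ii} - (U_* U_*^\top)_{ii} \geq \tfrac{3}{4}(U_\phi U_\phi^\top)_{ii}$. Summing this over $S$ and applying Cauchy--Schwarz against the diagonal indicator of $S$ yields
\begin{align*}
\|U_\phi - U_\tau\|_F^2 = \sum_{i \in S}(U_\phi U_\phi^\top)_{ii} \leq \tfrac{4}{3}\sqrt{|S|}\cdot\|U_\phi U_\phi^\top - U_* U_*^\top\|_F.
\end{align*}
Using the standard projection identity $\|U_\phi U_\phi^\top - U_* U_*^\top\|_F \leq \sqrt{2k}\,\dist(U_\phi, U_*) \leq \sqrt{2k}\,\epsilon_\phi$ together with the lower bound $|S|\tau^2 \leq \sum_{i \in S}(U_\phi U_\phi^\top)_{ii}$ gives a self-consistent upper bound on both $|S|$ and $\|U_\phi - U_\tau\|_F$; with $\epsilon_\phi = 1/(10^4 k)$ this yields $\|U_\phi - U_\tau\| \leq \|U_\phi - U_\tau\|_F \leq 1/10$ in the relevant parameter regime.

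For Part 1, the triangle inequality gives $\|U_{*,\bot}^\top U_\tau\| \leq \|U_{*,\bot}^\top U_\phi\| + \|U_\phi - U_\tau\| \leq \epsilon_\phi + 1/10$. By Weyl's inequality (Lemma~\ref{lem:weyl}), $\sigma_{\min}(U_\tau) \geq \sigma_{\min}(U_\phi) - \|U_\phi - U_\tau\| \geq 9/10$, so $U_\tau$ admits a thin QR decomposition $U_\tau = U_0 R$ with $\|R^{-1}\| \leq 10/9$. Since $U_0$ and $U_\tau$ share the same column span,
\begin{align*}
\dist(U_0, U_*) = \|U_{*,\bot}^\top U_0\| = \|U_{*,\bot}^\top U_\tau R^{-1}\| \leq \|U_{*,\bot}^\top U_\tau\|\cdot\|R^{-1}\| \leq \tfrac{10}{9}(\epsilon_\phi + \tfrac{1}{10}) \leq \tfrac{1}{2}.
\end{align*}

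For Part 2, every row of $U_\tau$ has $\ell_2$ norm at most $\tau$ by construction, and $U_0 = U_\tau R^{-1}$ with $\|R^{-1}\| \leq 10/9$ yields $\|e_i^\top U_0\|_2 \leq \|e_i^\top U_\tau\|_2 \cdot \|R^{-1}\| \leq 2\tau$, which is comfortably within the $\tfrac{\sqrt{k}}{\sqrt{m}}\cdot 4\mu\sqrt{k}$ bound required by Definition~\ref{def:U_is_mu}. The main obstacle is the Frobenius-norm estimate for $\|U_\phi - U_\tau\|$: one must simultaneously exploit the per-clipped-row lower bound on the diagonal gap and the global Frobenius control $\|U_\phi U_\phi^\top - U_* U_*^\top\|_F \leq \sqrt{2k}\,\epsilon_\phi$ to eliminate any unwanted explicit dependence on $|S|$, with the constants in $\tau$ and $\epsilon_\phi$ matched so that the resulting bound falls below $1/10$. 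The remaining steps (triangle inequality, Weyl, QR transfer) are routine.
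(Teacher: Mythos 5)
Your proof takes a genuinely different and in principle cleaner route from the paper's. The paper argues column by column: each column $u_i$ of $U_\phi$ has a highly correlated unit vector $z_i \in \mathrm{span}(U_*)$, $z_i$ inherits the entrywise incoherence bound from $U_*$, and entrywise clipping of $u_i$ at threshold $\tau$ cannot increase its distance to $z_i$, so each clipped column changes by at most $O(\epsilon_\phi)$. Your approach instead compares projectors globally and tries to bound the clipped mass $\|U_\phi - U_\tau\|_F^2 = \sum_{i\in S}(U_\phi U_\phi^\top)_{ii}$ directly against $\|U_\phi U_\phi^\top - U_* U_*^\top\|_F$, which (a) treats the row clipping as written in the lemma and in Algorithm~\ref{alg:init} rather than the entrywise clipping used inside the paper's proof, and (b) would avoid the per-column bookkeeping. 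Conceptually this is a legitimate alternative.

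However, the specific estimate at the heart of the argument is wrong. Applying Cauchy--Schwarz to get
\begin{align*}
\sum_{i\in S}\bigl[(U_\phi U_\phi^\top)_{ii} - (U_* U_*^\top)_{ii}\bigr] \leq \sqrt{|S|}\,\|U_\phi U_\phi^\top - U_* U_*^\top\|_F
\end{align*}
combined with $|S|\tau^2 \le \sum_{i\in S}(U_\phi U_\phi^\top)_{ii}$ yields, after solving the self-consistent inequality,
\begin{align*}
\|U_\phi - U_\tau\|_F \;\le\; \frac{4}{3}\cdot\frac{\sqrt{2k}\,\epsilon_\phi}{\tau} \;=\; \Theta\Bigl(\frac{\sqrt{m}}{\mu k}\cdot\epsilon_\phi\sqrt{k}\Bigr),
\end{align*}
which grows as $\sqrt{m}$ and is not bounded by $1/10$ for any choice of the absolute constants in $\tau$ and $\epsilon_\phi$; for moderate $k$, $\mu$ and large $m$ the right-hand side exceeds even the trivial bound $\|U_\phi - U_\tau\|_F \le \sqrt{k}$. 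The ``relevant parameter regime'' escape clause does not help because the lemma must hold for all $m$. The Cauchy--Schwarz step is the lossy culprit: you are trying to localize a Frobenius-norm control onto the diagonal restricted to $S$, and the resulting $\sqrt{|S|}$ factor cannot be absorbed. The fix that rescues your route is to replace Cauchy--Schwarz by majorization: since $(U_\phi U_\phi^\top - U_* U_*^\top)_{ii} > 0$ for $i\in S$, and the diagonal of a symmetric matrix is majorized by its eigenvalue vector, one has $\sum_{i\in S}(U_\phi U_\phi^\top - U_* U_*^\top)_{ii} \le \sum_{\lambda_j > 0}\lambda_j(U_\phi U_\phi^\top - U_* U_*^\top) \le k\cdot\dist(U_\phi, U_*) = k\epsilon_\phi$, because the positive eigenvalues of a difference of two rank-$k$ projectors are the $\sin\theta_j$'s, each at most $\dist$. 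This gives $\|U_\phi - U_\tau\|_F^2 \le \tfrac{4}{3}k\epsilon_\phi = \tfrac{4}{3}\cdot 10^{-4}$, which is dimension-free and suffices. The Weyl/QR/triangle steps you describe afterward are then correct and match the paper's structure.
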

\begin{remark}\label{rem:base_of_induction}
For the convenience of matching later induction proof, when we use this lemma for the base in induction, we write $\wh{U}_0$ to denote $U_{\tau}$.
\end{remark}
\begin{proof}

{\bf Proof of Part 1.} For simplicity, in the proof, we use $\epsilon$ to denote $\epsilon_{\phi}$.

    Since $\dist(U_{\phi}, U_*) \leq \epsilon$, we have that for every $i$, $\exists z_i \in \mathrm{span}(U_*), \|z_i\|_2 = 1$ such that 
    \begin{align}\label{eq:u_i_dot_z_i_lower_bound}
    \langle u_i, z_i \rangle \geq \sqrt{1 - \epsilon^2}
    \end{align}
    
    Also, since $z_i \in \mathrm{span}(U_*)$, we have that $z_i$ is incoherent with parameter $\mu\sqrt{k}$:
    \begin{align}\label{eq:zi2_1}
        \|z_i\|_2 = 1 
    \end{align}
    and
    \begin{align}\label{eq:zi_infty}
        \|z_i\|_\infty \leq \frac{\mu\sqrt{k}}{\sqrt{m}}.
    \end{align}
    Let $u_{\tau,i} \in \R^m$ be the vector obtained by setting all the elements of $u_i \in \R^m$ with a magnitude greater than $\tau$ to zero.

    For each $i \in [k]$, let $u_i \in \R^m$ denote the $i$-th column of $U_{\phi}$. 
    
    By definition of $U_{\phi}$, we know that
    \begin{align}\label{eq:ui2_1}
        \| u_i \|_2 = 1.
    \end{align}
    We define vector $u_{\ov{\tau},i} \in \R^m$
    \begin{align}\label{eq:u_taui}
        u_{\ov{\tau},i}  := u_i - u_{\tau,i}.
    \end{align}
    Now, for each $j \in [n]$,  
    we have 
    \begin{align*}
        | (u_i)_j| > \tau = \frac{2\mu\sqrt{k}}{\sqrt{m}},
    \end{align*}
    where $j$ represents the $j$-th entry of $u_i \in \R^m$.
    
    Then, for each $j \in [m]$ with $|(u_i)_j| > \tau$,
    \begin{align}\label{eq:u^c_minus_z_bounded_by_minus_z}
        |(u_{\tau,i})_j - (z_i)_j|
        = & ~ | 0  - (z_i)_j| \notag \\
        = & ~ | (z_i)_ j| \notag \\
        \leq & ~ \frac{\mu \sqrt{k}}{\sqrt{m}} \notag \\
        \leq & ~ 2\frac{\mu \sqrt{k}}{\sqrt{m}} - \frac{\mu \sqrt{k}}{\sqrt{m}} \notag \\
        \leq & ~ | (u_i)_j | - | (z_i)_j | \notag \\
        \leq & ~ | (u_i)_j - (z_i)_j|,
    \end{align}
    where the first step follows from truncation at $\tau$, the second step follows from simple algebra, the third step follows from Eq.~\eqref{eq:zi_infty}, the fourth step follows from simple algebra, the fifth step follows from the definition of $| (u_i)_j |$ and $| (z_i)_j |$, and the last step follows from the triangle inequality.  

    For each $j\in [m]$ with $| (u_i)_j | \leq \tau$, we know that
    \begin{align}\label{eq:uij_zij}
         |(u_{\tau,i})_j - (z_i)_j| =  | (u_i)_j - (z_i)_j|.
    \end{align}
    where the first step follows from $(u_{\tau,i})_j = (u_i)_j$ in this case.

    Combining Eq.~\eqref{eq:u^c_minus_z_bounded_by_minus_z} and Eq.~\eqref{eq:uij_zij}, we know for all $j \in [m]$
     \begin{align*}
         |(u_{\tau,i})_j - (z_i)_j| \leq  | (u_i)_j - (z_i)_j|.
    \end{align*}

    Hence,
    \begin{align}\label{eq:u_tau_i}
        \|u_{\tau,i} - z_i\|_2^2 
        \leq & ~ \|u_i - z_i\|_2^2 \notag \\
        = & ~ \|u_i\|_2^2 + \|z_i\|_2^2 - 2\langle u_i, z_i \rangle \notag \\
        \leq & ~ \|u_i\|_2^2 + \|z_i\|_2^2 - 2 \sqrt{1-\epsilon^2} \notag \\
        = & ~ 2 - 2 \sqrt{1-\epsilon^2} \notag \\
        \leq & ~ 2\epsilon^2,
    \end{align}
    where the first step follows from taking the summation of square of Eq.~\eqref{eq:u^c_minus_z_bounded_by_minus_z}, and the second step follows from simple algebra, the third step follows from Eq.~\eqref{eq:u_i_dot_z_i_lower_bound}, the fourth step follows from $u_i$ and $z_i$ are unit vectors, and the last step follows Part 3 of Fact~\ref{fac:x_vs_sqrt_1_minus_x^2}. 
    
    This also implies the following:
    \begin{align}\label{eq:1_sqrt2d}
        \|u_{\tau,i} \|_2
        \geq & ~ \| z_i \|_2 - \| u_{\tau,i} - z_i \|_2 \notag \\
        \geq & ~ \|z_i\|_2 - \sqrt{2} \epsilon \notag\\
        = & ~ 1 - \sqrt{2} \epsilon,
    \end{align}
    where the first step follows from triangle inequality, the second step follows from Eq.~\eqref{eq:u_tau_i}, and the third step follows from Eq.~\eqref{eq:zi2_1}.
    
   We can show
    \begin{align}\label{eq:norm_u^i_ov_c}
        \|u_{\ov{\tau},i} \|_2
        = & ~ \| u_i - u_{\tau,i} \|_2 \notag \\
        \leq & ~ \| u_i \|_2 - \|u_{\tau,i} \|_2 \notag \\
        \leq & ~ 1 - \|u_{\tau,i} \|_2
         \notag \\
        \leq & ~ 1 - (1-\sqrt{2}\epsilon) \notag \\
        \leq & ~ 2 \epsilon,
    \end{align}
    where the first step follows from Eq.~\eqref{eq:u_taui}, the second step follows from triangle inequality, the third step follows from Eq.~\eqref{eq:ui2_1}, the fourth step follows from Eq.~\eqref{eq:1_sqrt2d}, and the last step follows from simple algebra.

    We can show that 
    \begin{align}\label{eq:spectral_norm_U_ov_tau}
       \| U_{\ov{\tau}} \|^2 \leq & ~ \| U_{\ov{\tau}} \|_F^2 \notag\\
       = & ~ \sum_{i=1}^k \| u_{\ov{\tau},i} \|_2^2 \notag\\
       \leq & ~ 4 \epsilon^2 k,
    \end{align}
    where the first step follows from $\| \cdot \| \leq \| \cdot \|_F$, the second step follows from  
    taking the summation of square of Eq.~\eqref{eq:norm_u^i_ov_c}.
    
    Let 
    \begin{align}\label{eq:qr_decomposition}
        U_{\tau} = U_0 \Lambda^{-1}
    \end{align}
    be the QR decomposition.

    Then, for any $u_{*, \bot}\in\mathrm{span} (U_{*,\bot})$, we have:
    \begin{align}\label{eq:u*_bot}
        \|u_{*,\bot}^\top U_0\|_2 
        = & ~ \|u_{*,\bot}^\top U_{\tau} \Lambda\|_2 \notag\\
        \leq & ~ \|u_{*,\bot}^\top U_{\tau }\|_2 \|\Lambda\|,
\end{align}
   where the first step follows from Eq.~\eqref{eq:qr_decomposition} and the second step follows from $\|Ax\|_2 \leq \|A\| \cdot \|x\|_2$.

For the first term in the above equation
\begin{align}\label{eq:3_sqrt_kd}
       \|u_{*,\bot}^\top U_{\tau}\|_2 
       \leq & ~ \|u_{*,\bot}^\top U_{\phi}\|_2 + \|u_{*,\bot}^\top U_{\ov{\tau}}\|_2 \notag\\
       \leq & ~ \epsilon +  \|u_{*,\bot}^\top U_{\ov{\tau}}\|_2 \notag\\
        \leq & ~ \epsilon + \|U_{\ov{\tau}}\| \notag\\
        \leq & ~ \epsilon + 2\sqrt{k} \epsilon \notag\\
        \leq & ~ 3\sqrt{k} \epsilon ,
    \end{align}
  where the first step follows from triangle inequality, the second step follows from $\| u_{*,\bot}^\top \|_2 \leq d$, the third step follows from the definition of $\| \cdot \|$, the fourth step follows from Eq.~\eqref{eq:spectral_norm_U_ov_tau}, and the last step follows from $k \geq 1$.
   
    We now bound $\|\Lambda\|$ as follow:
    \begin{align}\label{eq:4/3}
        \|\Lambda\|^2
        = & ~ \frac{1}{\sigma_{\min} (\Lambda^{-1})^2} \notag\\
        = & ~ \frac{1}{\sigma_{\min} (U_0 \Lambda^{-1})^2} \notag\\
        = & ~ \frac{1}{\sigma_{\min} (U_{\tau})^2} \notag\\
        = & ~ \frac{1}{1 - \|U_{\ov{\tau}}\|^2} \notag\\
        \leq & ~ \frac{1}{1 - 2\sqrt{k}\epsilon} \notag\\
        \leq & ~ 2,
    \end{align}
    where the first step follows from Fact~\ref{fac:norm_inequality}, the second step follows from the fact that $U_0$ forms an orthonormal basis (see Part 2 of Fact~\ref{fac:U_shrink_spectral_norm}), the third step follows from the QR decomposition of $U_\tau$ (Eq.~\eqref{eq:qr_decomposition}), the fourth step follows from $\cos^2 \theta + \sin^2\theta=1$ (see Lemma~\ref{lem:sin^2_and_cos^2_is_1}), the fifth step follows from Eq.~\eqref{eq:spectral_norm_U_ov_tau}, and the last step follows from using the fact that $\epsilon < \frac{1}{100 k}$.

    Thus, we have:
    \begin{align*}
        \|u_{*,\bot}^\top U_0\|_2
        \leq & ~ \|u_{*,\bot}^\top U_{\tau }\|_2 \|\Lambda\|\\
        \leq & ~ 3\sqrt{k} \epsilon \cdot \|\Lambda\|\\
        \leq & ~ 3\sqrt{k} \epsilon \cdot 2\\
        = & ~ 6 \sqrt{k} \epsilon \\
        \leq & ~ \frac{1}{10},
    \end{align*}
    where the first step follows from Eq.~\eqref{eq:u*_bot}, the second step follows from Eq.~\eqref{eq:3_sqrt_kd}, the third step follows from Eq.~\eqref{eq:4/3},  
    the fourth step follows from simple algebra, and the last step follows from the fact that $\epsilon \leq \frac{1}{10^4 k}$.
    
    This proves the first part of the lemma.

    {\bf Proof of Part 2.}
    The incoherence of $U_0$ (see incoherence in Definition~\ref{def:U_is_mu}) is
    \begin{align*}
         \frac{\sqrt{m}}{\sqrt{k}} \max_{i\in [m]} \|e_i^\top U_0\|_2
        \leq & ~ \frac{\sqrt{m}}{\sqrt{k}} \max_{i\in [m]} \|e_i^\top U_{\tau} \Lambda\|_2\\
        \leq & ~ \frac{\sqrt{m}}{\sqrt{k}} \max_{i\in [m]} \|e_i^\top U_{\tau}\|_2 \|\Lambda\|\\
        \leq & ~ \frac{2\sqrt{m}}{\sqrt{k}} \max_{i\in [m]} \|e_i^\top U_{\tau}\|_2\\
        \leq & ~ \frac{2\sqrt{m}}{\sqrt{k}} \cdot \tau \\
        = & ~ \frac{2\sqrt{m}}{\sqrt{k}} \cdot 2 \mu \sqrt{k}/\sqrt{n} \\
        \leq & ~ 4\mu,
    \end{align*}
    where the first step follows from the definition of incoherence, the second step follows from $\|Ax\|_2 \leq \|A\| \cdot \|x\|_2$, the third step follows from Eq~\eqref{eq:4/3}, the fourth step follows from $U_{\tau}$ is truncating at $\tau$, the fifth step follows from $\tau = 2 \mu \sqrt{k}/\sqrt{n}$ (see Definition~\ref{def:tau}), and the last step follows from $n \geq m$.
\end{proof}

\subsection{Definitions for \texorpdfstring{$U_0$, $U_{\tau}$, $U_{\phi}$}{}}
\label{sub:init_conditions:U_0_def}

We provide the definitions for $U_0$, $U_{\tau}$, and $U_{\phi}$. We also include a procedure that clips rows with large norm then perform Gram-Schmidt.

\begin{definition}\label{def:U_0}

Let $\frac{1}{p}P_{\Omega_0}(M)=U\Sigma V^\top$ be the SVD of the matrix $\frac{1}{p}P_{\Omega_0}(M)$, where $U\in \R^{m\times m}, \Sigma\in \R^{m\times n}$ and $V\in \R^{n\times n}$. 

Let $\tau:= \frac{2\mu\sqrt{k}}{\sqrt{n}}$.

Let $U_{\phi}\in \R^{m\times k}$ be the $k$ columns of $U$ corresponding to the top-$k$ left singular vectors of $U$.

We define $U_{\tau}\in \R^{m\times k}$ as follows:

\begin{align*}
    U_{\tau, i,*} := & ~\begin{cases}
    U_{\tau,i,*}, & \text{if $\|\wh U_{i,*} \|_2 \leq \tau$}, \\
    0, & \text{otherwise}.
    \end{cases}
\end{align*}

Finally, we define $U_0 \in \R^{m\times k}$ to be the matrix formed from performing Gram-Schmidt on $U_{\tau}$, i.e., columns of $U_0$ are orthonormal.

To make convenient of base case of induction proof, we call $U_{\tau}$ to be $\wh{U}_0$.
\end{definition}

\begin{algorithm}[!ht]
\caption{Clipping and Gram-Schmidt}
\label{alg:init}
\begin{algorithmic}[1]
\Procedure{Init}{$U_{\phi}\in \R^{m\times k}$, $\mu\in (0,1)$}
\State $\tau \gets \frac{2\mu\sqrt{k}}{\sqrt{n}}$
\For{$i=1\to m$}
\State $U_{\tau,i,*}=\begin{cases}
U_{\phi,i,*}, & \text{if $\|U_{\phi,i,*}\|_2\leq \tau$,} \\
0, & \text{otherwise.}
\end{cases}$
\EndFor
\State $(Q, R)\gets \textsc{QR}(U_{\tau})$
\State $U_0\gets Q$
\State \Return $U_0, U_{\tau}$
\EndProcedure
\end{algorithmic}
\end{algorithm}

\subsection{Initialization Condition: Main Result}
\label{sub:init_conditions:main}

We provide a Lemma which bounds $\dist(U_0, U_*)$ and shows that $U_0$ is incoherent with parameter $4\mu\sqrt{k}$.
 
\begin{lemma} 
\label{lem:main_init}
    Let $M \in \R^{m \times n}$ be a $(\mu,k)$-incoherent matrix (see Definition~\ref{def:mu_k_incoherent}). 
    
    Let $\Omega \subset [m] \times [n]$ and $p \in (0,1)$ be  defined as Definition~\ref{def:sampling_probability}.
    
    Let $U_0 \in \R^{m \times k}$ be defined as Definition~\ref{def:U_0}. 
    
    Then, we have the following properties
    \begin{itemize}
        \item $\dist(U_0, U_*) \leq \frac{1}{2}$ and
        \item $U_0$ is incoherent with parameter $4\mu\sqrt{k}$.
    \end{itemize}
    holds with probability at least $1-1/\poly(n)$.
\end{lemma}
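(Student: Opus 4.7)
The proof is essentially a straightforward composition of the two preceding lemmas, so the plan is to chain them together with matching parameter choices. Specifically, I will set the initialization-accuracy parameter to
\begin{align*}
\epsilon_{\phi} := \frac{1}{10^4 k},
\end{align*}
which is the quantity that appears in both Lemma~\ref{lem:step_three} and Lemma~\ref{lem:step_four}, and then verify that the hypotheses of Lemma~\ref{lem:step_four} are satisfied so that its two conclusions can be read off directly.

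First I would invoke Lemma~\ref{lem:step_three} applied to $U_{\phi}$ as constructed in Definition~\ref{def:U_0} (the top-$k$ left singular vectors of $\frac{1}{p}P_{\Omega_0}(M)$). Since the sampling probability $p$ is chosen per Definition~\ref{def:sampling_probability} (which ensures $p \geq 10^8 C^2 k^2 (\sigma_1^*/\sigma_k^*)^2 / m$ as required in the proof of Lemma~\ref{lem:step_three}), that lemma yields $\dist(U_{\phi},U_*) \leq \epsilon_{\phi}$ with probability at least $1-1/\poly(n)$.

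Next, I would feed this into Lemma~\ref{lem:step_four}. The input requirements there are: $U_*$ is $\mu$-incoherent (given by Assumption~\ref{assump:matrix_completion}, which states $M$ is $(\mu,k)$-incoherent, hence $U_*$ is $\mu$-incoherent by Definition~\ref{def:mu_k_incoherent}); $U_{\phi}$ is orthonormal (true by construction from the SVD in Definition~\ref{def:U_0}); $U_{\tau}$ is obtained by zeroing out rows of $U_{\phi}$ with norm exceeding $\tau = 2\mu\sqrt{k}/\sqrt{n}$ (matching Definition~\ref{def:tau} and Definition~\ref{def:U_0}); and $U_0$ is an orthonormal basis of $U_{\tau}$ (matching the Gram--Schmidt step of Definition~\ref{def:U_0} / Algorithm~\ref{alg:init}). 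With these hypotheses verified, Lemma~\ref{lem:step_four} gives $\dist(U_0,U_*) \leq 1/2$ and that $U_0$ is incoherent with parameter $4\mu\sqrt{k}$.

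Combining the two, both conclusions hold simultaneously with probability $1 - 1/\poly(n)$ by a union bound on the failure event of Lemma~\ref{lem:step_three} (Lemma~\ref{lem:step_four} being deterministic once the distance bound on $U_{\phi}$ holds). Since each step is a direct application of a previously established result with the definitions already arranged to line up, there is no genuine obstacle; the only care needed is to confirm that the $\epsilon_{\phi}$ produced by Lemma~\ref{lem:step_three} is exactly the $\epsilon_{\phi}$ required by Lemma~\ref{lem:step_four}, which it is by construction.
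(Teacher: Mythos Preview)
Your proposal is correct and follows the same approach as the paper: invoke Lemma~\ref{lem:step_three} to obtain $\dist(U_{\phi},U_*)\le \tfrac{1}{10^4 k}$, then feed this into Lemma~\ref{lem:step_four} to conclude both the distance bound on $U_0$ and its incoherence. Your write-up is in fact more careful than the paper's terse proof (which contains an apparent typo, writing $U_0$ where $U_{\phi}$ is meant), since you explicitly verify that the hypotheses of Lemma~\ref{lem:step_four} are met and that $\epsilon_{\phi}$ matches across the two lemmas.
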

\begin{proof}

From Lemma~\ref{lem:step_three}, we see that $U_0$ satisfy that
\begin{align*}
\dist (U_0, U_*) \leq \frac{1}{100k}
\end{align*}

Using Lemma~\ref{lem:step_four}, we finish the proof.
\end{proof}

\section{Main Induction Hypothesis}
\label{sec:main_induction}

In this section, we provide our main induction hypothesis: we inductively bound $\dist(\wh V_{t+1}, V^*), \dist(\wh U_{t+1}, U^*)$ and the incoherence of $U_{t+1}, V_{t+1}$. We start by presenting our formal algorithm.

\begin{algorithm}[!ht]
\caption{Alternating minimization for matrix completion, formal version of Algorithm~\ref{alg:alternating_minimization_completion}.
} \label{alg:alternating_minimization_completion_formal} 
\begin{algorithmic}[1]
    \Procedure{FastMatrixCompletion}{$\Omega \subset [m] \times [n]$, $P_\Omega(M), k, \epsilon, \delta$}  \Comment{Theorem~\ref{thm:main}} 
    \State Let $\epsilon \in (0,1)$ denote the final accuracy of algorithm.
    \State Let $\delta \in (0,1)$ denote the final failure probability.
    \State \Comment{Partition $\Omega$ into $2T + 1$ subsets $\Omega_0, \cdots, \Omega_{2T}$}  
    \State \Comment{Each element of $\Omega$ belonging to one of the $\Omega_t$ with equal probability (sampling with replacement)}  
    \State $U_{\phi} = \mathrm{SVD}(\frac{1}{p} P_{\Omega_0} (M), k)$ \Comment{$U_{\phi} \in \R^{m \times k}$} 
    
    \State $U_0, \wh{U}_0 \gets \textsc{Init}(U_\phi)$ \Comment{Algorithm~\ref{alg:init}}
    
    \State $T \gets \Theta(\log(1/\epsilon))$
    \State $\epsilon_0 \gets \epsilon / \poly(n, \kappa)$
    \State $\delta_0 \gets \delta/\poly(n, T)$
    \For{$t = 0, \cdots, T-1$}  
        \State $\wh{V}_{t + 1} \gets \textsc{FastMultReg}(M \in \R^{m \times n},\wh{U}_t \in \R^{m \times k}, \Omega_{2t+1},m,n,k,\epsilon_0, \delta_0) $ \State \Comment{Alg.~\ref{alg:multiple_regression} , Lem.~\ref{lem:main_regression}. Here $\Omega_{2t+1}$ can be viewed as $m \times n$ weight matrix.} 
        \State $\wh{V}_{t + 1} \gets \textsc{FastMultReg}(M^\top \in \R^{n \times m} ,\wh{V}_{t+1} \in \R^{n \times k},\Omega_{2t+2}^\top ,n,m,k,\epsilon_0, \delta_0) $ \State \Comment{Alg.~\ref{alg:multiple_regression} , Lem.~\ref{lem:main_regression}. Here $\Omega_{2t+2}^\top$ can be viewed as $n \times m$ weight matrix.}   
    \EndFor  
    \State $X \gets \wh{U}_T\wh{V}_T^\top$
    \State \Return $X $ \Comment{$X \in \R^{m \times n}$}
     
\EndProcedure
\end{algorithmic}
\end{algorithm}

\begin{lemma}[Induction hypothesis]\label{lem:induction}
Let $M= U_* \Sigma_* V_*^\top$ be defined as Definition~\ref{def:mu_k_incoherent}. Define $\epsilon_d: = 1/10$. For all $t \in [T]$, we have the following results.

Part 1. If $U_t$ is $\mu_2$ incoherent and $\dist(\wh{U}_t, U_*) \leq \frac{1}{4} \dist(\wh{V}_t, V_*) \leq \epsilon_d$, then we have
    \begin{itemize}
        \item $\dist(\wh{V}_{t + 1}, V_*) 
        \leq   \frac{1}{4}\dist(\wh{U}_t, U^*) \leq \epsilon_d$ 
    \end{itemize}

Part 2. If $U_t$ is $\mu_2$ incoherent and $\dist(\wh{U}_t, U_*) \leq \frac{1}{4} \dist(\wh{V}_t, V_*) \leq \epsilon_d$, then we have
\begin{itemize}
    \item $V_{t+1}$ is $\mu_2$ incoherent 
\end{itemize}
    
Part 3. If $V_{t+1}$ is $\mu_2$ incoherent and $\dist(\wh{V}_{t+1}, V_*)\leq \frac{1}{4} \dist(\wh{U}_t, U_*) \leq \epsilon_d$, then we have
    \begin{itemize}
        \item $\dist(\wh{U}_{t + 1}, U_*) 
        \leq  \frac{1}{4}\dist(\wh{V}_{t + 1}, V_*) \leq \epsilon_d$
    \end{itemize}

Part 4. If $V_{t+1}$ is $\mu_2$ incoherent and $\dist(\wh{V}_{t+1}, V_*)\leq \frac{1}{4} \dist(\wh{U}_t, U_*) \leq \epsilon_d$, then we have
\begin{itemize}
    \item $U_{t+1}$ is $\mu_2$ incoherent.
\end{itemize}
The above results hold with high probability.
\end{lemma}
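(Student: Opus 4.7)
The four parts come in two symmetric pairs: Parts 3 and 4 are obtained from Parts 1 and 2 by swapping the roles of $U$ and $V$ (using $\Omega_{T+t+1}$ in place of $\Omega_{t+1}$ and the freshly computed $\wh V_{t+1}$ in place of $\wh U_t$), so I will only prove Parts 1 and 2. For both parts my strategy, following the discussion in Section~\ref{sub:technical_overview:induction_hypothesis}, is to introduce the \emph{exact} minimizer $V_{t+1} := \arg\min_{V} \|P_{\Omega_{t+1}}(\wh U_t V^\top - M)\|_F$ as a bookkeeping device, prove the statement for $V_{t+1}$ using the JNS-style closed-form analysis, and then transfer it to $\wh V_{t+1}$ using (i) the backward error bound of Lemma~\ref{lem:forward_error}/Lemma~\ref{lem:main_regression} applied with precision $\epsilon_0 = \epsilon/\poly(n,\kappa)$, and (ii) the perturbation theory for incoherence outlined in Section~\ref{sub:technical_overview:robust_perturbation_error}.

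\textbf{Part 1 (distance shrinkage).} For the exact minimizer, the normal equations for each column $j$ read $(\wh U_t^\top D^{(j)} \wh U_t)(V_{t+1})_{j,*}^\top = \wh U_t^\top D^{(j)} M_{*,j}$, where $D^{(j)}$ is the diagonal indicator of $\Omega_{2t+1,*,j}$. Writing $M = U_*\Sigma_*V_*^\top$ and using $V_*^\top V_{*,\perp}=0$, one gets an identity of the form
\begin{align*}
V_{*,\perp}^\top V_{t+1} = V_{*,\perp}^\top V_* \Sigma_* U_*^\top F\, (\wh U_t^\top \wh U_t + E)^{-1},
\end{align*}
where $F$ and $E$ are ``sampling error'' terms built from $\frac{1}{p}P_{\Omega_{2t+1}} - \mathrm{id}$ acting on the incoherent factors. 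Under Assumption~\ref{assump:matrix_completion} with $p$ as in Definition~\ref{def:sampling_probability}, Lemma~\ref{lem:modified_of_lem6.1} together with the incoherence of $\wh U_t$ (via $U_t$ being $\mu_2$-incoherent and the perturbation argument below) yields $\|E\| \le \delta_{2k}\sigma_k^*/\sigma_1^*$ and $\|F\| \le \delta_{2k}\sigma_k^*$. Taking the spectral norm and using Lemma~\ref{lem:more_properties_angle_distance} to pass between $\dist$, $\sin\theta$, $\tan\theta$, one obtains $\dist(V_{t+1},V_*)\le \tfrac{1}{8}\dist(\wh U_t,U_*)$. Finally, the regression solver gives $\|\wh V_{t+1}-V_{t+1}\|\le \epsilon_0$ with $\epsilon_0$ polynomially smaller than $\sigma_k^*$, so Lemma~\ref{lem:weyl} and the definition of $\dist$ upgrade this to $\dist(\wh V_{t+1},V_*)\le \tfrac{1}{4}\dist(\wh U_t,U_*)\le \epsilon_d$.

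\textbf{Part 2 (incoherence of $V_{t+1}$).} Using the column-wise normal equations again, each row of $V_{t+1}$ is a linear combination of rows of $V_*$ weighted by a well-conditioned $k\times k$ matrix plus a concentration error. Concretely, $(V_{t+1})_{j,*} = V_{*,j,*}\Sigma_* U_*^\top \wh U_t(\wh U_t^\top \wh U_t+E)^{-1} + (\text{error})$, and both the concentration error and the deviation of $\wh U_t^\top \wh U_t$ from $I$ are controlled by Lemma~\ref{lem:modified_of_lem6.1} and the incoherence of $\wh U_t$ and $V_*$. Bounding $\|(V_{t+1})_{j,*}\|_2 \le \mu_2\sqrt{k/n}$ then follows by multiplying $\|V_{*,j,*}\|_2\le \mu\sqrt{k/n}$ by $\|\Sigma_*\|\cdot\|\wh U_t\|\cdot\|(\wh U_t^\top \wh U_t+E)^{-1}\|=O(\kappa)$ together with the bound on the error term, and choosing $\mu_2$ as a fixed multiple of $\mu\,\poly(\kappa,k)$.

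\textbf{Exact-to-approximate bridging and the main obstacle.} The nontrivial step, and where the robust framework really has to do work, is bridging between $V_{t+1}$ and $\wh V_{t+1}$ for \emph{incoherence}: the regression solver only promises $\|\wh V_{t+1}-V_{t+1}\|\le \epsilon_0$, so each row perturbation is bounded by $\epsilon_0$. Incoherence is a property of the orthonormalization, not of the raw factor, so a direct row-norm bound is not enough. Following Section~\ref{sub:technical_overview:robust_perturbation_error}, I will reformulate incoherence as leverage score, write
\begin{align*}
\bigl|\|(V_{t+1}^\top V_{t+1})^{-1/2}(V_{t+1})_{j,*}^\top\|_2 - \|(\wh V_{t+1}^\top \wh V_{t+1})^{-1/2}(\wh V_{t+1})_{j,*}^\top\|_2\bigr|
\end{align*}
as a difference of square roots, and bound the numerator by Lemma~\ref{lem:perturb} (Wedin) applied to the pseudoinverses of $V_{t+1}$ and $\wh V_{t+1}$, with the denominator bounded below by the minimum singular value. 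Because $\sigma_{\min}(V_{t+1})$ is controlled by the well-conditioning established in Part 2 and $\epsilon_0$ is chosen inverse-polynomially in $\kappa$, the change in each row's leverage score is $\le \mu_2\sqrt{k/n}$-scale small, so $\wh V_{t+1}$ inherits $O(\mu_2)$-incoherence, which feeds Part 4 of the next round. The main obstacle is precisely this leverage-score perturbation step, since it requires simultaneously controlling $\sigma_{\min}(V_{t+1})$, $\|V_{t+1}-\wh V_{t+1}\|$, and the row-level magnitudes; the remaining ingredients are either standard sampling concentration or direct consequences of the results in Section~\ref{sec:high_accuracy_solver}.
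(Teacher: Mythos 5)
Your core calculation for Parts 1 and 2 — expand the exact minimizer $V_{t+1}$ via the per-column normal equations as a power-method term $V_*\Sigma_*U_*^\top U_t$ plus a sampling-error term, annihilate the clean term with $V_{*,\perp}^\top$, and control the error using Lemma~\ref{lem:modified_of_lem6.1} and the $\mu_2$-incoherence of $U_t$ — is the same JNS-style closed-form analysis the paper uses; the paper packages the error pieces as $B,C,D,S,F$ (Definition~\ref{def:error_matrix}) and bounds $\|F\Sigma_*^{-1}\|$ and $\|\Sigma_*R_{t+1}^{-1}\|$ via Lemmas~\ref{lem:norm_of_F_Sigma^*^-1} and~\ref{lem:norm_of_Sigma^*_R^-1}, while you route through $\tan\theta/\sin\theta$ via Lemma~\ref{lem:more_properties_angle_distance}, a cosmetic difference. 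The substantive divergence is organizational and bears on what this lemma's proof actually needs to do. In the paper, \emph{every} object in Lemma~\ref{lem:induction} is exact: $\wh V_{t+1}$ is the exact argmin (Definition~\ref{def:update_rule}) and $V_{t+1}$ its Gram--Schmidt orthonormalization (Definition~\ref{def:update_of_V_wh_V}), so the whole induction lives in the exactly-solvable world and no approximate-to-exact transfer appears in this proof at all. The leverage-score perturbation bridging you identify as "the main obstacle" — built on Lemma~\ref{lem:purturb_leverage_score} and the Wedin bound Lemma~\ref{lem:perturb} — is deliberately factored out into the separate Lemma~\ref{lem:induction_control_eps_sketch} and only combined with Lemma~\ref{lem:induction} inside the proof of Theorem~\ref{thm:main}. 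Your plan therefore proves a slightly stronger, "already-robustified" statement within this lemma, duplicating the work the paper keeps modular; that is a legitimate alternative packaging, at the cost of entangling the clean exact induction with the perturbation machinery and making each inductive step heavier. One small technical slip worth flagging: the displayed identity
\begin{align*}
V_{*,\perp}^\top V_{t+1} = V_{*,\perp}^\top V_* \Sigma_* U_*^\top F\, (\wh U_t^\top \wh U_t + E)^{-1}
\end{align*}
is vacuously zero as written, since $V_{*,\perp}^\top V_* = 0$; what you intend (and what the paper derives from $V_{t+1}=(V_*\Sigma_*U_*^\top U_t + F)R_{t+1}^{-1}$) is that $V_{*,\perp}^\top V_{t+1}$ reduces to $V_{*,\perp}^\top F\cdot(\cdot)^{-1}$ alone, with only the sampling-error matrix surviving.
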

\begin{proof}

{\bf Proof of Part 1.} To prove this part, we need to use Lemma~\ref{lem:norm_of_F_Sigma^*^-1}, and Lemma~\ref{lem:norm_of_Sigma^*_R^-1}. To use Lemma~\ref{lem:norm_of_F_Sigma^*^-1}, we need the condition that $U_t$ is $\mu_2$ incoherent. To use Lemma~\ref{lem:norm_of_Sigma^*_R^-1}, we need two conditions: one is that $U_t$ be a $\mu_2$-incoherent and the other is $\dist(U_t,U_*) \leq 1/2$.

We can show that
\begin{align}\label{eq:dist_V*_Vt+1}
\dist(V_*, V_{t+1})
= & ~ \|  V_{*,\bot}^\top V_{t+1} \| \notag \\
= & ~ \|  V_{*,\bot}^\top ( V_* \Sigma_* U_*^\top U_t + F) \cdot R_{t+1}^{-1} \| \notag \\
= & ~ \|  V_{*,\bot}^\top  V_* \Sigma_* U_*^\top U_t R_{t+1}^{-1} + V_{*,\bot}^\top F  R_{t+1}^{-1} \| \notag \\
= & ~ \|  V_{*,\bot}^\top F \cdot R_{t+1}^{-1} \| \notag\\
\leq & ~ \| F \cdot R_{t+1}^{-1} \| \notag\\
= & ~ \| F  \Sigma_*^{-1} \Sigma_* R_{t+1}^{-1} \| \notag\\
\leq & ~ \| F \Sigma_*^{-1} \| \cdot \| \Sigma_* R_{t+1}^{-1} \|,
\end{align}
where the first step follows from the definition of distance (Definition~\ref{def:dist}), the second step follows from the definition of $V_{t+1}$ ($V_{t+1} = ( V_* \Sigma_* U_*^\top U_t + F) \cdot R_{t+1}^{-1}$, see Definition~\ref{def:update_of_V_wh_V}), the third step follows from simple algebra, the fourth step follows from $V_{*,\bot}^\top V_* = 0$, the fifth step follows from Fact~\ref{fac:U_shrink_spectral_norm}, the sixth step follows from $\Sigma_*^{-1} \Sigma_* = I$, and the seventh step follows from $\| A \cdot B \| \leq \|A \| \cdot \| B \|$ (Fact~\ref{fac:norm_inequality}).

By Eq.~\eqref{eq:dist_V*_Vt+1}, we can upper bound
\begin{align*}
\dist(V_*, V_{t+1} )
\leq & ~ \| F \Sigma_*^{-1} \| \cdot \| \Sigma_* R_{t+1}^{-1} \| \\
\leq & ~ 2 \delta_{2k} \cdot k \dist(U_t,U_*) \cdot \| \Sigma^* R_{t+1}^{-1} \| \\
\leq & ~ 2 \delta_{2k} \cdot k \dist(U_t,U_*) \cdot 2 \sigma_1^* /\sigma_k^* \\
\leq & ~ \frac{1}{4} \dist(U_t,U_*),
\end{align*}
where the second step follows from Lemma~\ref{lem:norm_of_F_Sigma^*^-1}, the third step follows from Lemma~\ref{lem:norm_of_Sigma^*_R^-1}, the last step follows from Definition~\ref{def:delta_2k}.

{\bf Proof of Part 2.} In this proof, we need to use Lemma~\ref{lem:norm_of_Sigma^*_R^-1}, Lemma~\ref{lem:bound_B^-1},
Lemma~\ref{lem:bound_C_j}. To use Lemma~\ref{lem:norm_of_Sigma^*_R^-1}, we need the condition that $U_t$ is $\mu_2$ incoherent and the other is $\dist(U_t,U_*) \leq 1/2$. To use Lemma~\ref{lem:bound_B^-1}, we need the condition that $U_t$ is $\mu_2$ incoherent. To use Lemma~\ref{lem:bound_C_j}, we need the condition that $U_t$ is $\mu_2$ incoherent.
  
    For each $j \in [n]$, according to Definition~\ref{def:update_of_V_wh_V}, we have
    \begin{align*}
        V_{t+1,j} = R_{t+1}^{-1} ( D_j - B_j^{-1} ( B_j D_j - C_j) ) \Sigma_* V_{*,j}.
    \end{align*}
    Therefore,
    \begin{align}\label{eq:rewrite_V_t+1_j}
        \| V_{t+1,j} \|_2 \leq \frac{ \sigma_1^* }{ \sigma_{\min} (R_{t+1}) } \cdot \| V_{*,j} \|_2 \cdot  ( \| D_j \| + \| B_j^{-1} ( B_j D_j - C_j ) \| ) 
    \end{align}
    For the first term of Eq.~\eqref{eq:rewrite_V_t+1_j}, we have 
    \begin{align}\label{eq:sigma1_over_sigma_min}
        \frac{ \sigma_1 }{ \sigma_{\min} (R_{t+1}) } \leq 2 \sigma_1^*/\sigma_k^*
    \end{align}
    where it follows from Part 2 of Lemma~\ref{lem:norm_of_Sigma^*_R^-1}.

    For the second term of Eq.~\eqref{eq:rewrite_V_t+1_j}, we have
    \begin{align}\label{eq:v*j2}
        \| V_{*,j} \|_2 \leq \mu \sqrt{k} / \sqrt{n}
    \end{align}    

    For the third term of Eq.~\eqref{eq:rewrite_V_t+1_j}, we have
    \begin{align}\label{eq:dj_and_bj-1}
        \| D_j \| + \| B_j^{-1} ( B_j D_j - C_j ) \| 
        \leq & ~ \| D_j \| + \| B_j^{-1} \| \cdot \| B_j D_j - C_j \| \notag \\
        \leq & ~ 1 +  \| B_j^{-1} \| \cdot \| B_j D_j - C_j \| \notag \\
        \leq & ~ 1 + 2 \cdot \| B_j D_j - C_j \| \notag \\
        \leq & ~ 1 + 2 \cdot ( \| B_j D_j \| + \| C_j \| ) \notag \\
        \leq & ~ 1 + 2 \cdot ( 2 + 2 ) \notag \\
        \leq & ~ 10,
    \end{align}
    where the first step follows from $\| A B \| \leq \| A \| \cdot \| B \|$, the second step follows from $\| D_j \| \leq 1$, the third step follows from Lemma~\ref{lem:bound_B^-1}, the fourth step follows from $\| A +  B \| \leq \| A \| + \| B \|$, the fifth step follows from Lemma~\ref{lem:bound_C_j} (for $\| C_j \| \leq 2$), and the last step follows from simple algebra. 
    
    Combining Eq.~\eqref{eq:rewrite_V_t+1_j}, Eq.~\eqref{eq:sigma1_over_sigma_min}, Eq.~\eqref{eq:v*j2}, and Eq.~\eqref{eq:dj_and_bj-1}, we have
    \begin{align*}
        \| V_{t+1,j} \|_2 \leq & ~ 2 \frac{\sigma_1^*}{ \sigma_k^* } \cdot \mu \frac{\sqrt{k}}{\sqrt{n}} \cdot 10 \\
        \leq & ~ \mu_2 \cdot \frac{\sqrt{k}}{\sqrt{n}},
    \end{align*}
    where the second step follows from Definition~\ref{def:mu_2}

{\bf Proof of Part 3 and Part 4.} By a symmetric argument, we can also prove
\begin{align*}
\dist (U_{t+1} , U_*) \leq \frac{1}{4} \dist (V_{t+1}, V_*).
\end{align*}
and $U_{t+1}$ is $\mu_2$ incoherent.
\end{proof}

We are now ready to prove the main theorem of this paper (Theorem~\ref{thm:main})

\begin{proof}[Proof of Theorem~\ref{thm:main}]
For the correctness part, it follows from combining the Init condition (Lemma~\ref{lem:main_init}), perturbation lemma (Lemma~\ref{lem:induction_control_eps_sketch}) Induction lemmas (Lemma~\ref{lem:induction}).

For the running time part, it follows from using Lemma~\ref{lem:main_regression} for $2T$ times. For each iteration, we use twice, and there are $T$ iterations.
\end{proof}
\section{General Case Update Rules and Notations}
\label{sec:update_rules_and_notations}

In this section, we review various properties of rank-$k$ updates due to our algorithm. In Section~\ref{sub:update_rules_and_notations:update}, we provide the formal definition of the updated rule. In Section~\ref{sub:update_rules_and_notations:error}, we give the definition of the error matrix and analyze its properties. In Section~\ref{sub:update_rules_and_notations:update_rule}, we focus on providing the definition for the update rule for $V$.

\subsection{Update Rule}
\label{sub:update_rules_and_notations:update}

To begin with, we formally define the updated rule as follows.

\begin{definition}[Update rule]\label{def:update_rule}
 
We define several updated rules here:

\begin{itemize}
        \item Part 1. We define the QR factorization $\wh{U}_t \in \R^{m \times k}$ as $\wh{U}_t 
        :=  U_tR_{t, U} $.
        \begin{itemize}
            \item Here $U_t \in \R^{m \times k}$ and $R_{t,U} \in \R^{k \times k}$
        \end{itemize}
        \item Part 2. We define $\wh{V}_{t + 1} \in \R^{n \times k}$ as follows $\wh{V}_{t + 1} 
        :=  \arg\min_{\wh{V} \in \R^{n \times k}} \|P_\Omega(U_t\wh{V}^\top) - P_\Omega(M)\|_F^2$.
        \item Part 3. We define the QR decomposition of $\wh{V}_{t+1} \in \R^{n \times k}$ as follows $\wh{V}_{t + 1} 
        :=  V_{t + 1} R_{t + 1, V}  $.
        \begin{itemize}
            \item Here $V_{t+1} \in \R^{n \times k}$ and $R_{t+1,V} \in \R^{k \times k}$
        \end{itemize}
        \item Part 4. We define $\wh{U}_{t+1} \in \R^{m \times k}$ as follows $
        \wh{U}_{t + 1} 
       : =  \arg\min_{\wh{U} \in \R^{m \times k}} \|P_\Omega(\wh{U}V_{t + 1}^\top) - P_\Omega(M)\|_F^2$.
\end{itemize}

\end{definition}

We note that the update rule does not directly reflect our algorithm. Nevertheless, we start by analyzing this QR-based update and later connecting it to our algorithm.

\begin{definition}[$\mu_2$]\label{def:mu_2}
    We define $\mu_2$ as follows
    \begin{align*}
    \mu_2 := 40 \cdot \frac{ \sigma_1^*}{\sigma_k^*} \cdot \sqrt{k} \cdot \mu.
    \end{align*}
\end{definition}

\subsection{Error Matrix}
\label{sub:update_rules_and_notations:error}

Next, we provide several definitions related to the error matrix $F$. For most of the notations, we follow~\cite{jns13} (Note that those definitions are implicitly presented on page 16 in \cite{jns13}). Note that we use $U_*,V_*$ to denote the optimal low rank factor instead of $U^*,V^*$ to ease notations.
\begin{definition}[Error matrix]\label{def:error_matrix}
For a given $V_* \in \R^{n \times k}$ matrix, for each $j\in [n]$, we use $v_{*,i}$ to denote the $i$-th row of $V_*$.

We define $v_*\in\R^{nk}$ as follows
\begin{align*}
v_* := \vect(V_*),
\end{align*}
which can be written as 
\begin{align*} 
    v_* = [v_{*,1}^{\top}, v_{*,2}^{\top}, \ldots, v_{*,k}^{\top}]^\top.
\end{align*}

    We use $u_{t,i}$ to denote the $i$-th row of $U_t \in \R^{m \times k}$.

    We use $u_{*,i}$ to denote the $i$-th row of $U_* \in \R^{m \times k}$.
    
    \begin{itemize}
        \item We define $B \in \R^{nk \times nk}$ to be the matrix where the $j$-th block matrix is $B_j$, i.e., 
        \begin{align*}
        B : = \begin{bmatrix}
            B_1 & & & \\
            & B_2 & & \\
            & & \ddots & \\
            & & & B_n
        \end{bmatrix}
        \end{align*}
        For each $j \in [n]$, we define $B_j \in \R^{k \times k}$ as follows
    \begin{align*}
        B_j:=\frac{1}{p} \sum_{i \in \Omega_{*,j}} u_{t,i} u_{t,i}^\top .
    \end{align*}
        \item We define matrix $C \in \R^{nk \times nk}$ to be a diagonal block matrix where the the $j$-th diagonal block is $C_j \in \R^{k \times k}$
        \begin{align*}
            C_j := \frac{1}{p} \sum_{i \in \Omega_{*,j}} u_{t,i} u_{*,i}^\top 
        \end{align*}
        \item We define matrix $D \in \R^{nk \times nk}$ to be the matrix where the $j$-th diagonal block is $D_j \in \R^{k \times k}$
        \begin{align*}
           D_{j} := U_t^\top U_*
        \end{align*} 
        \item We define matrix $S \in \R^{n k \times n k}$ as  
        \begin{align*}
            S = \begin{bmatrix}
            \sigma_1^* I_n & & & \\
            & \sigma_2^* I_n & & \\
            & & \ddots & \\
            & & & \sigma_k^* I_n
            \end{bmatrix}
        \end{align*} 
    \end{itemize}

    For each $i \in [k]$, we define $F_i \in \R^n$ as follows  
    \begin{align*}
        F_i := \begin{bmatrix}
        (B^{-1}(BD - C) Sv_*)_{n(i-1) + 1}\\
        (B^{-1}(BD - C) Sv_*)_{n(i-1) + 2}\\
        \vdots\\
        (B^{-1}(BD - C) Sv_*)_{n(i-1) + n}
        \end{bmatrix}
    \end{align*}
    We define $F \in \R^{n \times k}$ as follows: 
    \begin{align*}
        F := 
        \begin{bmatrix}
            F_1 & F_2 & \cdots & F_k .
        \end{bmatrix}
    \end{align*}
\end{definition}

\begin{claim}\label{cla:S_BBDC_is_BBDC_S}
Let $B,D,C$ be diagonal block matrices (where each block has size $k \times k$).

Let $S$ be block rescaled identity matrix (where each block has size $n \times n$). 

Then, we have
\begin{align*}
S B^{-1} (B D - C) = B^{-1} (BD - C) S.
\end{align*}
\end{claim}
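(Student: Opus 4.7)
The plan is to establish the commutation $SE = ES$ for $E := B^{-1}(BD - C)$ by exploiting the structural properties of the matrices. First, I would simplify $E$: since $B$ is invertible and $B, C, D$ are all block diagonal with $n$ diagonal $k \times k$ blocks, the product $B^{-1}(BD - C)$ collapses to $D - B^{-1}C$, which is itself block diagonal with the same $n$-block-of-size-$k$ structure. Its $j$-th diagonal block is $E_j := U_t^\top U_* - B_j^{-1} C_j$, where we used that $D_j = U_t^\top U_*$ is independent of $j$. Thus the claim reduces to $S(D - B^{-1}C) = (D - B^{-1}C) S$.

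Next, I would interpret the identity through the tensor decomposition $\R^{nk} \cong \R^k \otimes \R^n$ implicit in the vectorization $v_* = \vect(V_*)$. Under this identification, the definition of $S$ shows that $S = \Sigma_* \otimes I_n$, i.e., $S$ scales only the $\R^k$-factor (indexed by the column index $i$ of $V_*$ via $\sigma_i^*$) while acting as identity on the $\R^n$-factor (indexed by the row index $j$). By contrast, $E$ is block diagonal across the $\R^n$-factor: each block $E_j$ acts on the $\R^k$-slice corresponding to row $j$. Thus $S$ acts purely through the column index whereas $E$ is parametrized by the row index, and operators acting on complementary tensor factors commute, which yields $SE = ES$ conceptually.

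Finally, I would make this rigorous by an entrywise comparison. Since $S$ is diagonal, $(SE)_{p,q} = S_{p,p} E_{p,q}$ and $(ES)_{p,q} = E_{p,q} S_{q,q}$, so the identity reduces to verifying $S_{p,p} = S_{q,q}$ whenever $E_{p,q} \neq 0$. The \emph{main obstacle} is reconciling the two distinct block decompositions at work, since $S$ is block diagonal in $k$ blocks of size $n$, while $E$ is block diagonal in $n$ blocks of size $k$. Resolving this mismatch requires unwinding the paper's indexing convention, which declares position $(i-1)n + j$ in $v_*$ to correspond to the entry $V_{*, j, i}$ of the underlying matrix, so that the scalar $\sigma_i^*$ assigned by $S$ is indexed only by the column factor $i$ and is preserved by the row-wise action of $E$. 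Once this bookkeeping is aligned, the commutation of scalar multiplication with matrix multiplication on each block completes the proof.
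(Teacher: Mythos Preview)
There is a genuine gap in your final step. You correctly reduce the question to showing $S_{p,p}=S_{q,q}$ whenever $E_{p,q}\neq 0$, and you correctly flag that $S$ and $E$ are block diagonal with respect to \emph{different} partitions of $[nk]$. But the resolution you propose does not go through: writing $E=\diag(E_1,\dots,E_n)$ with $k\times k$ blocks means $E_{p,q}\neq 0$ only when $\lceil p/k\rceil=\lceil q/k\rceil$, while $S_{p,p}=S_{q,q}$ requires $\lceil p/n\rceil=\lceil q/n\rceil$. These two conditions are \emph{not} equivalent in general. Concretely, take $n=3$, $k=2$: then $E_{3,4}=(E_2)_{1,2}$ is generically nonzero, yet $S_{3,3}=\sigma_1^*$ and $S_{4,4}=\sigma_2^*$, so $SE\neq ES$. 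Your tensor heuristic ``$S$ acts on the $\R^k$ factor, $E$ on the $\R^n$ factor'' would be valid only if $E$ had the form $I_k\otimes(\cdot)$ in the \emph{same} identification $\R^{nk}\cong\R^k\otimes\R^n$ that makes $S=\Sigma_*\otimes I_n$; but $E$'s block structure lives in the opposite (row-major) ordering, so $E$ is not an operator on the complementary tensor factor, and the commutation does not follow.

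The paper's proof handles this by first assuming without loss of generality that $k\mid n$. Under that hypothesis every size-$k$ block $\{(j-1)k+1,\dots,jk\}$ lies entirely inside one size-$n$ block of $S$, so the $j$-th $k\times k$ diagonal sub-block of $S$ is a scalar matrix $\sigma^*_{\lceil jk/n\rceil}I_k$, which commutes with $E_j$. Your argument can be salvaged by inserting exactly this divisibility assumption (or, equivalently, by redefining one of $S$ or $B,C,D$ so that both are expressed in a common ordering of $\R^{nk}$); without it the identity you are trying to prove is simply false.
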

\begin{proof}
Without loss of generality, we can assume that $n/k$ is an integer. Then, $S$ can also be viewed as a diagonal block matrix with size $k \times k$.

The $j$-th block of $S B^{-1} (B D - C)$ is
\begin{align*}
(S B^{-1} (B D - C) )_j = & ~
S_j B_j^{-1} (B_j D_j - C_j ) \\
= & ~ B_j^{-1} (B_j D_j - C_j )  \cdot S_j \\
= & ~ (B^{-1} (BD - C) S)_j,
\end{align*}
where the first step follows from all matrices are diagonal block matrix, the second step follows from $S_j$ is an identiy matrix with resacling ($A I = I A$), and the third step follows from all matrices ar diagonal block matrix.

Thus, we complete the proof.
\end{proof}

\begin{claim}\label{cla:F_Sigma_*_is_B_BD_minus_C}
We have the following identity:

\begin{align*}
\| F \Sigma_*^{-1} \|_F^2 = \| B^{-1} (BD - C) v_* \|_2^2
\end{align*}
\end{claim}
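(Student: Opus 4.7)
The plan is to observe that $F$ is essentially a reshape of the length-$nk$ vector $y := B^{-1}(BD-C)Sv_*$ into an $n \times k$ matrix, column by column, and then exploit Claim~\ref{cla:S_BBDC_is_BBDC_S} to move $S$ to the outside, where it interacts cleanly with the right-multiplication by $\Sigma_*^{-1}$.

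More concretely, the first step is to unpack the definition of $F$: for each $i \in [k]$, $F_i \in \R^n$ consists of entries $n(i-1)+1$ through $ni$ of $y$, so $F_i$ is precisely the $i$-th length-$n$ block of $y$. Writing $y = (y_1^\top, y_2^\top, \ldots, y_k^\top)^\top$ in this block notation, we have $F_i = y_i$, and therefore
\begin{align*}
\|F \Sigma_*^{-1}\|_F^2 = \sum_{i=1}^k \frac{\|F_i\|_2^2}{(\sigma_i^*)^2} = \sum_{i=1}^k \frac{\|y_i\|_2^2}{(\sigma_i^*)^2}.
\end{align*}
Next, I would invoke Claim~\ref{cla:S_BBDC_is_BBDC_S} to rewrite $y = B^{-1}(BD-C)S v_* = S\, B^{-1}(BD-C) v_*$. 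Defining $z := B^{-1}(BD-C)v_*$, and using that $S$ acts by scaling the $i$-th length-$n$ block of any length-$nk$ vector by $\sigma_i^*$, we obtain $y_i = \sigma_i^* z_i$ for every $i \in [k]$. Substituting gives
\begin{align*}
\|F\Sigma_*^{-1}\|_F^2 = \sum_{i=1}^k \frac{\|\sigma_i^* z_i\|_2^2}{(\sigma_i^*)^2} = \sum_{i=1}^k \|z_i\|_2^2 = \|z\|_2^2 = \|B^{-1}(BD-C)v_*\|_2^2,
\end{align*}
which is exactly the claim.

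The only real subtlety — and the main obstacle — is keeping the two different block structures straight: $S$ is naturally viewed as $k$ blocks of size $n$ (one per singular value), whereas $B$, $C$, $D$ are block-diagonal with $n$ blocks of size $k$ (one per column of $V_*$). The commutation statement in Claim~\ref{cla:S_BBDC_is_BBDC_S} is what reconciles these two viewpoints, and it is essential that I apply it \emph{before} invoking the block decomposition of $y$, so that $S$ ends up on the outside and can be absorbed into the $\Sigma_*^{-1}$ factor on the matrix side. Once that commutation is in hand, the rest is just a careful bookkeeping of norms over the $k$ blocks.
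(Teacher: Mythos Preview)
Your proposal is correct and follows essentially the same approach as the paper: both expand $\|F\Sigma_*^{-1}\|_F^2$ entrywise (or, equivalently, blockwise over the $k$ columns), invoke Claim~\ref{cla:S_BBDC_is_BBDC_S} to commute $S$ past $B^{-1}(BD-C)$, and then use that $S$ scales the $i$-th length-$n$ block by $\sigma_i^*$ to cancel the $(\sigma_i^*)^{-2}$ factors. Your block-vector notation with $y_i$ and $z_i$ is a slightly cleaner packaging of exactly the same computation the paper carries out coordinate by coordinate.
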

\begin{proof}
We have
\begin{align*}
\| F \Sigma_*^{-1} \|_F^2
= & ~ \sum_{j=1}^n \sum_{i=1}^k  (  B^{-1}(BD - C) S v_*)_{n(i-1) + j}^2 (\sigma_i^*)^{-2} \\
= & ~ \sum_{j=1}^n \sum_{i=1}^k  ( S B^{-1}(BD - C) v_*)_{n(i-1) + j}^2 (\sigma_i^*)^{-2} \\
= & ~ \sum_{j=1}^n \sum_{i=1}^k S_{n(i-1)+j, n(i-1)+j}^2 (  B^{-1}(BD - C) v_*)_{n(i-1) + j}^2 (\sigma_i^*)^{-2}  \\
= & ~ \sum_{j=1}^n \sum_{i=1}^k (\sigma_i^*)^2 (  B^{-1}(BD - C) v_*)_{n(i-1) + j}^2 (\sigma_i^*)^{-2}  \\
= & ~  \sum_{j=1}^n \sum_{i=1}^k  (  B^{-1}(BD - C) v_*)_{n(i-1) + j}^2 \\
= & ~ \| B^{-1} (BD - C) v_* \|_2^2,
\end{align*}
where the first step follows from the definition of $\|\cdot \|_F^2$ the second step follows from Claim~\ref{cla:S_BBDC_is_BBDC_S}, the third step follows from that fact that $S$ is a diagonal matrix, the fourth step follows from $S_{n(i-1)+j, n(i-1)+j} = \sigma_i^*$, the fifth step follows from $(\sigma_i^*)^2$ and $(\sigma_i^*)^{-2}$ canceling out, and the last step follows from the definition of $\|\cdot\|_2^2$.  
\end{proof}

\subsection{Update Rule for \texorpdfstring{$V$}{}}
\label{sub:update_rules_and_notations:update_rule}

Now, we define the update rule for $V$.

\begin{definition}\label{def:update_of_V_wh_V}

We provide the definition of $\wh{V}_{t + 1} \in \R^{n \times k}$ and $V_{t + 1} \in \R^{n \times k}$.

\begin{itemize}
    \item We define $\wh{V}_{t+1} \in \R^{n \times k}$ as follows
    \begin{align*}
        \wh{V}_{t + 1} := V_* \Sigma_* U_{*}^{\top} U_t - F
    \end{align*}
    Note that the first term can be treated as power-method update, and the second term is the error term. Here $F \in \R^{n \times k}$ is the error matrix defined in Definition~\ref{def:error_matrix}.
    \item We define $V_{t+1} \in \R^{n \times k}$ as follows
    \begin{align*}
        V_{t + 1} := \wh{V}_{t + 1} R_{t + 1}^{-1}.
    \end{align*}
    Here $R_{t+1} \in \R^{k \times k}$ is a upper-triangular matrix obtained using QR-decomposition of $\wh{V}_{t + 1} \in \R^{n \times k}$.
\end{itemize}
The above two definitions imply that
\begin{align*}
V_{t+1} = (  V_* \Sigma_* U_{*}^{\top} U_t - F ) R_{t+1}^{-1}.
\end{align*}
\end{definition}

\section{Technical Lemmas for Distance Shrinkage}
\label{sec:distance_shrinking}

In this section, we prove a collection of technical lemmas that will facilitate us to eventually conclude our induction.

In Section~\ref{sub:distance_shrinking:Sigma}, we bound $\| F \Sigma_*^{-1} \|$ by distance. In Section~\ref{sub:distance_shrinking:BDC}, we bound $\| (BD - C) v_* \|_2$ by distance. In Section~\ref{sub:distance_shrinking_and_incoherent_preserving:Sigma}, we upper bound $\| \Sigma^* R_{t+1}^{-1} \|$. In Section~\ref{sub:incoherent_preserving:B^-1}, we upper bound $\| B^{-1} \|$. In Section~\ref{sub:incoherent_preserving:Cj}, we upper bound $\| C_j \|$.

\subsection{Upper Bounding \texorpdfstring{$\| F \Sigma_*^{-1} \|$}{} by Distance}\label{sub:distance_shrinking:Sigma}

In this section, we upper bound $\| F \Sigma_*^{-1} \|$ by distance between $U_t$ and $U_*$. We generally follow the approach of~\cite{jns13}.

\begin{lemma} 
\label{lem:norm_of_F_Sigma^*^-1}
     Let $F$ be the error matrix defined by Definition~\ref{def:update_of_V_wh_V}. Let $U_t$ be a $\mu_2$-incoherent matrix and $\dist(U_t, U_*) \leq 1/2$. Let $M$ be a $(\mu,k)$-incoherent matrix (see Definition~\ref{def:mu_k_incoherent}). Let $\Omega \subset[m] \times [n]$ and $p \in (0,1)$ be defined as Definition~\ref{def:sampling_probability}. Let $\delta_{2k} \in (0,0.1)$ be defined as Definition~\ref{def:delta_2k}.
     
     Then, we have 
     \begin{align*}
         \|F(\Sigma_*)^{-1}\| \leq 2 \delta_{2k} \cdot k \cdot \dist(U_t, U_*) 
     \end{align*}
     holds with probability least $1-1/\poly(n)$.
\end{lemma}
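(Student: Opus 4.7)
The plan is to reduce the spectral-norm bound to a Frobenius-norm bound and then invoke the algebraic identity in Claim~\ref{cla:F_Sigma_*_is_B_BD_minus_C}. First I would chain
\begin{align*}
\|F\Sigma_*^{-1}\| \le \|F\Sigma_*^{-1}\|_F = \|B^{-1}(BD-C)v_*\|_2 \le \|B^{-1}\|\cdot\|(BD-C)v_*\|_2,
\end{align*}
using $\|A\|\le\|A\|_F$ from Fact~\ref{fac:norm_inequality}, Claim~\ref{cla:F_Sigma_*_is_B_BD_minus_C}, and submultiplicativity. Because $B$ is block diagonal with $k\times k$ blocks $B_j$, we have $\|B^{-1}\|=\max_j\|B_j^{-1}\|$; the $\mu_2$-incoherence of $U_t$ then forces $\|B_j^{-1}\|\le 2$ via Lemma~\ref{lem:bound_B^-1}. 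So the entire task reduces to proving $\|(BD-C)v_*\|_2 \le \delta_{2k}\, k\, \dist(U_t,U_*)$.

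The identity that exposes the $\dist(U_t,U_*)$ factor comes from substituting $I_m = U_tU_t^\top + U_{t,\perp}U_{t,\perp}^\top$ into each block:
\begin{align*}
B_j D_j - C_j = \frac{1}{p}\sum_{i\in\Omega_{*,j}} u_{t,i}\bigl(u_{t,i}^\top U_t^\top U_* - u_{*,i}^\top\bigr) = -\frac{1}{p}\sum_{i\in\Omega_{*,j}} u_{t,i}\cdot\bigl(e_i^\top U_{t,\perp}U_{t,\perp}^\top U_*\bigr).
\end{align*}
The factor $U_{t,\perp}^\top U_*$ has spectral norm exactly $\dist(U_t,U_*)$ (Definition~\ref{def:dist}), and the orthogonality $U_t^\top U_{t,\perp}=0$ makes the unsampled sum $\sum_{i=1}^m u_{t,i}\cdot e_i^\top U_{t,\perp}U_{t,\perp}^\top U_*$ vanish, which is precisely the zero-mean hypothesis demanded by Lemma~\ref{lem:modified_of_lem6.1}. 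Expanding $\|(BD-C)v_*\|_2^2 = \sum_{j}\|(B_jD_j-C_j)v_{*,j}\|_2^2$ coordinate by coordinate and applying Lemma~\ref{lem:modified_of_lem6.1} together with the row-norm incoherence bounds on $u_{t,i}$ and $u_{*,i}$, and the choice of $p$ in Definition~\ref{def:sampling_probability}, should produce a factor of $\delta_{2k}\sqrt{k}$; the remaining $\sqrt{k}$ needed to reach $\delta_{2k}k\dist(U_t,U_*)$ comes from $\|v_*\|_2 = \|V_*\|_F \le \sqrt{k}$.

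The main obstacle is this concentration step: the sum one wants to control, once tested against an arbitrary unit $z\in\R^{nk}$, takes the form $\frac{1}{p}\sum_{(i,j)\in\Omega} (z_j^\top u_{t,i})\cdot (G_{i,*}v_{*,j})$ with $G := U_{t,\perp}U_{t,\perp}^\top U_*$, and this is not quite a bilinear $\sum_{(i,j)\in\Omega}x_i y_j$ of the shape required by Lemma~\ref{lem:modified_of_lem6.1}. One route is to process each of the $k$ output coordinates per block separately, apply Lemma~\ref{lem:modified_of_lem6.1} or Lemma~\ref{lem:bernstein_inequality} to the resulting scalar sums, and then combine via a union bound over at most $nk$ sums so that the overall failure probability stays $1/\poly(n)$. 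The hypothesis $\dist(U_t,U_*)\le 1/2$ is not used in the concentration step itself; it enters only through Lemma~\ref{lem:bound_B^-1} to guarantee $\|B_j^{-1}\|\le 2$, while the concentration argument is distance-agnostic and produces a bound that is linear in $\dist(U_t,U_*)$ as required.
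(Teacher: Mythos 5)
Your proposal follows essentially the same route as the paper. The opening chain $\|F\Sigma_*^{-1}\|\le\|F\Sigma_*^{-1}\|_F=\|B^{-1}(BD-C)v_*\|_2\le 2\|(BD-C)v_*\|_2$ is exactly what the paper does, invoking Claim~\ref{cla:F_Sigma_*_is_B_BD_minus_C} and Lemma~\ref{lem:bound_B^-1}. The paper then outsources the remaining bound to Lemma~\ref{lem:bound_BD_minus_C_dot_v^*}, whose proof you are effectively reconstructing. Your rewriting of $B_jD_j-C_j$ via $I_m=U_tU_t^\top+U_{t,\perp}U_{t,\perp}^\top$ is an equivalent parametrization of the paper's $H_{j,i}=u_{t,i}u_{t,i}^\top U_t^\top U_* - u_{t,i}u_{*,i}^\top$ with $\sum_i H_{j,i}=0$; both expose the $\dist(U_t,U_*)$ factor through $U_{t,\perp}^\top U_*$. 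You also correctly diagnose the obstacle: the sum is not directly of the $\sum_{(i,j)\in\Omega}x_iy_j$ shape required by Lemma~\ref{lem:modified_of_lem6.1}. The paper's remedy is to expand coordinate-wise over the $k^2$ entry pairs $(p,q)$ of $H_{j,i}$, applying Lemma~\ref{lem:modified_of_lem6.1} to each scalar bilinear form and summing — which is the fix you sketch. Two minor points of comparison. First, the paper's Lemma~\ref{lem:bound_BD_minus_C_dot_v^*} yields the sharper $\|(BD-C)v_*\|_2\le\delta_{2k}\dist(U_t,U_*)$ (no $k$), whereas you obtain the target $\delta_{2k}k\dist(U_t,U_*)$ by a looser use of $\|v_*\|_2\le\sqrt{k}$; since the lemma statement carries a $k$ this still suffices, but you are giving away a factor the paper retains. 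Second, a small correction: the number of scalar sums to union-bound over is $k^2$, not $nk$; this is irrelevant for the $1/\poly(n)$ conclusion but worth stating precisely.
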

\begin{proof}
We have
\begin{align*}
\| F \Sigma_*^{-1} \| 
\leq & ~ \| F \Sigma_*^{-1} \|_F \\
= & ~ \| B^{-1} (B D - C ) v_* \|_2 \\
\leq & ~ \| B^{-1} \| \cdot \| (B D - C) v_* \|_2 \\
\leq & ~ 2 \cdot \| (BD-C) v_* \|_2 \\
\leq & ~ 2 \cdot \delta_{2k} \cdot \dist (U_t, U_*),
\end{align*}
where the first step follows from $\| \cdot \| \leq \| \cdot \|_F$ (Fact~\ref{fac:norm_inequality}), the second step follows from Claim~\ref{cla:F_Sigma_*_is_B_BD_minus_C}, the third step follows from $\| A x \|_2 \leq \| A \| \cdot \| x \|_2$, the fourth step follows from Lemma~\ref{lem:bound_B^-1}, and the fifth step follows from Lemma~\ref{lem:bound_BD_minus_C_dot_v^*}.
\end{proof}

\subsection{Upper Bounding \texorpdfstring{$\| (BD - C) v_* \|_2$}{} by Distance}

\label{sub:distance_shrinking:BDC}

Now, we upper bound $\| (BD - C) v_* \|_2$. We follow similar ideas in literature \cite{jns13}.

\begin{lemma} 
\label{lem:bound_BD_minus_C_dot_v^*}
    Let $M \in \R^{m \times n}$ be a $(\mu,k)$-incoherent matrix (see Definition~\ref{def:mu_k_incoherent}). Let $\Omega \subset [m] \times [n]$ and $p \in (0,1)$ be  defined as Definition~\ref{def:sampling_probability}. Let $U_t \in \R^{m \times k}$ be $\mu_2$ incoherent matrix. For each $j \in [n]$, let $v_{*,j}$ denote the $j$-th row of $V_* \in \R^{n \times k}$.

We define $v_* \in \R^{n \times k}$ as follows 
\begin{align*}
v_{*} = \begin{bmatrix} v_{*,1} & \cdots v_{*,n} \end{bmatrix}
\end{align*}

   Then, we have:
   \begin{align*}
       \|(BD - C)v^*\|_2 \leq \delta_{2k} \cdot \dist(V_{t + 1}, V^*),
   \end{align*}
   holds with probability at least $1 -1/\poly(n)$.
\end{lemma}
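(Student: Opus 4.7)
The plan is a coordinate-wise Bernstein concentration argument applied to $(BD-C)v_*$, exploiting the block-diagonal structure of $B$, $C$, $D$. The first step is to use the identity $I_m = U_t U_t^\top + U_{t,\perp} U_{t,\perp}^\top$ to rewrite, for each $j\in[n]$,
\begin{align*}
(B_j D_j - C_j)\, v_{*,j}
= -\frac{1}{p} \sum_{i=1}^m (\delta_{ij} - p)\, u_{t,i}\, \alpha_{ij},
\qquad \alpha_{ij} := e_i^\top U_{t,\perp} U_{t,\perp}^\top U_* v_{*,j},
\end{align*}
where $\delta_{ij} := \mathbf{1}[(i,j)\in\Omega]$. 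The cancellation $\sum_{i=1}^m u_{t,i}\, \alpha_{ij} = U_t^\top U_{t,\perp} U_{t,\perp}^\top U_* v_{*,j} = 0$ makes each summand mean-zero, which is what allows Bernstein to kick in on the ``de-meaned'' form.

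The second step is to apply Bernstein's inequality (Lemma~\ref{lem:bernstein_inequality}) to each of the $nk$ scalar coordinates of $(BD-C)v_*$ separately. The per-summand $\ell_\infty$ bound uses $\mu_2$-incoherence of $U_t$ to give $|(u_{t,i})_\ell| \leq \mu_2 \sqrt{k/m}$, and $\mu$-incoherence of $V_*$ together with Cauchy--Schwarz to give $|\alpha_{ij}| \leq \|U_{t,\perp}^\top U_*\| \cdot \|v_{*,j}\|_2 \leq \mu \sqrt{k/n} \cdot \|U_{t,\perp}^\top U_*\|$. The variance term $\sum_{i=1}^m \E[(u_{t,i})_\ell^2 \alpha_{ij}^2]$ is handled via Claim~\ref{cla:sum_Ux_4th_power}, controlled by $\frac{\mu^2 k}{m} \|U_{t,\perp}^\top U_*\|^2$. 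A union bound over all $nk$ coordinates and a sum of squared per-coordinate errors, together with the sample probability in Definition~\ref{def:sampling_probability}, causes the prefactor to collapse to $\delta_{2k}$, producing
\begin{align*}
\|(BD-C)\, v_*\|_2 \leq \delta_{2k} \cdot \|U_{t,\perp}^\top U_*\|
\end{align*}
with probability at least $1-1/\poly(n)$.

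Finally, to align the right-hand side with the stated $\delta_{2k}\cdot \dist(V_{t+1}, V^*)$, I would invoke the update identity $\wh V_{t+1} = V_* \Sigma_* U_*^\top U_t - F$ from Definition~\ref{def:update_of_V_wh_V} together with the QR normalization $V_{t+1} = \wh V_{t+1} R_{t+1}^{-1}$. Projecting onto $V_{*,\perp}$ gives $V_{*,\perp}^\top V_{t+1} = -V_{*,\perp}^\top F R_{t+1}^{-1}$, so $\dist(V_{t+1}, V^*) = \|V_{*,\perp}^\top F R_{t+1}^{-1}\|$, which by Claim~\ref{cla:F_Sigma_*_is_B_BD_minus_C}, Lemma~\ref{lem:bound_B^-1}, and the $\|\Sigma_* R_{t+1}^{-1}\|$ bound of Lemma~\ref{lem:norm_of_Sigma^*_R^-1} can be expressed in the same currency as $\|(BD-C)v_*\|_2$. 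The main obstacle is the Bernstein bookkeeping: tuning the $\ell_\infty$ and variance bounds so that the constant absorbed by $p$ lands exactly at $\delta_{2k}$ after the union bound, rather than some larger multiple. A secondary obstacle is the clean translation between the two subspace-distance quantities; the cleanest route is to verify that in the regime where this lemma is consumed (inside Lemma~\ref{lem:norm_of_F_Sigma^*^-1} and then Lemma~\ref{lem:induction}) the quantities $\|U_{t,\perp}^\top U_*\|$ and $\dist(V_{t+1}, V^*)$ are both comparable to the distance driving the induction, so the stated bound is consistent.
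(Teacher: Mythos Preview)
Your coordinate-wise Bernstein argument is a genuinely different route from the paper's. The paper does \emph{not} apply Bernstein directly to $(BD-C)v_*$; instead it bounds $\|(BD-C)v_*\|_2 = \max_{\|x\|_2=1} x^\top (BD-C)v_*$, expands $x^\top(BD-C)v_* = \frac{1}{p}\sum_{p,q}\sum_{(i,j)\in\Omega} (x_j)_p (V_{*,j})_q (H_{j,i})_{p,q}$ with $H_{j,i}:=u_{t,i}u_{t,i}^\top U_t^\top U_* - u_{t,i}u_{*,i}^\top$, and then invokes the ``graph complexity'' concentration of Lemma~\ref{lem:modified_of_lem6.1} (the bound $\sum_{(i,j)\in\Omega} a_i b_j \leq C(mn)^{1/4}p^{1/2}\|a\|_2\|b\|_2$ for $\sum_i a_i=0$), using $\sum_i H_{j,i}=0$. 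Your decomposition via $\alpha_{ij}=e_i^\top U_{t,\perp}U_{t,\perp}^\top U_* v_{*,j}$ is equivalent to the paper's $H_{j,i}$ identity, and your mean-zero observation $U_t^\top U_{t,\perp}=0$ is exactly the paper's $\sum_i H_{j,i}=0$; the divergence is purely in which concentration tool is applied downstream. Your approach has the advantage of being self-contained (no need for the separate graph-complexity lemma), but be aware that coordinate-wise Bernstein followed by summing $nk$ squared bounds tends to lose extra polynomial-in-$k$ factors relative to Lemma~\ref{lem:modified_of_lem6.1}, so landing exactly at $\delta_{2k}$ rather than $k^c\,\delta_{2k}$ may require either a sharper variance bookkeeping or a vector/matrix Bernstein.

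Your third paragraph is a red herring. The stated right-hand side $\delta_{2k}\cdot\dist(V_{t+1},V^*)$ is a typo in the paper: the paper's own proof concludes with $\delta_{2k}\cdot\dist(U_t,U_*)$, and this is what is consumed by Lemma~\ref{lem:norm_of_F_Sigma^*^-1} (which then feeds into Lemma~\ref{lem:induction} to bound $\dist(V_{t+1},V^*)$ by $\dist(U_t,U_*)$). Attempting to convert $\|U_{t,\perp}^\top U_*\|$ back to $\dist(V_{t+1},V^*)$ via Lemma~\ref{lem:norm_of_Sigma^*_R^-1} and Lemma~\ref{lem:bound_B^-1} would be circular, since those very lemmas rely on the present bound to control $\|F\Sigma_*^{-1}\|$ in the first place. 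Drop that step and state the conclusion as $\|(BD-C)v_*\|_2 \leq \delta_{2k}\cdot\dist(U_t,U_*)$; that is both what your argument naturally yields and what the paper actually proves and uses.
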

\begin{proof}
    Let $X\in \R^{n \times k}$ and $x = \vect(X) \in \R^{nk}$, where $\|x\|_2 = 1$. For each $j \in [n]$, let $x_j \in \R^k$ be the $j$-th row of $X \in \R^{n \times k}$. For each $j \in [n]$, we define  $H_{j} \in \R^{k \times k}$ as
    \begin{align*}
    H_j := (B_j D_j - C_j).
    \end{align*}
    Then, by the definition of $B_j$, $C_j$, and $D_j$ (see Definition~\ref{def:error_matrix}), we can write $H_j \in \R^{k \times k}$ as
    \begin{align*}
        H_j = \frac{1}{p} \sum_{i \in \Omega_{*,j} } u_{t,i} u_{t,i}^\top U_t^\top U_* - u_{t,i} u_{*,i}^\top = \frac{1}{p} \sum_{i \in \Omega_{*,j} } H_{j,i}.
    \end{align*}
    For $j\in [n], i \in [m] $, we define $H_{j,i} \in \R^{k \times k}$ as follows
    \begin{align}\label{eq:h_ji}
        H_{j,i} := u_{t,i} u_{t,i}^\top U_t^\top U_* - u_{t,i} u_{*,i}^\top
    \end{align}
    Then, we have
    \begin{align*}
        H_j = \frac{1}{p}\sum_{i \in \Omega_{*,j} } H_{j,i}
    \end{align*}
    
    Note that, 
    \begin{align}\label{eq:Hij}
        \sum_{i=1}^n H_{j,i} = U_t^\top U_t U_t^\top U^* - U_t^\top U^* = 0.
    \end{align}
    For each $j \in[n]$, let $V_{*,j}$ denote the $j$-th row of $V_* \in \R^{n \times k}$. 
    
    Now, 
    \begin{align*}
        x^\top (BD - C)v_* 
        = & ~ \sum_{j=1}^n x_j^\top (B_j D - C_j) V_{*,j}\\
        = & ~ \frac{1}{p} \sum_{p=1}^k \sum_{q=1}^k \sum_{(i,j)\in\Omega} (x_j)_p \cdot (V_{*,j})_q \cdot (H_{j,i})_{p,q},
    \end{align*}
    where the first step follows from $j$ being defined as the $j$-th row of matrices and the second step follows from the definition of $B_j, C_j, D$ (see Definition~\ref{def:error_matrix}). 
    
     Also, using Eq.~\eqref{eq:Hij}, we have $\forall(p,q) \in [k] \times [k]$:
     \begin{align*}
         \sum_{i=1}^n (H_{j,i})_{p,q} = 0.
     \end{align*}

     Hence, we get with probability at least $1 - \frac{1}{n^3}$:
     \begin{align}\label{eq:xtop_BD-C}
         x^\top (BD - C) v_* 
         = & ~ \sum_{j=1}^n x_j^\top (B_jD - C_j) V_{*,j} \notag\\
         \leq & ~ \frac{1}{p} \sum_{p=1}^k \sum_{q=1}^k ( \sum_{j=1}^n (x_j)_p^2 \cdot (V_{*, j})_q^2 )^{1/2} \cdot ( \sum_{i=1}^m (H_{j,i})_{p,q}^2 )^{1/2},
     \end{align}
     where the first step follows from $j$ being defined as the $j$-th row of matrices and the second step follows from Lemma~\ref{lem:modified_of_lem6.1}.

     For each $q \in [k]$, we use $U_{*,q}$ to denote the $q$-th column of $U_* \in \R^{m \times k}$.
     
     Also, 
     \begin{align}\label{eq:Hjipq2}
         \sum_{i=1}^m (H_{j,i} )_{p,q}^2
         = & ~ \sum_{i=1}^m (u_{t,i})_p^2 \cdot (u_{t,i}^\top U_t^\top U_{*,q} - ( U_{*})_{i,q} )^2 \notag \\
         \leq & ~ \max_{i \in [m]} (u_{t,i})_p^2 \cdot \sum_{i=1}^m (u_{t,i}^\top U_t^\top U_{*,q} - (U_{*})_{i,q} )^2 \notag \\
         = & ~ \max_{i \in [m]} (u_{t,i})_p^2 \cdot \| (U_t U_t^\top - I) U_{*,q}\|_2^2 \notag \\
          \leq & ~ \frac{\mu_2^2 k}{m} \cdot  \| (U_t U_t^\top - I) U_{*,q}\|_2^2 \notag \\
           \leq & ~ \frac{\mu_2^2 k}{m} \cdot  \| (U_t U_t^\top - I) U_{*}\|^2 \notag \\
         \leq & ~ \frac{\mu_2^2 k}{m} \cdot \dist(U_t, U_*)^2,
     \end{align}
     where the first step follows from Eq.~\eqref{eq:h_ji}, the second step follows from $\sum_{i} a_i b_i \leq \max_i a_i \sum_i b_i$, the third step follows from simple algebra, the fourth step follows from $u_t$ is $\mu_2$ incoherent, the fifth step follows from spectral norm definition, the last step follows from the definition of distance.

     Using Eq.~\eqref{eq:xtop_BD-C},  Eq.~\eqref{eq:Hjipq2}, and  incoherence of $V_* \in \R^{n \times k}$, we get (w.p. $1 - 1/n^3$), 
    \begin{align}\label{eq:delta_2k_dist}
        \max_{x: \| x \|_2 =1 } x^\top (BD - C)v^*
        \leq & ~\sum_{p=1}^k \sum_{q=1}^k \frac{\mu_2^2 k}{mp} \dist (U_t, U_*) \|x_p\|_2 \notag\\
        \leq & ~ \delta_{2k} \dist(U_t, U_*),
    \end{align}
    where the last step follows from $\sum_{p=1}^k \|x_p\|_2 \leq \sqrt{k}\|x\|_2 = \sqrt{k}$. 
    
    Finally, we have
    \begin{align*}
        \|(BD - C)v^*\|_2 = & ~ \max_{x,\|x\| = 1} x^\top (BD - C) v^* \\
        \leq & ~ \delta_{2k} \dist(U_t, U_*),
    \end{align*}
    where the first step follows from Fact~\ref{fac:norm_inequality} and the second step follows from Eq.~\eqref{eq:delta_2k_dist}.
\end{proof}

\subsection{ Upper Bounding \texorpdfstring{$\| \Sigma^* R_{t+1}^{-1} \|$}{} by Condition Number}
\label{sub:distance_shrinking_and_incoherent_preserving:Sigma}

To bound $\|\Sigma^* R_{t + 1}^{-1}\|$, we show that $\|\Sigma^* R_{t + 1}^{-1}\| \leq  2 \cdot \sigma_1^*/\sigma_k^*$. Similarly, we also bound $\frac{\sigma_1^*}{ \sigma_{\min} (R_{t+1}) }$ by showing $\frac{\sigma_1^*}{ \sigma_{\min} (R_{t+1}) } \leq 2 \cdot \sigma_1^*/ \sigma_k^*$.

\begin{lemma} 
\label{lem:norm_of_Sigma^*_R^-1}
    Let $R_{t + 1}$ be the lower-triangular matrix obtained by QR decomposition of $\wh{V}_{t + 1}$ (see Definition~\ref{def:update_of_V_wh_V}). Let $U_t$ be a $\mu_2$-incoherent and $\dist(U_t,U_*) \leq 1/2$. Let $M$ be a $(\mu,k)$-incoherent matrix (see Definition~\ref{def:mu_k_incoherent}). Let $\Omega$ be  defined as Definition~\ref{def:sampling_probability}. Let $\delta_{2k} \in (0,0.01)$ be defined as Definition~\ref{def:delta_2k}.

    Then, we have
    \begin{itemize}
        \item Part 1.
        \begin{align*}
        \|\Sigma^* R_{t + 1}^{-1}\| \leq  2 \cdot \sigma_1^*/\sigma_k^*.
    \end{align*}
    \item Part 2.
    \begin{align*}
        \frac{\sigma_1^*}{ \sigma_{\min} (R_{t+1}) } \leq 2 \cdot \sigma_1^*/ \sigma_k^*
    \end{align*}
    \end{itemize}
\end{lemma}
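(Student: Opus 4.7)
The plan is to observe that both parts of the lemma reduce to a single lower bound on $\sigma_{\min}(R_{t+1})$. Since $\wh V_{t+1} = V_{t+1} R_{t+1}$ with $V_{t+1}^\top V_{t+1} = I_k$ (QR decomposition), we have $\wh V_{t+1}^\top \wh V_{t+1} = R_{t+1}^\top R_{t+1}$, so $\sigma_{\min}(R_{t+1}) = \sigma_{\min}(\wh V_{t+1})$. Once we show $\sigma_{\min}(R_{t+1}) \geq \sigma_k^*/2$, Part~2 is immediate, and Part~1 follows since $\|\Sigma_* R_{t+1}^{-1}\| \leq \|\Sigma_*\| \cdot \|R_{t+1}^{-1}\| = \sigma_1^*/\sigma_{\min}(R_{t+1})$.

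To bound $\sigma_{\min}(\wh V_{t+1})$, I would apply Weyl's inequality (Lemma~\ref{lem:weyl}) to the decomposition $\wh V_{t+1} = V_* \Sigma_* U_*^\top U_t - F$ from Definition~\ref{def:update_of_V_wh_V}. This yields
\begin{align*}
    \sigma_{\min}(\wh V_{t+1}) \geq \sigma_{\min}(V_* \Sigma_* U_*^\top U_t) - \|F\|.
\end{align*}
For the first term, since $V_* \in \R^{n\times k}$ has orthonormal columns, left-multiplication by $V_*$ preserves singular values of the $k\times k$ matrix $\Sigma_* U_*^\top U_t$. Then Part~11 of Fact~\ref{fac:norm_inequality} (for diagonal $\Sigma_*$) gives $\sigma_{\min}(\Sigma_* U_*^\top U_t) \geq \sigma_k^* \cdot \sigma_{\min}(U_*^\top U_t) = \sigma_k^* \cos\theta(U_*, U_t)$. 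Using Lemma~\ref{lem:sin^2_and_cos^2_is_1} together with the hypothesis $\dist(U_t, U_*) = \sin\theta(U_*, U_t) \leq 1/2$, I get $\cos\theta(U_*, U_t) \geq \sqrt{3}/2$, and hence $\sigma_{\min}(V_* \Sigma_* U_*^\top U_t) \geq (\sqrt{3}/2)\sigma_k^*$.

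For the error term, I would bound $\|F\|$ by factoring $F = F\Sigma_*^{-1} \Sigma_*$, so $\|F\| \leq \|F\Sigma_*^{-1}\| \cdot \sigma_1^*$. Applying Lemma~\ref{lem:norm_of_F_Sigma^*^-1} (whose hypotheses are exactly those assumed here) gives $\|F\Sigma_*^{-1}\| \leq 2\delta_{2k} k \dist(U_t, U_*) \leq \delta_{2k} k$, and plugging in $\delta_{2k} = \frac{1}{100k}\cdot\sigma_k^*/\sigma_1^*$ from Definition~\ref{def:delta_2k} yields $\|F\| \leq \sigma_k^*/100$. Combining the two bounds gives $\sigma_{\min}(\wh V_{t+1}) \geq (\sqrt{3}/2 - 1/100)\sigma_k^* \geq \sigma_k^*/2$, which completes both parts.

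I do not anticipate a serious obstacle: the argument is essentially Weyl's inequality plus careful accounting. The only delicate point is making sure the correct version of each auxiliary lemma is invoked --- in particular, verifying that the $\mu_2$-incoherence and $\dist(U_t, U_*) \leq 1/2$ hypotheses flow down into Lemma~\ref{lem:norm_of_F_Sigma^*^-1}, and that the factor $\sqrt{3}/2 - 1/100$ is comfortably above $1/2$ so that the constant $2$ in the conclusion is clean.
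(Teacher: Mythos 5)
Your proof is correct and follows essentially the same route as the paper: both reduce to lower-bounding $\sigma_{\min}(R_{t+1}) = \sigma_{\min}(\wh V_{t+1}) \geq \sigma_k^*/2$ via the decomposition $\wh V_{t+1} = V_*\Sigma_* U_*^\top U_t - F$, bounding the main term by $\sigma_k^*\sqrt{1-\dist^2}$ and the error term by $\sigma_1^*\|F\Sigma_*^{-1}\|$ through Lemma~\ref{lem:norm_of_F_Sigma^*^-1}. The only cosmetic differences are that you invoke Weyl's inequality (Lemma~\ref{lem:weyl}) rather than the paper's explicit $\min_{\|z\|_2=1}$ argument, and you plug in $\dist\le 1/2$ and $\delta_{2k}$ numerically rather than appealing to the algebraic helper Fact~\ref{fac:x_vs_sqrt_1_minus_x^2}; both are equally valid.
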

\begin{proof}

Note that
\begin{align*}
\| \Sigma^* R_{t+1}^{-1} \| \leq \frac{ \sigma_1^* }{ \sigma_{\min} (R_{t+1}) }
\end{align*}

We can show that
\begin{align}\label{eq:f_sigma-1}
\| F \| 
= & ~ \| F \Sigma_*^{-1} \Sigma_* \| \notag \\
\leq & ~ \| F \Sigma_*^{-1} \| \cdot \| \Sigma_* \| \notag \\
\leq & ~ \| F \Sigma_*^{-1} \| \cdot \sigma_1^*,
\end{align}
where the first step follows from $\Sigma_*^{-1} \Sigma_* = I$, the second step follows from $\|AB\| \leq \|A\| \cdot \|B\|$, and the third step follows from $\| \Sigma_* \| \leq \sigma_1^*$.

We can show that
\begin{align*}
\min_{z: \| z \|_2 =1} \| V_* \Sigma_* U_*^\top U_t z \|_2^2 = & ~ \min_{z: \| z \|_2 =1} \| \Sigma_* U_*^\top U_t z \|_2^2 \\
= & ~ \sigma_{\min}^2 ( \Sigma_* U_*^\top U_t z  ) \\
\geq & ~ \sigma_{\min}^2 ( \Sigma_* )  \cdot \sigma_{\min}^2 ( U_*^\top U_t   )  \\
= & ~ (\sigma_k^*)^2 \cdot \sigma_{\min} ^2 (U_*^\top U_t ) \\
\geq & ~ (\sigma_k^*)^2 \cdot ( 1- \| U_{*,\bot}^\top U_t \|^2 ),
\end{align*}
where the first step follows from Part 2 of Fact~\ref{fac:U_shrink_spectral_norm} that $V$ is a matrix of SVD that has an orthonormal basis, the second step follows from definition of $\sigma_{\min}$, the third step follows from Fact~\ref{fac:norm_inequality}, the fourth step follows from $\sigma_{\min} ( \Sigma_* ) = \sigma_k^*$, and the last step follows from $\cos^2 \theta + \sin^2\theta=1$ (see Lemma~\ref{lem:sin^2_and_cos^2_is_1}).

Now, we have
\begin{align*}
\sigma_{\min} (R_{t+1})
= & ~ \min_{z: \| z \|_2 = 1} \| R_{t+1} z \|_2 \\
= & ~ \min_{z: \| z \|_2 = 1} \| V_{t+1} R_{t+1} z \|_2 \\
= & ~ \min_{z: \| z \|_2 = 1} \| V_* \Sigma_* U_*^\top U_t z -  F z \|_2 \\
\geq & ~ \min_{z: \| z \|_2 = 1} \| V_* \Sigma_* U_*^\top U_t z \|_2 - \|  F z \|_2 \\
\geq & ~ \min_{z: \| z \|_2 = 1} \| V_* \Sigma_* U_*^\top U_t z \|_2 - \|  F \| \\
\geq & ~ \sigma_k^* \cdot \sqrt{1- \| U_{*,\bot}^\top U_t \|_2^2} - \sigma_1^* \cdot \| F \Sigma_*^{-1} \| \\
= & ~ \sigma_k^* \cdot \sqrt{1- \dist(U_*,U_t)^2} - \sigma_1^* \cdot \| F \Sigma_*^{-1} \| \\
= & ~ \sigma_k^* \cdot \sqrt{1- \dist(U_*,U_t)^2} - 2 \sigma_1^* \delta_{2k} k \dist(U_t,U_*),
\end{align*}
where the first step follows from the definition of $\sigma_{\min}$, the second step follows from Part 2 of Fact~\ref{fac:U_shrink_spectral_norm}, the third step follows from Definition~\ref{def:update_of_V_wh_V}, the fourth step follows from triangle inequality, the fifth step follows from the definition of $\| \cdot \|$, the sixth step follows from Eq.~\eqref{eq:f_sigma-1}, the seventh step follows from the definition of $\dist(U_*,U_t)$ (see Definition~\ref{def:dist}), and the last step follows from Lemma~\ref{lem:norm_of_F_Sigma^*^-1}.

We can obtain that
\begin{align*}
        \|\Sigma^* R_{t + 1}^{-1}\| \leq 
        \frac{\sigma_1^*/\sigma_k^*}{\sqrt{1 - \dist(U_t, U_*)^2 } - 2 \cdot (\sigma_1^*/\sigma_k^*) \delta_{2k}k\dist(U_t, U_*) }.
    \end{align*} 
For convenient, we define $x:=\dist(U_t,U_*)$.

We define $y := 2 (\sigma_1^*/\sigma_k^*) \delta_{2k} k$.

Then we now $x \in [0,1/2]$ and $y \in (0,0.1)$.

Then we obtain that
\begin{align*}
\| \Sigma_* R_{t+1}^{-1} \| \leq \frac{\sigma_1^*/\sigma_k^*}{ \sqrt{1-x^2} -  y \cdot x }.
\end{align*}
Using Part 1 of Fact~\ref{fac:x_vs_sqrt_1_minus_x^2}, we know that 
\begin{align*}
\sqrt{1-x^2} -  y \cdot x \geq 1/2.
\end{align*}

Thus, we have 
\begin{align*}
\|\Sigma_* R_{t + 1}^{-1}\| \leq 2 \sigma_1^*/\sigma_k^*.
\end{align*}
\end{proof}

\subsection{Upper Bounding \texorpdfstring{$\| B^{-1} \|$}{} by Constant}
\label{sub:incoherent_preserving:B^-1}

We analyze $\| B_j^{-1} \|$, for all $j \in [n]$, and $ \|B^{-1}\|$ and show both of them are bounded by 2.
We prove a variation of Lemma C.6 in \cite{jns13}.
\begin{lemma} 
\label{lem:bound_B^-1}
    Let $M$ be a $(\mu,k)$-incoherent matrix (see Definition~\ref{def:mu_k_incoherent}). Let $\Omega,p$ be  defined as Definition~\ref{def:sampling_probability}. For each $j \in [n]$, we define $B_j \in \R^{k \times k}$ as follows
    \begin{align*}
    B_j:=\frac{1}{p} \sum_{i \in \Omega_{*,j}} U_{t,i} U_{t,i}^\top
    \end{align*}
    
   Let $U_t$ be $\mu_2$ incoherent. Then, we have:
   \begin{itemize}
        \item Part 1. For all $j \in [n]$, 
        \begin{align*}
            \| B_j^{-1} \| \leq 2
        \end{align*}
        \item Part 2. 
        \begin{align}
       \|B^{-1}\| \leq 2 
   \end{align}
   \end{itemize}
   
   both events succeed with a probability at least $1 - 1/\poly(n)$
\end{lemma}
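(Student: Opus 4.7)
\textbf{Proof proposal for Lemma~\ref{lem:bound_B^-1}.}

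The plan is to view $B_j$ as a sum of random matrices whose expectation is exactly $I_k$, then apply a matrix concentration inequality to show $\|B_j - I_k\| \leq 1/2$, which yields $\|B_j^{-1}\| \leq 2$ via Weyl's inequality (Lemma~\ref{lem:weyl}) applied to singular values.

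First I would rewrite the sum using indicator variables. For each $i \in [m]$, let $\delta_i \in \{0,1\}$ denote the indicator of the event $(i,j) \in \Omega$, so $\delta_i \sim \mathrm{Bernoulli}(p)$ independently across $i$. Then
\begin{align*}
B_j = \frac{1}{p} \sum_{i=1}^m \delta_i \, u_{t,i} u_{t,i}^\top.
\end{align*}
Since $\E[\delta_i] = p$ and $U_t$ has orthonormal columns, we have $\E[B_j] = \sum_{i=1}^m u_{t,i} u_{t,i}^\top = U_t^\top U_t = I_k$. Thus it suffices to bound the deviation $\|B_j - I_k\|$, i.e., to control the zero-mean matrix sum $\sum_i X_i$ where $X_i := \frac{1}{p}(\delta_i - p)\, u_{t,i} u_{t,i}^\top$.

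Next, I would estimate the two quantities needed for a matrix Bernstein bound (using a matrix version of Lemma~\ref{lem:bernstein_inequality}). For the almost-sure bound, the incoherence assumption gives $\|u_{t,i}\|_2^2 \leq \mu_2^2 k/m$, hence
\begin{align*}
\|X_i\| \leq \frac{1}{p} \|u_{t,i}\|_2^2 \leq \frac{\mu_2^2 k}{pm} =: R.
\end{align*}
For the variance, one computes
\begin{align*}
\Big\|\sum_{i=1}^m \E[X_i^2]\Big\| = \Big\|\frac{1-p}{p} \sum_{i=1}^m \|u_{t,i}\|_2^2 \, u_{t,i} u_{t,i}^\top \Big\| \leq \frac{\mu_2^2 k}{pm} \cdot \|U_t^\top U_t\| = \frac{\mu_2^2 k}{pm} =: \sigma^2.
\end{align*}
Plugging into matrix Bernstein with deviation $t = 1/2$ gives a failure probability of $\exp(-\Omega(pm/(\mu_2^2 k)))$. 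With $p$ as in Definition~\ref{def:sampling_probability} (which pays a $\mu^2 k^{2.5} \log n$ factor), this is at most $1/\poly(n)$, which establishes Part 1 for a single $j$.

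For Part 2, I would exploit the block-diagonal structure of $B$: since $B = \mathrm{diag}(B_1,\ldots,B_n)$, we have $B^{-1} = \mathrm{diag}(B_1^{-1},\ldots,B_n^{-1})$ and therefore $\|B^{-1}\| = \max_{j \in [n]} \|B_j^{-1}\|$. A union bound over $j \in [n]$ of the single-index estimate from Part 1, absorbing the extra factor of $n$ into the $1/\poly(n)$ failure probability, yields $\|B^{-1}\| \leq 2$ with probability $1 - 1/\poly(n)$, as desired.

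The main technical obstacle is making sure that the sample complexity $p$ from Definition~\ref{def:sampling_probability} is large enough to beat the $\mu_2 = \Theta((\sigma_1^*/\sigma_k^*)\sqrt{k}\mu)$ incoherence parameter (which is a factor $\sqrt{k}$ worse than $\mu$) in the matrix Bernstein exponent after the union bound over $j$; tracking the polynomial factors is the only delicate part.
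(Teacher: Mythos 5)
Your proof is correct, and it follows a genuinely different path than the paper's. The paper's argument is scalar: for a fixed unit vector $w\in\R^k$ it sets $Z=w^\top B_j w$, computes $\E[Z]=1$, bounds $\var[Z]$ and $\max_i|\langle w,u_{t,i}\rangle|^2/p$ via incoherence, then applies scalar Bernstein (Lemma~\ref{lem:bernstein_inequality}) to get concentration of $Z$, and finally asserts a union bound to make the conclusion hold for all unit $w$ simultaneously. That last step, as written, is actually incomplete --- a union bound over the continuum of unit vectors requires an $\varepsilon$-net argument over the sphere in $\R^k$ (contributing an extra $O(k)$ in the exponent that must be absorbed by the $\poly(k)$ factor in $p$), and the paper does not spell this out. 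Your approach sidesteps this entirely: by writing $B_j=\frac{1}{p}\sum_i\delta_i\,u_{t,i}u_{t,i}^\top$ as a zero-mean matrix sum around $I_k$ and applying matrix Bernstein (a tool not stated in the paper, but standard), you get an operator-norm deviation bound $\|B_j-I_k\|\le 1/2$ directly, with no net argument. The variance and almost-sure term you compute ($\sigma^2\approx R\approx\mu_2^2 k/(pm)$) match what the paper's scalar computation yields, so the resulting exponent $\exp(-\Omega(pm/(\mu_2^2k)))$ is the same; the matrix Bernstein prefactor $2k$ is easily swallowed. You also correctly note that the threshold $t=1/2$ is enough for $\|B_j^{-1}\|\le 2$, whereas the paper uses the tighter $t=\delta_{2k}$; either choice works since $\delta_{2k}<1/2$. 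Your handling of Part~2 via $\|B^{-1}\|=\max_j\|B_j^{-1}\|$ and a union bound over $j\in[n]$ is the same as the paper's. Net effect: your argument needs a stronger external tool (matrix Bernstein) but is logically tighter and avoids the gap the paper leaves implicit.
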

\begin{proof}
 
    We have: 
    \begin{align*}
        \|B^{-1}\|_2 
        = & ~ \frac{1}{\sigma_{\min} (B)} \\
        = & ~ \frac{1}{\min_{x,\| x\|_2=1} \| B x \|_2 } \\
        = & ~ \frac{1}{\min_{x,\|x\|_2 = 1 } x^\top Bx},
    \end{align*}
    where $x\in \R^{nk}$ and the first step follows from Fact~\ref{fac:norm_inequality}, the second step follows from definition of $\sigma_{\min}$, and the last step follows from $B$ is a symmetric matrix. 
    
    Let $x = \vect{(X)}$, i.e., $x_p$ is the $p$-th column of $X$ and $x_j$ is the $j$-th row of $X$. Let $B_{j} \in \R^{k \times k}$. Let $x_j \in \R^k$. Now, $\forall x \in \R^{nk}$,
    \begin{align*}
        x^\top B x 
        = & ~ \sum_{j \in [n]} x_j^\top B_j x_j \\
        \geq & ~ \min_{j \in [n]} \sigma_{\min} (B_j),
    \end{align*}
    where the first step follows from $B$ is a diagonal block matrix, and the second step follows from the definition of $\sigma_{\min}$ for a symmetric matrix. 
    
    Results would follow using the bound on $\sigma_{\min}(B_j)$, $\forall j \in [n]$ that we show below
    
    \noindent
    {\bf Lower bound on $\sigma_{\min} (B_j)$}: 
    
    Consider any $w \in \R^k$ such that $\|w\|_2 = 1$. 
    
    We have:
    \begin{align*}
        Z 
        = & ~ w^\top B_j w \\
        = & ~ w^\top ( \frac{1}{p} \sum_{i \in \Omega_{*,j}} U_{t,i} U_{t,i}^\top ) w \\
        = & ~ \frac{1}{p} \sum_{i \in \Omega_{*,j} } \langle w, U_{t,i} \rangle^2\\
        = & ~ \frac{1}{p} \sum_{i\in [m]} \delta_{i,j} \cdot \langle w, U_{t,i} \rangle^2,
    \end{align*}
    where the first step follows from the definition of $Z$, the second step follows from the definition of $B$, the third step follows from the definition of inner product, and the last step follows from definition of $\delta_{i,j}$.
    
    Note that, 
    \begin{align}\label{eq:EZ_1}
        \E[Z] 
        = & ~ \E[ \frac{1}{p} \sum_{i\in [m]} \delta_{i,j} \cdot \langle w, U_{t,i} \rangle^2] \notag\\
        = & ~ \frac{1}{p}  \sum_{i\in [m]}  \E[ \delta_{i,j} ] \cdot \langle w, U_{t,i} \rangle^2 \notag\\
        = & ~ \frac{1}{p}  \sum_{i\in [m]} p \cdot \langle w, U_{t,i} \rangle^2 \notag\\
        = & ~ \sum_{i\in [m]} \langle w, U_{t,i} \rangle^2 \notag\\
        = & ~ \sum_{i \in [m]} w^\top U_{t,i} U_{t,i}^\top w \notag\\
        = & ~ w^\top U_t^\top U_t w \notag\\
        = & ~ w^\top w \notag\\
        = & ~ 1,
    \end{align}
    where the first step follows from the definition of $Z$, the second step follows from $U$ being orthogonal, the third step follows from $\E[ \delta_{i,j} ] = p$, the fourth step follows from simple algebra, the fifth step follows from the definition of the inner product, the sixth step follows from the fact that $U_t$ is a $m\times k$ matrix, the seventh step follows from $U_t U_t^\top = I_k$, and the last step follows from $\|w\|_2 = 1$.

    For variance, we have
    \begin{align}\label{eq:frac_mu12_k_mp}
     \E [Z^2]
     = & ~ \E[ (\frac{1}{p}\sum_{i\in [m]} \delta_{i,j} \cdot \langle w, U_{t,i} \rangle^2)^2] \notag\\
     = & ~ \E[ \frac{1}{p^2}(\sum_{i\in [m]} \delta_{i,j} \cdot \langle w, U_{t,i} \rangle^2)^2] \notag\\
     = & ~ \frac{1}{p^2} \E[ (\sum_{i\in [m]} \delta_{i,j} \cdot \langle w, U_{t,i} \rangle^2)^2] \notag\\
     = & ~ \frac{1}{p^2} ( \sum_{i \in [m]} \E[ \delta_{i,j}^2 ] (\langle w, U_{t,i} \rangle^2)^2 + \sum_{i_1 \neq i_2} \E[\delta_{i_1,j}] \langle w, U_{t,i_1} \rangle^2 \E[\delta_{i_2,j} ]\langle w, U_{t,i_2} \rangle^2 ) \notag\\
     = & ~ \frac{1}{p^2} ( \sum_{i \in [m]} p \langle w, U_{t,i} \rangle^4 + \sum_{i_1 \neq i_2} p^2 \langle w, U_{t,i_1} \rangle^2 \langle w, U_{t,i_2} \rangle^2 ) \notag\\
     = & ~ \frac{1}{p} ( \sum_{i \in [m]} \langle w, U_{t,i} \rangle^4 - p\sum_{i \in [m]} \langle w, U_{t,i} \rangle^4 + p\sum_{i \in [m]} \langle w, U_{t,i} \rangle^4 + p\sum_{i_1 \neq i_2} \langle w, U_{t,i_1} \rangle^2 \langle w, U_{t,i_2} \rangle^2 ) \notag\\
     = & ~ \frac{1}{p} \sum_{i \in [m]} \langle w, U_{t,i} \rangle^4 - \sum_{i \in [m]} \langle w, U_{t,i} \rangle^4 + \sum_{i \in [m]} \langle w, U_{t,i} \rangle^4 + \sum_{i_1 \neq i_2} \langle w, U_{t,i_1} \rangle^2 \langle w, U_{t,i_2} \rangle^2 \notag\\
     = & ~ (\frac{1}{p} - 1) \sum_{i \in [m]} \langle w, U_{t,i} \rangle^4 + (\sum_{i \in [m]} \langle w, U_{t,i} \rangle^2)^2 \notag\\
     = & ~ (\frac{1}{p} - 1) \sum_{i \in [m]} \langle w, U_{t,i} \rangle^4 + (\E[Z])^2 \notag\\
     \leq & ~ \frac{1}{p} \sum_{i=1}^m \langle w, U_{t,i} \rangle^4 + (\E[Z])^2  \notag\\
     = & ~ \frac{1}{p}\frac{\mu_2^2 k }{m} + (\E[Z])^2  
     \end{align}
     where the first step follows from the definition of $Z$, the second step follows from $(ab)^2 = a^2b^2$, the third step follows from $\E[ca] = c\E[a]$, where $c$ is a constant, the fourth step follows from $\E[a + b] = \E[a] + \E[b]$, the fifth step follows from $\E[\delta_{i,j}^2] = p$, the sixth step follows from adding and subtracting the same thing, the seventh step follows from simple algebra, the eighth step follows from simple algebra, the ninth step follows from Eq.~\eqref{eq:EZ_1}, the tenth step follows from $(\frac{1}{p}-1)\leq \frac{1}{p}$,  
     the last step follows from Claim~\ref{cla:sum_Ux_4th_power}. 

     Then, we can compute
     \begin{align*}
        \var[Z] 
        = & ~ \E[Z^2] - (\E[Z])^2\\
        = & ~ \frac{\mu_2^2 k }{mp} + (\E[Z])^2 - (\E[Z])^2 \\
        = & ~ \frac{\mu_2^2 k }{mp},
     \end{align*}
     where the first step follows from the definition of variance, the second step follows from Eq.~\eqref{eq:frac_mu12_k_mp}, and the third step follows from simple algebra.

    Similarly, 
    \begin{align*}
    \max_{i \in [m]} |\langle w, U_{t,i} \rangle^2| \leq \frac{\mu_2^2 k }{mp}.
    \end{align*}

    Hence, using Bernstein’s inequality (Lemma~\ref{lem:bernstein_inequality}): 
    \begin{align*}
        \Pr[ |Z - \E[Z]| \geq \delta_{2k}]  \leq \exp{(-\frac{\delta_{2k}^2/2}{1 + \delta_{2k}/3} \frac{mp}{\mu_2^2 k})}.
    \end{align*}
    That is, by using $p$ as in the statement of the lemma with the above equation and using union bound, we get (w.p. $> 1 - 1/n^3$): $\forall w \in \R^k,j \in [n]$ 
    \begin{align*}
        w^\top B_j w \geq 1 - \delta_{2k}.
    \end{align*}
     That is, $\forall j \in [n]$, 
     \begin{align*}
        \sigma_{\min}(B_j) \geq (1 - \delta_{2k}) \geq 0.5.
     \end{align*}
\end{proof}

\subsection{Upper Bounding \texorpdfstring{$\| C_j \|$}{} by \texorpdfstring{$1+\delta_{2k}$}{}}
\label{sub:incoherent_preserving:Cj}

Now, we analyze $\| C_j \|$ and show that it is bounded by $1+ \delta_{2k}$ for all $j \in [n]$. We prove a variation of Lemma C.7 in \cite{jns13}.

\begin{lemma} 
\label{lem:bound_C_j}

For each $j \in [m]$, we define $C_j \in \R^{k \times k}$ as follows
\begin{align*}
C_j := \frac{1}{p} \sum_{i \in \Omega_{*,j}} U_{t,i} U_{*,i}^\top
\end{align*}
Then, we have: for all $j \in [n]$
\begin{align*}
\| C_j \| \leq 1+ \delta_{2k}.
\end{align*}
\end{lemma}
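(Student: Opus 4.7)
The plan is to bound $\|C_j\|$ via a standard concentration-plus-net argument, in the same spirit as the proof of Lemma~\ref{lem:bound_B^-1}. Writing $\|C_j\| = \max_{w,y: \|w\|_2=\|y\|_2=1} w^\top C_j y$, we fix unit vectors $w,y \in \R^k$ and study the scalar random variable
\begin{align*}
Z := w^\top C_j y = \frac{1}{p} \sum_{i=1}^m \delta_{i,j} \langle w, U_{t,i}\rangle \langle y, U_{*,i}\rangle,
\end{align*}
where $\delta_{i,j} = \mathbbm{1}[(i,j)\in\Omega]$ is Bernoulli with parameter $p$. Taking expectations gives $\E[Z] = \sum_{i=1}^m \langle w, U_{t,i}\rangle \langle y, U_{*,i}\rangle = w^\top U_t^\top U_* y$, which satisfies $|\E[Z]| \leq \|U_t^\top U_*\| \leq 1$ by Part~1 of Fact~\ref{fac:U_shrink_spectral_norm} applied to the two orthonormal matrices $U_t$ and $U_*$.

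Next, I would bound the almost-sure range and variance of each summand. By $\mu_2$-incoherence of $U_t$ and $\mu$-incoherence of $U_*$ (together with $\mu_2 \geq \mu$), each summand satisfies $|p^{-1}\langle w,U_{t,i}\rangle\langle y,U_{*,i}\rangle| \leq \mu_2 \mu k/(mp)$. For the variance, Part~2 of Claim~\ref{cla:sum_Ux_4th_power} yields
\begin{align*}
\sum_{i=1}^m \E[\delta_{i,j}] \cdot p^{-2} \langle w,U_{t,i}\rangle^2 \langle y,U_{*,i}\rangle^2 \leq \frac{\mu^2 k}{mp}.
\end{align*}
Bernstein's inequality (Lemma~\ref{lem:bernstein_inequality}) then gives $\Pr[|Z - \E[Z]| \geq \delta_{2k}/2] \leq \exp(-c\, \delta_{2k}^2 mp/(\mu_2^2 k))$, which is at most $n^{-\Omega(1)}$ for our choice of $p$ in Definition~\ref{def:sampling_probability}.

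To upgrade the pointwise bound to a spectral-norm bound, I would use a standard $\frac{1}{4}$-net $\mathcal{N}$ of the unit sphere in $\R^k$, of cardinality $e^{O(k)}$, and apply the above tail bound to all pairs $(w,y)\in\mathcal{N}\times\mathcal{N}$ with a union bound (and a further union bound over $j\in[n]$); the factor $e^{O(k)}\cdot n$ is easily absorbed since $p$ carries a $\log n$ factor and the Bernstein exponent scales linearly in $k$. The standard net-to-sphere conversion then gives $\|C_j - \E C_j\|\leq \delta_{2k}/2 \cdot O(1) \leq \delta_{2k}$, and combining with $\|\E C_j\|\leq 1$ yields $\|C_j\|\leq 1+\delta_{2k}$ as desired. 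The main (mild) obstacle is getting the exponent in Bernstein to dominate the $\log(n\cdot|\mathcal{N}|^2)$ needed by the union bound, but the definition of $p$ (with the $\mu_2^2$ and $k$ factors already baked in, together with $\delta_{2k}\asymp \sigma_k^*/(k\sigma_1^*)$) is precisely engineered so that this goes through.
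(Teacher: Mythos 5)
Your proposal follows essentially the same route as the paper: fix unit vectors, form $Z = w^\top C_j y$, compute $\E[Z] = w^\top U_t^\top U_* y$, bound the variance via Part~2 of Claim~\ref{cla:sum_Ux_4th_power} and the almost-sure range via incoherence, and apply Bernstein's inequality with a union bound. In fact you are a bit more careful than the paper's terse ``Lemma now follows from using Bernstein's inequality'': you make the net argument over the unit sphere explicit (which is needed to get from a pointwise bound to a spectral-norm bound) and you explicitly record that $\|\E C_j\| = \|U_t^\top U_*\|\le 1$; you also use a slightly sharper almost-sure bound $\mu_2\mu k/(mp)$ where the paper writes $\mu_2^2 k/m$. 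These are refinements of the same argument, not a different approach.
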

\begin{proof}
Let $x \in \R^k$ and $y \in \R^k$ be two arbitrary unit vectors. We define $Z := x^\top C_j y$. Then, 
\begin{align*}
Z 
= & ~ x^\top C_j y \\ 
= & ~ \frac{1}{p} \sum_{i \in \Omega_{*,j} } x^\top U_{t,i} \cdot y^\top U_{*,i} \\
= & ~ \frac{1}{p} \sum_{i=1}^m \delta_{i,j} x^\top U_{t,i} \cdot y^\top U_{*,i},
\end{align*}
where the first step follows from the definition of $Z$, the second step follows from the the notation $\Omega_{*,j} \subset [m]$ being defined in Definition~\ref{def:Omega_*_j}, and the third step follows from definition of $\delta_{i,j}$ which is $1$ if $i \in \Omega_{*,j}$ and $0$ if $i \notin \Omega_{*,j}$.

Note that,
\begin{align}\label{eq:xtop_Uttop_U*y}
\E[Z] 
= & ~ \E[\frac{1}{p} \sum_{i=1}^m \delta_{i,j} x^\top U_{t,i} \cdot y^\top U_{*,i}] \notag\\
= & ~ \frac{1}{p} \sum_{i=1}^m \E[\delta_{i,j}] x^\top U_{t,i} \cdot y^\top U_{*,i} \notag\\
= & ~ \frac{1}{p} \sum_{i=1}^m p x^\top U_{t,i} \cdot y^\top U_{*,i} \notag\\
= & ~ \sum_{i=1}^m x^\top U_{t,i} \cdot y^\top U_{*,i} \notag\\
= & ~ x^\top U_t^\top U_* y,
\end{align}
where the first step follows from the definition of $Z$, the second step follows from the linearity of expectation ($\E[a + b] = \E[a] + \E[b]$), the third step follows from $\E[\delta_{i,j}^2] = p$, the fourth step follows from simple algebra, and the last step follows from the fact that $U$ is a $m \times k$ matrix.

We can compute the second moment
\begin{align}\label{eq:mu2k_mp}
\E[Z^2] 
= & ~ \E[(\frac{1}{p} \sum_{i=1}^m \delta_{i,j} x^\top U_{t,i} \cdot y^\top U_{*,i})^2] \notag\\
= & ~ \frac{1}{p^2} \E[(\sum_{i=1}^m \delta_{i,j} x^\top U_{t,i} \cdot y^\top U_{*,i})^2] \notag\\
= & ~ \frac{1}{p^2} ( \sum_{i = 1}^m \E[ \delta_{i,j}^2 ] (x^\top U_{t,i} \cdot y^\top U_{*,i})^2+ \sum_{i_1 \neq i_2} \E[\delta_{i_1,j}] x^\top U_{t,i_1} \cdot y^\top U_{*,i_1} \E[\delta_{i_2,j} ]x^\top U_{t,i_2} \cdot y^\top U_{*,i_2}) \notag\\
= & ~ \frac{1}{p^2} ( \sum_{i = 1}^m p (x^\top U_{t,i} \cdot y^\top U_{*,i})^2+ \sum_{i_1 \neq i_2} p^2 x^\top U_{t,i_1} \cdot y^\top U_{*,i_1} x^\top U_{t,i_2} \cdot y^\top U_{*,i_2}) \notag\\
= & ~ \frac{1}{p} ( \sum_{i = 1}^m (x^\top U_{t,i} \cdot y^\top U_{*,i})^2+ p\sum_{i_1 \neq i_2} x^\top U_{t,i_1} \cdot y^\top U_{*,i_1} x^\top U_{t,i_2} \cdot y^\top U_{*,i_2}) \notag\\
= & ~ \frac{1}{p} ( \sum_{i = 1}^m (x^\top U_{t,i} \cdot y^\top U_{*,i})^2 - p\sum_{i = 1}^m (x^\top U_{t,i} \cdot y^\top U_{*,i})^2 + p\sum_{i = 1}^m (x^\top U_{t,i} \cdot y^\top U_{*,i})^2 \notag\\
+ & ~ p\sum_{i_1 \neq i_2} x^\top U_{t,i_1} \cdot y^\top U_{*,i_1} x^\top U_{t,i_2} \cdot y^\top U_{*,i_2}) \notag\\
= & ~ (\frac{1}{p} - 1) \sum_{i = 1}^m (x^\top U_{t,i} \cdot y^\top U_{*,i})^2 + (\sum_{i = 1}^m x^\top U_{t,i} \cdot y^\top U_{*,i})^2  \notag\\
\leq & ~ \frac{1}{p} \sum_{i=1}^m ( x^\top U_{t,i} )^2 \cdot (y^\top U_{*,i})^2 + (\sum_{i = 1}^m x^\top U_{t,i} \cdot y^\top U_{*,i})^2 \notag\\
= & ~ \frac{1}{p} \sum_{i=1}^m ( x^\top U_{t,i} )^2 \cdot (y^\top U_{*,i})^2 + (\E[Z])^2 \notag\\
= & ~ \frac{\mu^2 k}{ mp} + (\E[Z])^2,
\end{align}
where the first step follows from the definition of $Z$, the second step follows from $\E[ca] = c\E[a]$, where $c$ is a constant, the third step follows from $\E[a + b] = \E[a] + \E[b]$, the fourth step follows from $\E[\delta_{i,j}^2] = p$, the fifth step follows from simple algebra, the sixth step follows from adding and subtracting the same thing, the seventh step follows from simple algebra, the eighth step follows from $(xy)^2 = x^2 y^2$, the ninth step follows Eq.~\eqref{eq:xtop_Uttop_U*y}, the 10th step follows from Claim~\ref{cla:sum_Ux_4th_power}.

Then, we have
\begin{align*}
\var[Z] 
= & ~ \E[Z^2] - (\E[Z])^2\\
= & ~ \frac{\mu^2 k}{ mp} + (\E[Z])^2 - (\E[Z])^2\\
= & ~ \frac{\mu^2 k}{ mp},
\end{align*}
where the first step follows from the definition of variance, the second step follows from Eq.~\eqref{eq:mu2k_mp}, and the last step follows from simple algebra.

We have
\begin{align*}
\max_{i \in [m]} | x^\top U_{t,i} \cdot y^\top U_{*,i} | \leq \frac{\mu_2^2 k}{m}
\end{align*}

Lemma now follows from using Bernstein's inequality.
\end{proof}

\section{A Perturbation Theory for Distances and Incoherence}
\label{sec:purterbation}

In this section, we present one of the main technical contributions of this paper --- a perturbation theory for distances and matrix incoherence. Our main technology is a reduction from incoherence to leverage score, then utilize different parametrizations of leverage score to obtain a bound on incoherence.

In Section~\ref{sub:purterbation:distance_by_spectral}, show to bound distance by the spectral norm. 
In Section~\ref{sub:purterbation:Z_X}, we present the perturbation related lemma that is from right to left low rank factor. In Section~\ref{sub:purterbation:leverage_score}, we assert a toll from the previous work: leverage score changes under row perturbation. In Section~\ref{sub:purterbation:X_Y}, we elucidate the perturbation related lemma that is from approximate to exact updates.

\subsection{Bound Distance by Spectral Norm}\label{sub:purterbation:distance_by_spectral}

The following lemma establishes a relationship between $\dist$ and the spectral norm.
\begin{lemma}[Bounded distance by spectral]\label{lem:bound_distance_by_spectral}
    Let $X, Y\in \R^{n\times k}$ be two matrices (not necessarily orthogonal).
    Suppose $\|X-Y\|^2 \le \sigma_{\min}(X) \sigma_{\min}(Y)$.
    Then
    \begin{align*}
        \dist(X, Y) \le 4 \|X-Y\| \sigma_{\min}(X)^{-0.5} \sigma_{\min}(Y)^{-0.5}.
    \end{align*}
\end{lemma}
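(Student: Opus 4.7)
The plan is to reduce to orthonormal bases via thin QR decomposition, exploit a crucial cancellation, and then symmetrize. Since the conclusion's denominator requires $\sigma_{\min}(X), \sigma_{\min}(Y) > 0$, both $X$ and $Y$ have full column rank, so I would write $X = U R_X$ and $Y = W R_Y$, where $U, W \in \R^{n\times k}$ have orthonormal columns spanning $\mathrm{span}(X)$ and $\mathrm{span}(Y)$ respectively, and $R_X, R_Y \in \R^{k\times k}$ are invertible with $\sigma_{\min}(R_X) = \sigma_{\min}(X)$ and $\sigma_{\min}(R_Y) = \sigma_{\min}(Y)$ (pre-multiplying by a matrix with orthonormal columns preserves singular values, so $R_Y^{-1}$ has spectral norm $1/\sigma_{\min}(Y)$).

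From Definition~\ref{def:dist}, $\dist(X,Y) = \|U_\bot^\top W\|$. I would substitute $W = Y R_Y^{-1}$ to obtain $\dist(X,Y) = \|U_\bot^\top Y R_Y^{-1}\|$. The key cancellation is $U_\bot^\top X = U_\bot^\top U R_X = 0$, so $U_\bot^\top Y = U_\bot^\top (Y - X)$. Using $\|U_\bot^\top\| \le 1$ together with submultiplicativity (Fact~\ref{fac:norm_inequality}):
\begin{align*}
\dist(X,Y) \le \|Y-X\|\cdot \|R_Y^{-1}\| = \|X-Y\|/\sigma_{\min}(Y).
\end{align*}
By the symmetric identity $\dist(X,Y) = \|W_\bot^\top U\|$ from Definition~\ref{def:dist}, the same argument with the roles of $X$ and $Y$ swapped yields $\dist(X,Y) \le \|X-Y\|/\sigma_{\min}(X)$.

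Multiplying the two asymmetric bounds and taking the square root gives
\begin{align*}
\dist(X,Y) \le \|X-Y\|/\sqrt{\sigma_{\min}(X)\sigma_{\min}(Y)},
\end{align*}
which is strictly stronger than the claimed bound (the constant $4$ absorbs slack). The hypothesis $\|X-Y\|^2 \le \sigma_{\min}(X)\sigma_{\min}(Y)$ is not strictly used in the derivation but does guarantee the right-hand side is at most $4$, keeping the statement informative (since $\dist \le 1$ trivially otherwise).

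The main obstacle I anticipate is conceptual rather than computational: justifying that the thin QR decomposition is legitimate (full column rank of both factors, and the transfer of smallest singular values from $X$ to $R_X$ via orthonormality of $U$). Once that is in place, the identity $U_\bot^\top X = 0$ is what drives the entire proof, and the geometric-mean bound emerges from a clean symmetrization rather than any delicate perturbation estimate.
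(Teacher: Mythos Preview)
Your proof is correct and in fact yields the same constant-free bound $\dist(X,Y)\le \|X-Y\|/\sqrt{\sigma_{\min}(X)\sigma_{\min}(Y)}$ that the paper obtains, but the route is genuinely different. The paper argues through the principal-angle cosine: it uses the variational form $\cos\theta(X,Y)=\min_{u\in\mathrm{span}(X)}\max_{v\in\mathrm{span}(Y)} u^\top v/(\|u\|\|v\|)$, restricts to the coupled choice $u=Xw$, $v=Yw$, and bounds $\|u-v\|\le\|X-Y\|\|w\|$ to get $\cos\theta\ge 1-\tfrac12\|X-Y\|^2\sigma_{\min}(X)^{-1}\sigma_{\min}(Y)^{-1}$; then $\sin^2\theta=1-\cos^2\theta$ finishes. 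That argument actually needs the hypothesis (or the slightly weaker $\|X-Y\|^2\le 2\sigma_{\min}(X)\sigma_{\min}(Y)$) so that the lower bound on $\cos\theta$ stays nonnegative before squaring. Your QR-and-cancellation argument bypasses the cosine entirely: the identity $U_\bot^\top X=0$ immediately gives the asymmetric bound $\dist(X,Y)\le\|X-Y\|/\sigma_{\min}(Y)$, and the geometric mean drops out of symmetrizing. This is shorter, uses only Definition~\ref{def:dist} and submultiplicativity, and as you note does not use the hypothesis at all beyond ensuring full column rank. The paper's approach, on the other hand, makes the geometry of principal angles explicit, which fits the surrounding development of $\sin\theta$/$\cos\theta$ identities in Section~\ref{sub:app_preli:angle}.
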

\begin{proof}
    \begin{align}\label{eq:bound_cos}
        \cos \theta(X, Y) &= \min_{u\in \mathrm{span}(X)} \max_{v\in \mathrm{span}(Y)} \frac{u^\top v}{\|u\| \|v\|} \notag\\
        & \ge \min_{\substack{w\in \R^k \\ u=Xw, v=Yw}} \frac{u^\top v}{\|u\| \|v\|} \notag\\
        & = \min_{\substack{w\in \R^k \\ u=Xw, v=Yw}} \frac 12 \cdot \frac{\|u\|^2 + \|v\|^2 - \|u-v\|^2}{\|u\| \|v\|} \notag\\
        & \ge \min_{\substack{w\in \R^k \\ u=Xw, v=Yw}} 1- \frac 12 \cdot \frac{\|u-v\|^2}{\|u\| \|v\|} \notag\\
        & \ge \min_{\substack{w\in \R^k}} 1- \frac 12 \cdot \frac{\|X-Y\|^2 \|w\|^2}{\| u \| \| v \| } \notag\\
        & \ge \min_{\substack{w\in \R^k}} 1- \frac 12 \cdot \frac{\|X-Y\|^2 \|w\|^2}{\sigma_{\min}(X) \sigma_{\min}(Y) \|w\|^2} \notag\\
        & =  1- \frac 12 \|X-Y\|^2 \sigma_{\min}(X)^{-1} \sigma_{\min}(Y)^{-1},
    \end{align}
where the first step follows from the definition of $\cos \theta (X, Y)$, the second step follows from $\max_i X_i \geq X_i$, the third step follows from simple algebra, the fourth step follows from $a^2+b^2 \geq 2ab$, the fifth step follows from $\| u - v \|_2^2 = \| X w -  Y w \|_2^2 \leq \| X - Y \|^2 \| w \|_2^2$, the sixth step follows from $\|u \|_2 = \| X w \|_2 \geq \sigma_{\min}(X) \| w \|_2$, and the last step follows from canceling the term $\| w \|_2^2$.

    Thus, 
    \begin{align}\label{eq:bound_sin}
       \sin^2 \theta(X, Y) 
       = & ~ 1-\cos^2 \theta(X, Y) \notag\\
       \leq & ~ 1-\left(1- \frac 12 \|X-Y\|^2 \sigma_{\min}(X)^{-1} \sigma_{\min}(Y)^{-1}\right)^2 \notag\\
       \leq & ~ 1-\left(1- \|X-Y\|^2 \sigma_{\min}(X)^{-1} \sigma_{\min}(Y)^{-1}\right) \notag\\
       = & ~ \|X-Y\|^2 \sigma_{\min}(X)^{-1} \sigma_{\min}(Y)^{-1},
    \end{align}
    where the first step follows from $\cos^2 \theta (X, Y) + \sin^2 \theta (X, Y) = 1$ (see Lemma~\ref{lem:sin^2_and_cos^2_is_1}) and the second steep follows from Eq.~\eqref{eq:bound_cos}, the third step follows from $-(a - b)^2  = -a^2 - b^2 + 2ab \leq -a^2 + 2ab$, and the fourth step follows from simple algebra.

Note that $\dist(X,Y) = \sin \theta(X,Y)$, thus, we complete the proof.
\end{proof}

\subsection{From Right to Left Factor}
\label{sub:purterbation:Z_X}

Given an orthonormal basis $Z$ and a target matrix $M$, suppose $M=XZ^\top$ for some matrix $X$. We prove some relations on singular values of $X$ and $M$ given some conditions on $Z$.

\begin{lemma}
Let $M = U_* \Sigma_* V_*^\top$.

Suppose the matrix $Z \in \R^{n \times k}$ satisfies that
\begin{itemize}
    \item $Z \in \R^{n \times k}$ is an orthonormal basis
    \item $\dist(Z, V_*) \le \sqrt{1-\alpha^2}$
\end{itemize}
Let $X \in \R^{m \times k}$ denote the matrix that 
\begin{align}\label{eq:define_M}
M = X Z^\top. 
\end{align}
Then, we have
\begin{align*}
    \sigma_{\min}(X) \geq \alpha \cdot \sigma_{\min} (M)
\end{align*}
and
\begin{align*}
\sigma_{\max} (X) \leq \sigma_{\max}(M)
\end{align*}
\end{lemma}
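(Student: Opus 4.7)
The plan is to first exploit the orthonormality of $Z$ to reduce the problem to controlling the singular values of $MZ$. Since $Z \in \R^{n \times k}$ satisfies $Z^\top Z = I_k$, right-multiplying the identity $M = XZ^\top$ by $Z$ gives $X = MZ$. This clean expression for $X$ will be the starting point of both bounds.

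For the upper bound on $\sigma_{\max}(X)$, the argument is immediate: using $\|AB\| \leq \|A\|\|B\|$ (Part 1 of Fact~\ref{fac:norm_inequality}) together with $\|Z\| = 1$ (since $Z$ is orthonormal), we get $\sigma_{\max}(X) = \|MZ\| \leq \|M\| \cdot \|Z\| = \sigma_{\max}(M)$.

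For the lower bound on $\sigma_{\min}(X)$, I plan to use the SVD of $M$ to peel off the orthonormal factors. Writing $MZ = U_* \Sigma_* V_*^\top Z$ and noting that $U_*$ has orthonormal columns, Part 2 of Fact~\ref{fac:U_shrink_spectral_norm} (or the same fact applied to $\sigma_{\min}$) gives $\sigma_{\min}(U_* \Sigma_* V_*^\top Z) = \sigma_{\min}(\Sigma_* V_*^\top Z)$. Then Part 11 of Fact~\ref{fac:norm_inequality} yields $\sigma_{\min}(\Sigma_* V_*^\top Z) \geq \sigma_{\min}(\Sigma_*) \cdot \sigma_{\min}(V_*^\top Z)$. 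The key step is then to translate the distance hypothesis into a lower bound on $\sigma_{\min}(V_*^\top Z) = \cos\theta(V_*, Z)$: by Definition~\ref{def:angle_and_distance} we have $\sin\theta(V_*, Z) = \dist(Z, V_*) \leq \sqrt{1-\alpha^2}$, and Lemma~\ref{lem:sin^2_and_cos^2_is_1} then gives $\cos^2\theta(V_*, Z) \geq 1 - (1-\alpha^2) = \alpha^2$, so $\cos\theta(V_*, Z) \geq \alpha$. Combining these steps with $\sigma_{\min}(\Sigma_*) = \sigma_{\min}(M)$ yields $\sigma_{\min}(X) \geq \alpha \cdot \sigma_{\min}(M)$.

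I do not anticipate a real obstacle here — the proof is essentially a chain of standard singular value inequalities together with a trigonometric identity from the preliminaries. The only subtle point worth double-checking is the identity $X = MZ$, which requires $Z^\top Z = I_k$ and not merely that $Z$ has orthonormal columns in some looser sense; the hypothesis that $Z$ is an orthonormal basis of its column span provides exactly this, so the reduction is valid.
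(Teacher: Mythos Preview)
Your proposal is correct and follows essentially the same route as the paper: both obtain $X = MZ$ (the paper writes it as $M(Z^\top)^\dagger$ and then notes $(Z^\top)^\dagger = Z$), peel off $U_*$ using Fact~\ref{fac:U_shrink_spectral_norm}, apply Part~11 of Fact~\ref{fac:norm_inequality} to split $\sigma_{\min}(\Sigma_* V_*^\top Z)$, and invoke Lemma~\ref{lem:sin^2_and_cos^2_is_1} to convert the distance bound into $\sigma_{\min}(V_*^\top Z) \geq \alpha$. In fact your write-up is slightly more complete, since the paper's proof only treats the $\sigma_{\min}$ bound and leaves the $\sigma_{\max}(X) \leq \sigma_{\max}(M)$ inequality unargued; your one-line submultiplicativity argument fills that gap.
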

\begin{proof}

We have
\begin{align}\label{eq:VZ}
\|V_{*,\bot}^\top Z\|
= & ~ \dist(Z, V_*)  \notag\\
\leq & \sqrt{1-\alpha^2}. 
\end{align}
where the first step follows from the definition of distance and the second step follows from the Lemma statement.

Using $\sin^{2} \theta + \cos^2 \theta = 1$ (Lemma~\ref{lem:sin^2_and_cos^2_is_1}), we have
\begin{align}\label{eq:VZ_is_1}
\sigma_{\min} (V_*^\top Z)^2 + \| V_{*,\bot}^\top Z \|^2 = 1.
\end{align}

We can show
\begin{align}\label{eq:sigma_min_V_*_Z}
\sigma_{\min}(V_*^\top Z) 
= & ~ \sqrt{1 - \| V_{*,\bot}^\top Z \|^2 } \notag \\
\geq & ~ \sqrt{1 - (1-\alpha^2) } \notag \\
= & ~ \alpha,
\end{align}
where the first step follows from Eq.~\eqref{eq:VZ}, the second step follows from Eq.~\eqref{eq:VZ_is_1}, and the third step follows from simple algebra.

We have
\begin{align*}
\sigma_{\min}(X) 
= & ~ \sigma_{\min} ( M \cdot (Z^\top)^\dagger ) \\
= & ~ \sigma_{\min} ( U_* \Sigma_* V_*^\top \cdot (Z^\top)^\dagger )  \\
= & ~ \sigma_{\min} ( \Sigma_* V_*^\top \cdot (Z^\top)^\dagger )  \\
= & ~ \sigma_{\min} ( \Sigma_* V_*^\top \cdot Z ) \\
\geq & ~ \sigma_{\min} ( \Sigma_*) \cdot \sigma_{\min} ( V_*^\top \cdot Z ) \\
= & ~ \sigma_{\min} (M) \cdot \sigma_{\min} ( V_*^\top \cdot Z )  \\
\geq & ~ \sigma_{\min} (M) \cdot \alpha,
\end{align*}
where the first step follows from Eq.~\eqref{eq:define_M}, the second step follows from $M= U_* \Sigma_* V_*^\top$, the third step follows from $U_*$ has orthonormal columns, the fourth step follows from $(Z^\top)^\dagger = Z$, the fifth step follows from part 11 of Fact~\ref{fac:norm_inequality}, the sixth step follows from definition of matrix $M \in \R^{m \times n}$, and the last step follows from Eq.~\eqref{eq:sigma_min_V_*_Z}.
\end{proof}

\subsection{Leverage Score Change Under Row Perturbations} 
\label{sub:purterbation:leverage_score}

We analyze the change to leverage scores when the rows are perturbed in a structured manner.

\begin{lemma}\label{lem:purturb_leverage_score}
Given two matrices $A \in \R^{m \times k}$ and $B \in \R^{m \times k}$. For each $i \in [m]$, let $a_i$ denote the $i$-th row of matrix $A$. For each $i \in [m]$, let $b_i$ denote the $i$-th row of matrix $B$.

If 
\begin{itemize} 
    \item $\| A - B \| \leq \epsilon_0$
    \item $\epsilon_0 \leq \frac{1}{2}\sigma_{\min}(A)$
\end{itemize}
then we have 
\begin{align*}
    | \|  (A^\top A)^{-1/2} a_i \|_2 - \| (B^\top B)^{-1/2} b_i \|_2 |  \leq 75 \epsilon_0 \sigma_{\min}(A)^{-1} \kappa^4(A).
\end{align*}
\end{lemma}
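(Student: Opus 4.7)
I would follow the difference-of-squares strategy sketched in Section~\ref{sub:technical_overview:robust_perturbation_error}. Set $\ell_A := \|(A^\top A)^{-1/2}a_i\|_2$ and $\ell_B := \|(B^\top B)^{-1/2}b_i\|_2$, and without loss of generality assume $\ell_A \geq \ell_B$. The identity
\begin{align*}
    \ell_A - \ell_B = \frac{\ell_A^2 - \ell_B^2}{\ell_A + \ell_B}
\end{align*}
reduces the statement to (i) upper bounding the leverage-score gap $|\ell_A^2 - \ell_B^2| = |a_i^\top (A^\top A)^{-1} a_i - b_i^\top (B^\top B)^{-1} b_i|$ and (ii) lower bounding $\ell_A + \ell_B$.

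For the numerator, I would use the identity $a^\top M a - b^\top M b = (a+b)^\top M(a-b)$ for symmetric $M$ to split
\begin{align*}
    \ell_A^2 - \ell_B^2 = a_i^\top \bigl[(A^\top A)^{-1} - (B^\top B)^{-1}\bigr] a_i + (a_i+b_i)^\top (B^\top B)^{-1}(a_i-b_i).
\end{align*}
The first summand is handled by Lemma~\ref{lem:perturb} applied to $A^\top A$ and $B^\top B$, combined with $\|A^\top A - B^\top B\| \leq (\|A\|+\|B\|)\epsilon_0$ (via $A^\top A - B^\top B = A^\top(A-B) + (A-B)^\top B$ and Fact~\ref{fac:norm_inequality}), Weyl's inequality (Lemma~\ref{lem:weyl}) giving $\sigma_{\min}(B) \geq \sigma_{\min}(A)/2$ and $\sigma_{\max}(B) \leq 2\sigma_{\max}(A)$ under the hypothesis $\epsilon_0 \leq \sigma_{\min}(A)/2$, and the row bound $\|a_i\|_2 \leq \sigma_{\max}(A)$. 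The second summand is bounded using $\|(B^\top B)^{-1}\| \leq 4\sigma_{\min}(A)^{-2}$, $\|a_i+b_i\|_2 \leq 2\|a_i\|_2 + \epsilon_0$, and the row-wise identity $a_i - b_i = (A-B)^\top e_i$, whence $\|a_i-b_i\|_2 \leq \|A-B\| \leq \epsilon_0$.

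The denominator is controlled by the simple observation $\ell_A^2 = a_i^\top(A^\top A)^{-1}a_i \geq \|a_i\|_2^2/\sigma_{\max}(A)^2$, which yields $\ell_A + \ell_B \geq \|a_i\|_2/\sigma_{\max}(A)$. The main obstacle is that $\|a_i\|_2$ may be essentially zero, in which case this lower bound is vacuous; I resolve this by a short case split. If $\|a_i\|_2 > \epsilon_0$, dividing the numerator bound by $\|a_i\|_2/\sigma_{\max}(A)$ cancels one factor of $\|a_i\|_2$, and using $\|a_i\|_2 \leq \sigma_{\max}(A)$ on the remaining factor turns the bound into $\kappa(A)^4 \sigma_{\min}(A)^{-1}\epsilon_0$ up to constants. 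If instead $\|a_i\|_2 \leq \epsilon_0$, then $\|b_i\|_2 \leq 2\epsilon_0$ and the trivial bounds $\ell_A \leq \|a_i\|_2/\sigma_{\min}(A)$, $\ell_B \leq \|b_i\|_2/\sigma_{\min}(B)$ give $|\ell_A-\ell_B| \leq 5\epsilon_0 \sigma_{\min}(A)^{-1}$ directly, which is stronger than the target. The principal difficulty is constant-tracking: a loose application of Lemma~\ref{lem:perturb}, or premature use of $\|a_i\|_2 \leq \sigma_{\max}(A)$ before dividing by the denominator, easily inflates the exponent of $\kappa(A)$ above four, so one must carefully keep track of which $\|a_i\|_2$ factor to cancel against the denominator in order to land exactly at $\kappa(A)^4$.
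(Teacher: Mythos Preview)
Your approach is essentially the paper's: the same difference-of-squares reduction, the same use of Lemma~\ref{lem:perturb} for the inverse gap, and the same Weyl/row-norm bookkeeping. Your two-term numerator split via $a^\top M a - b^\top M b = (a+b)^\top M(a-b)$ is a cosmetic variant of the paper's three-term telescoping.

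The one substantive difference is your case split on $\|a_i\|_2$. The paper lower-bounds the denominator by asserting $\|a_i\|_2 + \|b_i\|_2 \geq 2\sigma_{\min}(A) - \epsilon_0$, yielding $\ell_A+\ell_B \geq \tfrac34 \kappa(A)^{-1}$ uniformly; but a row of a tall matrix can have norm far below $\sigma_{\min}(A)$ (e.g., a zero row), so that step is not justified as written. Your split --- handling $\|a_i\|_2 \leq \epsilon_0$ directly via $\ell_A \leq \|a_i\|_2/\sigma_{\min}(A)$ and $\ell_B \leq \|b_i\|_2/\sigma_{\min}(B)$, and otherwise cancelling one $\|a_i\|_2$ between numerator and denominator --- cleanly repairs this and in fact yields $O(\kappa(A)^3)$ rather than $\kappa(A)^4$ in the main case, comfortably inside the stated bound.
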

\begin{proof}
Without loss of generality assuming $\|(A^\top A)^{-1/2} a_i \|_2\geq \|(B^\top B)^{-1/2}b_i\|_2$ as the other case is symmetric. Note that by the difference of squares formula, we know that
\begin{align*}
    \|(A^\top A)^{-1/2} a_i \|_2-\|(B^\top B)^{-1/2}b_i\|_2 = & ~ \frac{\|(A^\top A)^{-1/2} a_i \|_2^2-\|(B^\top B)^{-1/2}b_i\|_2^2}{\|(A^\top A)^{-1/2} a_i \|_2+\|(B^\top B)^{-1/2}b_i\|_2},
\end{align*}
it suffices to provide an upper bound on the numerator and a lower bound on the denominator. 

{\bf Lower bound on denominator.} The lower bound is relatively straightforward:
\begin{align*}
    \|(A^\top A)^{-1/2} a_i \|_2+\|(B^\top B)^{-1/2}b_i\|_2 \geq & ~ \frac{1}{\sigma_{\max}(A)} \|a_i\|_2+\frac{1}{\sigma_{\max}(B)} \|b_i\|_2 \\
    \geq & ~ \frac{1}{\sigma_{\max}(A)}\|a_i\|_2+\frac{1}{\sigma_{\max}(B)+\epsilon_0} \|b_i\|_2 \\
    \geq & ~ \frac{\|a_i\|_2+\|b_i\|_2}{\sigma_{\max}(A)+\epsilon_0} \\
    \geq & ~ \frac{2\sigma_{\min}(A)-\epsilon_0}{2\sigma_{\max}(A)} \\
    \geq & ~ \frac{3}{4} \kappa^{-1}(A)
\end{align*}
where we use Weyl's inequality (Lemma~\ref{lem:weyl}) to uppper bound $\sigma_{\max}(B)$ by $\sigma_{\max}(A)+\epsilon_0$, $\epsilon_0\leq \frac{1}{2}\sigma_{\min}(A)$ and $\|a_i\|_2\leq \|A\|$.

{\bf Upper bound on the numerator.} The upper bound can be obtained via a chain of triangle inequalities.

We start by proving some auxiliary result:
First, we can show
\begin{align*}
\| A^\top A - B^\top B \| 
= & ~ \| A^\top A - A^\top B + A^\top B - B^\top B \| \\
\leq & ~ \| A^\top A - A^\top B \| + \| A^\top B - B^\top B \| \\
\leq & ~ \sigma_{\max}(A) \epsilon_0 + \sigma_{\max} (B) \epsilon_0 \\
\leq & ~ (2\sigma_{\max}(A) + \epsilon_0) \cdot \epsilon_0 \\
\leq & ~ 3\sigma_{\max}(A)\cdot \epsilon_0
\end{align*}

We can then prove the discrepancy between inverses:
\begin{align*}
& ~ \| (A^\top A)^{-1} - (B^\top B)^{-1} \| \notag \\
\leq & ~  2 \max \{ \| (A^\top A)^{-1} \|^2 , \| (B^\top B)^{-1} \|^2  \} \cdot \| (A^\top A) - (B^\top B) \| \notag\\ 
\leq & ~ 2 \cdot (2 / \sigma_{\min}(A)^2 )^2 \cdot \| (A^\top A) - (B^\top B) \| \notag \\
\leq & ~ 2 \cdot (2 / \sigma_{\min}(A)^2 )^2 \cdot ( 3 \sigma_{\max}(A) \cdot \epsilon_0 ) 
 \notag\\
= & ~ 24 \kappa(A) \sigma_{\min}^{-3}(A) \epsilon_0,
\end{align*}
where the first step is by Lemma~\ref{lem:perturb}.

Finally, note
\begin{align*}
    & ~ |a_i^\top (A^\top A)^{-1}a_i-b_i^\top(B^\top B)^{-1} b_i | \\
    \leq & ~ | a_i^\top (A^\top A)^{-1}a_i-a_i^\top(A^\top A)^{-1} b_i | 
    +  | a_i^\top (A^\top A)^{-1}b_i-a_i^\top(B^\top B)^{-1} b_i | 
    +  |a_i^\top (B^\top B)^{-1}b_i-b_i^\top(B^\top B)^{-1} b_i | \\
    \leq & ~ \|a_i\|_2 \cdot \|a_i-b_i\|_2\cdot \|(A^\top A)^{-1}\| 
    +  \|a_i\|_2\cdot\|b_i\|_2\cdot \|(A^\top A)^{-1}-(B^\top B)^{-1}\| 
    +  \|b_i\|_2\cdot \|a_i-b_i\|_2\cdot \|(B^\top B)^{-1} \| \\
    \leq & ~ \epsilon_0 \sigma^{-2}_{\min}(A) \|a_i\|_2
    + 24\epsilon_0\kappa(A)\sigma_{\min}^{-3}(A)(\|a_i\|_2^2+\epsilon_0 \|a_i\|_2) 
    + \epsilon_0 (\sigma_{\min}(A)-\epsilon_0)^{-2}\|b_i\|_2 \\
    \leq & ~ \epsilon_0 \kappa(A)\sigma_{\min}^{-1}(A) 
    +  48\epsilon_0 \kappa^3(A)\sigma^{-1}_{\min}(A) 
    +  \frac{5}{8}\epsilon_0\kappa(A)\sigma^{-1}_{\min}(A) \\
    \leq & ~ 50\epsilon_0\kappa^3(A)\sigma_{\min}^{-1}(A).
\end{align*}

{\bf Put things together.} The final upper bound can be obtained by taking the ratio between these two terms:
\begin{align*}
    \|(A^\top A)^{-1/2}a_i\|_2-\|(B^\top B)^{-1/2}b_i\|_2 \leq & ~ 75\epsilon_0\kappa^4(A)\sigma_{\min}^{-1}(A).
\end{align*}
This completes the proof.
\end{proof}

\subsection{From Approximate to Exact Distance and Incoherence}
\label{sub:purterbation:X_Y}

Given a matrix $X$ that is incoherent, we show that if a matrix $Y$ is close to $X$ enough in spectral norm, then it also preserves the distance and incoherence of $Y$.

\begin{lemma}
\label{lem:equiv_leverage}
Let $X\in \R^{m\times k}$ and $X=U\Sigma V^\top$ be its SVD. Then
\begin{align*}
    \|u_i\|_2^2 =  x_i^\top (X^\top X)^{-1} x_i.
\end{align*}

where $u_i, x_i$ denote the $i$-th row of $U$ and $X$ respectively.
\end{lemma}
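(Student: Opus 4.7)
The plan is to substitute the SVD directly into both sides and simplify, using orthogonality of the singular vector matrices. The statement is essentially the classical equivalence between the two standard formulas for statistical leverage scores, so the argument is a short algebraic identity rather than anything substantive.

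First I would write out $X^\top X$ in terms of the SVD. Since $X = U \Sigma V^\top$ (with $U \in \R^{m \times k}$, $\Sigma \in \R^{k \times k}$, $V \in \R^{k \times k}$ under the thin SVD, and $X$ implicitly full column rank so that $(X^\top X)^{-1}$ exists), we have
\begin{align*}
    X^\top X = V \Sigma U^\top U \Sigma V^\top = V \Sigma^2 V^\top,
\end{align*}
and therefore $(X^\top X)^{-1} = V \Sigma^{-2} V^\top$, using $U^\top U = I_k$ and $V^\top V = V V^\top = I_k$.

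Next I would express the $i$-th row of $X$ via the SVD. Writing $x_i \in \R^k$ as the (transpose of the) $i$-th row of $X$, and letting $u_i \in \R^k$ be the $i$-th row of $U$ written as a column vector, one has $x_i = V \Sigma u_i$. Plugging in to the right-hand side:
\begin{align*}
    x_i^\top (X^\top X)^{-1} x_i
    = (V \Sigma u_i)^\top (V \Sigma^{-2} V^\top)(V \Sigma u_i)
    = u_i^\top \Sigma (V^\top V) \Sigma^{-2} (V^\top V) \Sigma u_i
    = u_i^\top u_i
    = \|u_i\|_2^2,
\end{align*}
where the third equality uses $V^\top V = I_k$ twice and the fact that $\Sigma \Sigma^{-2} \Sigma = I_k$.

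There is no real obstacle here: the entire proof is a one-line SVD substitution. The only subtlety worth flagging is the implicit full-column-rank assumption needed for $(X^\top X)^{-1}$ to make sense; if $X$ were rank-deficient one would replace the inverse by the pseudoinverse and the same computation goes through with $\Sigma^{-2}$ replaced by $\Sigma^{\dagger 2}$, restricted to the nonzero singular directions.
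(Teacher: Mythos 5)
Your proof is correct and takes essentially the same approach as the paper: both are a direct SVD substitution using $X^\top X = V\Sigma^2 V^\top$ and $U^\top U = V^\top V = I_k$. The only cosmetic difference is that the paper computes the whole projection matrix $X(X^\top X)^{-1}X^\top = UU^\top$ and reads off its $i$-th diagonal entry, whereas you compute the quadratic form $x_i^\top(X^\top X)^{-1}x_i$ for a single row directly; the underlying algebra is identical.
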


\begin{proof}
Let us form the projection matrix $X(X^\top X)^{-1}X^\top$, note that the quantity on the RHS is the $i$-th diagonal of the projection matrix.
\begin{align*}
    X(X^\top X)^{-1}X^\top = & ~ U\Sigma V^\top (V\Sigma^2 V^\top)^{-1} V\Sigma U^\top \\
    = & ~ U\Sigma V^\top V\Sigma^{-2} V^\top V\Sigma U^\top \\
    = & ~ UU^\top,
\end{align*}
where the first step follows from the Lemma statement: $X=U\Sigma V^\top$, the second step follows from simple algebra, the third step follows from simple algebra.

The $i$-th diagonal of $UU^\top$ is then $u_i^\top u_i=\|u_i\|_2^2$, as desired.
\end{proof}

\begin{lemma}\label{lem:induction_control_eps_sketch}
Let $X, Y\in \R^{m \times k}$. Let $\mu_0 \geq 1$. Let $\epsilon_0 \in (0,1)$.

Suppose that
\begin{itemize}
    \item $X$ is $\mu_0$ incoherent, i.e., $\| (U_{X})_i \|_2 \leq \mu_0 \sqrt{k}/\sqrt{m}$.
    \item Let $ \| Y - X \| \leq \epsilon_0$. 
    \item Let $\epsilon_0 \leq 0.1 \cdot  \sigma_{\min}(X)$ 
\end{itemize}

For each $i\in [m]$, let $X_i$ denote the $i$-th row of matrix $X \in \R^{m \times k}$. Let $\kappa(X):=\sigma_{\max}(X) / \sigma_{\min}(X)$ be the condition number of $X$. Let $\epsilon_{\mathrm{sk}} := \Theta(\epsilon_0 \cdot  (m \cdot \kappa^4(X) \cdot \sigma_{\min}(X)^{-1} ))$.

Then, we have
\begin{itemize}
    \item Part 1. $\dist(Y, U_*) \leq \dist(X,U^*) +\epsilon_{\mathrm{sk}}$. 
    \item Part 2. $Y$ is $\mu_0 + \epsilon_{\mathrm{sk}}$ incoherent, i.e., $\| (U_Y)_i \|_2 \leq (\mu_0+ \epsilon_{\mathrm{sk}}  ) \sqrt{k} / \sqrt{m}$.
\end{itemize}
\end{lemma}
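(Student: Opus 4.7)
The plan is to handle the two parts essentially independently, reducing Part 1 to the spectral-to-distance bound of Lemma~\ref{lem:bound_distance_by_spectral} and Part 2 to the leverage score perturbation bound of Lemma~\ref{lem:purturb_leverage_score}. Throughout, I will repeatedly use Weyl's inequality (Lemma~\ref{lem:weyl}) to pass between singular values of $X$ and $Y$: since $\|Y-X\|\le \epsilon_0 \le 0.1\,\sigma_{\min}(X)$, we have $\sigma_{\min}(Y)\ge 0.9\,\sigma_{\min}(X)$ and $\sigma_{\max}(Y)\le \sigma_{\max}(X)+\epsilon_0 \le 1.1\,\sigma_{\max}(X)$, so that $\kappa(Y) = \Theta(\kappa(X))$.

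For Part 1, I first bound $\dist(Y,X)$ and then use a triangle inequality on the principal-angle distance to combine with $\dist(X,U_*)$. By the singular-value estimates above, the hypothesis of Lemma~\ref{lem:bound_distance_by_spectral} is satisfied, since
\begin{align*}
\|X-Y\|^2 \le \epsilon_0^2 \le \epsilon_0 \cdot 0.1\,\sigma_{\min}(X) \le \sigma_{\min}(X)\,\sigma_{\min}(Y).
\end{align*}
Hence $\dist(Y,X) \le 4\,\epsilon_0 \sigma_{\min}(X)^{-1/2}\sigma_{\min}(Y)^{-1/2} \le 5\epsilon_0 \sigma_{\min}(X)^{-1}$. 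The triangle inequality for the sine-of-principal-angle distance (standard for subspaces of equal dimension) then yields
\begin{align*}
\dist(Y,U_*) \le \dist(X,U_*) + \dist(Y,X) \le \dist(X,U_*) + 5\,\epsilon_0\,\sigma_{\min}(X)^{-1},
\end{align*}
and the right-hand slack term is well within $\epsilon_{\mathrm{sk}} = \Theta(\epsilon_0\, m\, \kappa^4(X)\, \sigma_{\min}(X)^{-1})$, giving Part 1.

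For Part 2, I translate incoherence into leverage scores via Lemma~\ref{lem:equiv_leverage}: for each row $i$, $\|(U_X)_i\|_2 = \|(X^\top X)^{-1/2} X_i\|_2$ and likewise for $Y$. Applying Lemma~\ref{lem:purturb_leverage_score} with $A=X$ and $B=Y$ (whose hypotheses hold by the singular-value estimates above), one obtains
\begin{align*}
\bigl|\|(U_X)_i\|_2 - \|(U_Y)_i\|_2\bigr| \le 75\,\epsilon_0\,\sigma_{\min}(X)^{-1}\,\kappa^4(X).
\end{align*}
Combining with the assumed incoherence $\|(U_X)_i\|_2 \le \mu_0 \sqrt{k}/\sqrt{m}$, and using $\sqrt{m/k}\le m$ (as $k\ge 1$) to absorb the row-norm normalization into the definition of $\epsilon_{\mathrm{sk}}$, we get $\|(U_Y)_i\|_2 \le (\mu_0 + \epsilon_{\mathrm{sk}})\sqrt{k}/\sqrt{m}$, as required.

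The only non-routine step is verifying the triangle inequality for $\dist$ used in Part 1; I expect to invoke it as a standard fact about the principal-angle metric on the Grassmannian (cf.\ the trigonometric identities referenced after Definition~\ref{def:angle_and_distance}). If a direct invocation is not desired, an alternative is to use Lemma~\ref{lem:trig_structural} to write $\dist(Y,U_*)=\|U_{Y,\bot}^\top U_*\|$, insert $U_X U_X^\top + U_{X,\bot}U_{X,\bot}^\top = I$, and bound each resulting term separately. Beyond this, everything is a direct composition of results already established in the paper, so I do not anticipate further obstacles.
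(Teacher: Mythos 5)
Your proposal is correct and follows essentially the same route as the paper: Part 1 uses Lemma~\ref{lem:bound_distance_by_spectral} together with the triangle inequality for the principal-angle distance, and Part 2 chains Lemma~\ref{lem:equiv_leverage} with Lemma~\ref{lem:purturb_leverage_score} and absorbs the $\sqrt{m/k}$ normalization factor into $\epsilon_{\mathrm{sk}}$. You are in fact slightly more careful than the paper in explicitly verifying the hypothesis $\|X-Y\|^2 \le \sigma_{\min}(X)\sigma_{\min}(Y)$ of Lemma~\ref{lem:bound_distance_by_spectral} and in tracking its constant factor of $4$, which the paper's chain of inequalities silently drops.
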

\begin{remark}
Eventually we choose $\epsilon_{\mathrm{sk}}  =  \epsilon / \Theta(T)$ where $\epsilon $ is the final accuracy parameter $(0,1/10)$ and $T$ is the number of iterations. We will use this lemma for $\Theta(T)$ times, and each time we incur an extra $\epsilon/ \Theta(T)$ error. Our ultimate goal is to shrink $\dist(Y,U_*)$ to $\Theta(\epsilon)$ after $T$ iterations. Since all iterations only incur extra $\Theta(\epsilon)$ error, the final $\dist(Y,U_*)$ is still bounded by $\Theta(\epsilon)$. When we use this Lemma, $\kappa(X)$ is always bounded by $O(\kappa(M))$ where $M$ is the ground truth matrix in matrix completion problem. Note that our running time blowup is only logarithmically in $1/\epsilon_0$, thus it's always fine to set $\epsilon$ to be $1/\poly(n,\kappa,T)$ smaller. Similar argument will hold for part 2 incoherent. Since $\mu_0 \geq 1$, over all the iterations, the incoherent parameter is within $2\mu_0$.
\end{remark}
\begin{proof}

We prove two parts separately.

{\bf Proof of Part 1.} From lemma assumptions, we have
\begin{align*}
\| Y - X \| \leq \epsilon_0 \leq 0.1 \sigma_{\min}(X)
\end{align*}
Then, we know that
\begin{align}\label{eq:sigma_min_Y_at_least_sigma_min_X}
\sigma_{\min}(Y) \geq 0.5 \sigma_{\min}(X).
\end{align}

We have
\begin{align*}
\dist(Y,U_*)
\leq & ~ \dist(X,U_*) + \dist(Y, X) \\
\leq & ~ \dist(X,U_*) + \| Y - X \| \cdot \sigma_{\min}(X)^{-0.5} \cdot \sigma_{\min}(Y)^{-0.5} \\
= & ~ \dist(X,U_*) + \epsilon_0 \cdot \sigma_{\min}(X)^{-0.5} \sigma_{\min}(Y)^{-0.5}  \\
\leq & ~  \dist(X,U_*) + 2 \epsilon_0 \cdot \sigma_{\min}(X)^{-1} \\
\leq & ~ \dist(X,U_*) + \epsilon_{\rm sk},
\end{align*}
where the first step follows from triangle inequality, the second step follows from Lemma~\ref{lem:bound_distance_by_spectral}, the third step follows from lemma statement $\| Y - X \| \leq \epsilon_0$, the fourth step follows from Eq.~\eqref{eq:sigma_min_Y_at_least_sigma_min_X}, the last step follows from definition $\epsilon_{\mathrm{sk}}$ in the lemma statement.

{\bf Proof of Part 2.} By Lemma~\ref{lem:purturb_leverage_score} and Lemma~\ref{lem:equiv_leverage}, we know that for any $i\in [m]$
\begin{align*}
    |\|(U_X)_i\|_2 - \|(U_Y)_i\|_2| \leq & ~ 500 \cdot \epsilon_0 \cdot \sigma_{\min}(X)^{-1} \cdot \kappa^4(X).
\end{align*}

Since $X \in \R^{m \times k}$ is $\mu_0$ incoherent, by triangle inequality, we have
\begin{align*}
    \|(U_Y)_i\|_2 \leq & ~ \mu_0\sqrt{k}/\sqrt{m}+500 \cdot \epsilon_0 \cdot \sigma_{\min}(X)^{-1} \cdot \kappa^4(X) \\
    \leq & ~ (\mu_0 + \epsilon_{\mathrm{sk}}) \cdot \sqrt{k} / \sqrt{m}
\end{align*}
where the last step follows from definition of $\epsilon_{\mathrm{sk}}$ in the lemma statement.

Thus, we complete the proof.
\end{proof}




\end{document}